\let\cite\citep
\newcommand\BibTeX{{\rmfamily B\kern-.05em \textsc{i\kern-.025em b}\kern-.08em
T\kern-.1667em\lower.7ex\hbox{E}\kern-.125emX}}
\newtheorem{theorem}{Theorem}
\newtheorem{assumption}{Assumption}
\newtheorem{proposition}{Proposition}
\newtheorem{definition}{Definition}
\newtheorem{lemma}{Lemma}
\newcolumntype{C}[1]{>{\centering\let\newline\\\arraybackslash\hspace{0pt}}m{#1}}
\newcolumntype{L}[1]{>{\raggedright\let\newline\\\arraybackslash\hspace{0pt}}m{#1}}
\newcommand{\rev}[1]{#1}
\newcommand{\revv}[1]{}
\newcommand{\ie}{i.e., }
\newcommand{\eg}{e.g., }
\newcommand{\EE}{\mathbb{E}}
\newcommand{\integers}{\mathbb{Z}}
\newcommand{\naturals}{\mathbb{N}}
\newcommand{\reals}{\mathbb{R}}
\newcommand{\tspace}{\mathscr{T}}  
\newcommand{\ts}{\tspace}
\newcommand{\xspace}[1]{\mathscr{X}_{#1}}
\newcommand{\yspace}[1]{\mathscr{Y}_{#1}}
\newcommand{\statespace}{\xspace{}} 
\renewcommand{\ss}{\xspace{}}
\newcommand{\ssos}{\mathscr{O}} 
\newcommand{\ssd}[2]{\xspace{#1}^{#2}}
\newcommand{\controlspace}{\mathscr{U}} 
\newcommand{\cs}{\controlspace} 
\newcommand{\xo}{z} 
\newcommand{\xoo}{z^{\prime}} 
\newcommand{\ue}{\bar{u}} 
\newcommand{\dnode}[2]{\xo_{#1}^{(#2)}}
\newcommand{\rspace}[1]{\mathscr{R}^{#1}}
\newcommand{\rspacec}[1]{\bar{\mathscr{R}}^{#1}} 
\DeclareMathOperator{\rank}{rank}
\newcommand{\tensor}[1]{\boldsymbol{\mathcal{#1}}}
\newcommand{\bvec}[1]{\bm{#1}} 
\newcommand{\mat}[1]{\mathbf{#1}}
\newcommand{\FT}{\hat{f}} 
\newcommand{\funfold}[1]{f^{#1}} 
\newcommand{\fcore}[1]{\mathcal{F}_{#1}} 
\newcommand{\core}[1]{{\cal #1}}
\newcommand{\ffiber}[2]{f_{#1}^{(#2)}} 
\newcommand{\mvf}[1]{\mathcal{#1}} 
\newcommand{\drift}{B}
\newcommand{\diffusion}{\mvf{D}}
\newcommand{\dimx}{d}
\newcommand{\dimw}{d_w}
\newcommand{\dx}{\dimx} 
\newcommand{\du}{d_{u}} 
\newcommand{\dw}{\dimw} 
\newcommand{\ct}{f}
\newcommand{\ctt}{\tilde{f}}
\newcommand{\mdp}{\mathscr{M}}
\newcommand{\costfunc}{c}
\newcommand{\costfuncc}{\bar{c}}
\newcommand{\stagecost}[1]{g^{#1}}
\newcommand{\termcost}[1]{\psi^{#1}}
\newcommand{\hamiltonian}{\check{H}}
\newcommand{\valbase}{v}
\newcommand{\polbase}{w}
\newcommand{\polrhs}[1]{H^{#1}}
\newcommand{\pollin}[1]{\Pi^{#1}_{\mu}} 
\newcommand{\polfp}[1]{T_{\mu}^{#1}} 
\newcommand{\polfpi}[2]{T_{\mu_{#2}}^{#1}}
\newcommand{\polfpopt}[1]{T^{#1}} 
\newcommand{\polfpw}[1]{T_{\costfunc,\mu}^{#1}} 
\newcommand{\val}[1]{\valbase^{#1}}
\newcommand{\vali}[2]{\valbase^{#1}_{#2}}
\newcommand{\polval}[1]{\polbase^{#1}}
\newcommand{\polvalt}[1]{\widetilde{\polbase}^#1}
\newcommand{\polvali}[2]{\polbase^{#1}_{#2}} 
\newcommand{\polvalai}[2]{\widehat{\polbase}^{#1}_{#2}}
\newcommand{\polvala}[1]{\widehat{\polbase}^{#1}}
\newcommand{\polerr}[1]{\Delta \polval{#1}}
\newcommand{\PP}{\mathbb{P}}
\newcommand{\pmat}{\mathbf{P}}
\newcommand{\ptrans}[1]{p^{#1}}
\newcommand{\normP}{q}
\newcommand{\ftcross}{\textnormal{\texttt{ft-rankadapt}} }
\newcommand{\alr}{\varepsilon} 
\newcommand{\crossdelta}{\delta_{\textrm{cross}}} 
\newcommand{\roundeps}{\epsilon_{\textrm{round}}} 
\newcommand{\ctof}[1]{I_{h_{#1}+1}^{h_{#1}}}
\title{High-Dimensional Stochastic Optimal Control using Continuous Tensor Decompositions}
\author{
Alex Gorodetsky \qquad\qquad\qquad
Sertac Karaman \qquad\qquad\qquad
Youssef Marzouk
\thanks{Alex Gorodetsky is with the Department of Aerospace Engineering at the University of Michigan. Sertac Karaman and Youssef Marzouk are with the Department of Aeronautics and Astronautics at the Massachusetts Institute of Technology.}
}
\begin{document}
\maketitle
\begin{abstract} 
Motion planning and control problems are embedded and essential in almost all robotics applications. These problems are often formulated as stochastic optimal control problems and solved using dynamic programming algorithms. Unfortunately, most existing algorithms that guarantee convergence to optimal solutions suffer from the \emph{curse of dimensionality}: the run time of the algorithm grows exponentially with the dimension of the state space of the system. We propose novel dynamic programming algorithms that alleviate the curse of dimensionality in problems that exhibit certain \emph{low-rank} structure. The proposed algorithms are based on continuous tensor decompositions recently developed by the authors. Essentially, the algorithms represent high-dimensional functions (e.g., the value function) in a compressed format, and directly perform dynamic programming computations (e.g., value iteration, policy iteration) in this format. Under certain technical assumptions, the new algorithms guarantee convergence towards optimal solutions with arbitrary precision. Furthermore, the run times of the new algorithms scale polynomially with the state dimension and polynomially with the ranks of the value function. This approach realizes substantial computational savings in ``compressible'' problem instances, where value functions admit low-rank approximations. We demonstrate the new algorithms in a wide range of problems, including a simulated six-dimensional agile quadcopter maneuvering example and a seven-dimensional aircraft perching example. In some of these examples, we estimate computational savings of up to ten orders of magnitude over standard value iteration algorithms. We further demonstrate the algorithms running in real time on board a quadcopter during a flight experiment under motion capture. 
\end{abstract}

\begin{IEEEkeywords}
stochastic optimal control, motion planning, dynamic programming, tensor decompositions
\end{IEEEkeywords}

\section{Introduction}\label{sec:intro}
The control synthesis problem is to find a feedback control law, or controller, that maps each state of a given dynamical system to its control inputs, often optimizing given performance or robustness criteria~\cite{LaValle2006wu,Bertsekas:2012uq}. Control synthesis problems are prevalent in several robotics applications, such as agile maneuvering~\cite{Mellinger:2012vu}, humanoid robot motion control~\cite{Fallon:2014cd,Feng:2015ix}, and robot manipulation~\cite{Sciavicco:2000tj}, just to name a few.

Analytical approaches to control synthesis problems make simplifying assumptions on the problem setup to derive explicit formulas that determine controller parameters. Common assumptions include dynamics described by linear ordinary differential equations and Gaussian noise. In most cases, these assumptions are so severe that analytical approaches find little direct use in robotics applications. 

On the other hand, computational methods for control synthesis can be formulated for a fairly large class of dynamical systems~\cite{Bertsekas:2011tq,Bertsekas:2012uq,Prajna:2004gq}. However, unfortunately, most control synthesis problems turn out to be prohibitively computationally challenging, particularly for systems with high-dimensional state spaces. 
In fact, Bellman~\cite{Bellman1961} coined the term {\em curse of dimensionality} in 1961 to describe the fact that the computational requirements grow exponentially with increasing dimensionality of the state space of the system. 

In this paper, we propose a novel class of computational methods for stochastic optimal control problems. The new algorithms are enabled by a novel representation of the controller that allows efficient computation of the controller. 
This new representation can be viewed as a type of ``compression'' of the controller. The compression is enabled by a continuous tensor decomposition method, called the {\em function train}, which was recently proposed by the authors~\cite{Gorodetsky2015a} as a continuous analogue of the well-known tensor-train decomposition~\cite{Oseledets2010,Oseledets2011}. Our algorithms result in control synthesis problems with run time that scales polynomially with the dimension and the rank of the optimal value function. These control synthesis algorithms run several orders of magnitude faster than standard dynamic programming algorithms, such as value iteration. The resulting controllers also require several orders of magnitude less storage.


\subsection{Related work}

Computational hurdles are present in most decision making problems in the robotics domain. 
A closely related problem is motion planning: the problem of finding a dynamically-feasible, collision-free trajectory from an initial configuration to a final configuration for a robot operating in a complex environment. 
Motion planning problems are embedded and essential in almost all robotics applications, 
and they have received significant attention since the early days of robotics research~\cite{Latombe:1991vv,LaValle2006wu}. 
However, it is well known that these problems are computationally challenging~\cite{Canny:1988ul}. For instance, a simple version of the motion planning problem is PSPACE-hard~\cite{Canny:1988ul}. 
In other words, it is unlikely that there exists a complete algorithm with running time that scales polynomially with increasing degrees of freedom, \ie the dimensionality of the configuration space of the robot. 
In fact, the run times of all known complete algorithms scale exponentially with dimensionality~\cite{LaValle2006wu,Canny:1988ul}. Most of these algorithms construct a discrete abstraction of the continuous configuration space, the size of which scales exponentially with dimensionality. 

Yet, there are several practical algorithms for motion planning, some of which even provide completeness properties. For instance, a class of algorithms called sampling-based algorithms~\cite{LaValle2006wu,Kavraki:1996uy,Hsu:1997uva,LaValle:2001ww} construct a discrete abstraction, often called a roadmap, by sampling the configuration space and connecting the samples with dynamically-feasible, collision-free trajectories. The result is a class of algorithms that find a feasible solution, when one exists, in a reasonable amount of time for many problem instances, particularly for those that have good ``visibility'' properties~\cite{hsu.latombe.ea.ijrr06,kavraki.kolountzakis.ea.tro98}. 
These algorithms provide probabilistic completeness guarantees, \ie they return a solution, when one exists, with probability approaching to one as the number of samples increases.

In the same way, most practical approaches to motion planning avoid the construction of a grid to prevent intractability. Instead, they construct a ``compact'' representation of the continuous configuration space. The resulting compact data structure not only provides substantial computational gains, but also it still accurately represents the configuration space in a large class of problem instances, \eg those with good visibility properties. These claims can be made precise in provable guarantees such as probabilistic completeness and the exponential of rate of decay of the probability of failure. 

It is worth noting at this point that optimal motion planning, \ie the problem of finding a dynamically-feasible, collision-free trajectory that minimizes some cost metric, has also been studied widely~\cite{Karaman:2011vc}. In particular, sampling-based algorithms have been extended to optimal motion planning problems recently~\cite{Karaman:2011vc}. Various trajectory optimization methods have also been developed and demonstrated~\cite{Ratliff:2009wu,Zucker:2013dq}. 
\rev{
Another relevant problem that attracted attention recently is feedback motion planning, in which the goal is to synthesize a feedback control by generating controllers that track motion or trajectories~\cite{Tedrake:2010jr,Mellinger2011,Richter13}. 
}

The algorithm proposed in this paper is a different, novel approach to stochastic optimal control problems, which provides significant computational savings with provable guarantees. 
\rev{It is different from the traditional trajectory based methods described above. In particular, we formulate our problem as an optimal stochastic control problem and seek a feedback control \textit{offline} that generates optimal behavior. We do not seek trajectories or attempt to follow them; rather we seek actions to be applied by the system in particular states. As such our approach attempts to force the system to ``discover'' behavior by leveraging its dynamics and the rewards or costs a user provides. Furthermore, we do not perform any linearization of the dynamics or around trajectories; we seek a feedback control, using offline computation, for the full nonlinear non-affine system by solving a dynamic programming problem.}

\rev{
Our framework is conceptually similar to the goals of certain approximate dynamic programming algorithms~\cite{Powell2007}. In particular, we look for reduced representations of value functions in an adaptive manner with a prescribed accuracy. The reduced representations are generated by exploiting low-rank \emph{multilinear} structure, and computation is performed in this reduced space. We do not restrict the complexity of the representation: if structure of low multilinear ranks does not exist, our algorithms will attain the exponential growth in complexity that is exhibited by the full problem. In these cases, limiting the rank will indeed result in certain numerical approximations. However, there are many reasons to believe that low-rank multilinear structure is present in many problem formulations, and we discuss these reasons throughout the paper. 

Aside from structured representation of value functions, we do not approximate other aspects of the dynamic programming problem. For example, we do not revert to suboptimal optimization strategies such as myopic optimization, approximate evaluations of the expectation through sampling, rollout, fixed-horizon lookahead, etc. \revv{We further discuss the relationship between our approach and approximate dynamic programming in Section~\ref{sec:relate_adp}.}

In spirit, our approach is similar to other work that attempts to accurately represent multivariate value functions in a structured format and to perform computation entirely in that format. One example, called SPUDD~\cite{hoey1999}, represents value functions as algebraic \revv{decision} diagrams (ADDs). That work derives the computations needed for value iteration in the class of functions represented as ADDs. Dynamic programming updates are then performed for every state in the state space, but the structured representation of the function reduces the complexity of these updates. Our approach differs in several ways from SPUDD: our representation exploits low-rank multilinear structure; our dynamic programming algorithms use this structure to avoid performing updates for every state in a discretized state space; and we consider continuous states and controls.

}

\subsection{Tensor decomposition methods}

In this paper, we propose a new class of algorithms for high-dimensional instances of stochastic optimal control problems that are based on compressing associated value functions.
Moreover, we compress a \textit{functional}, rather than a discretized, representation of the value function. This approach enables fast evaluation of the value function at arbitrary points in the state space, without any decompression. Our approach offers orders of magnitude reduction in the required storage costs.

Specifically, the new algorithms are based on the functional tensor-train (FT) decomposition~\cite{Gorodetsky2015a} recently proposed by the authors. Multivariate functions in the FT format are represented by a set of matrix-valued functions \revv{whose sizes} correspond to the FT rank. Hence, low-rank multivariate functions can be represented in the FT format with a few parameters, and in this paper we use this compression to represent the value function of an associated stochastic optimal control problem.
 
In addition to \textit{representing} the value function, we \textit{compute} with value functions directly in compressed form; no decompression is ever performed.
Specifically, we create compressed versions of value iteration, policy iteration, and other algorithms for solving dynamic programming problems. These DP problems are obtained through consistent discretizations of continuous-time continuous-space stochastic optimal control problems obtained using the Markov chain approximation (MCA) method~\cite{Kushner2001}. Note that even though the MCA method relies on discretization, the FT still allows us to maintain a \textit{functional} representation, valid for any state within the state space.

As a result, the new algorithms provide substantial computational gains in terms of both computation time and storage space, when compared to standard dynamic programming methods. 

The proposed algorithms exploit the {\em low-rank} structure commonly found in {\em separable} functions. This type of structure has been widely exploited within numerical analysis literature on tensor decompositions~\cite{Kolda2009,Hackbusch2009,Hackbusch2012}.
Indeed the FT decomposition itself is an extension of tensor train (TT) decomposition developed by Oseledets~\cite{Oseledets2010,Oseledets2011}. 
 The TT decomposition works with discrete {\em arrays}; it represents a $d$-dimensional array as a multiplication of $d$ matrices. The FT decomposition, on the other hand, works directly with {\em functions}. This representation allows a wider variety of possible operations to be performed in FT format, \eg integration, differentiation, addition, and multiplication of low-rank functions. These operations are problematic in the purely discrete framework of tensor decompositions since element-wise operation with tensors is undefined, \eg one cannot add arrays with different number of elements. 

 We use the term {\em  compressed continuous computation} for such FT-based numerical methods~\cite{Gorodetsky2015a}. Compressed continuous computation algorithms can be considered an extension of the {\em continuous computation} framework to high-dimensional function spaces via the FT-based compressed representation. 
The continuous computation framework, roughly referring to computing directly with functions (as opposed to discrete arrays), was first realized by Chebfun, a Matlab software package developed by Trefethen, Battles, Townsend, Platte, and others~\cite{Platte2010}. In this software package, the user computes with univariate~\cite{Battles2004,Platte2010}, bivariate~\cite{Townsend2013}, and trivariate~\cite{Chebfun3} functions that are represented in Chebyshev polynomial bases. Our recent work~\cite{Gorodetsky2015} extended this framework to the general multivariate case by building a bridge between continuous computation and low-rank tensor decompositions. Our prior work has resulted in the software package, called Compressed Continuous Computation ($C^3$)~\cite{c3}, implemented in the C programming language. Examples presented in this paper use the $C^3$ software package, and they are available online on GitHub~\cite{c3sc}. 

In short, our compressed continuous computation framework leverages both the advantages of continuous computation and low-rank tensor decompositions.
The advantage of {\em compression} is tractability when working directly with high-dimensional computational structures. For instance, a seven-dimensional array with one hundred points in each dimension includes trillion points in total. As a result, even the storage space required cannot be satisfied by any existing computer, let alone the computation times. The computational requirements increase rapidly with increasing dimensionality. 

The advantages of the functional, or {\em continuous}, representation (over the array-based TT) include the ability to compare value functions resulting from different discretization levels and the ability to evaluate these functions outside of some discrete set of nodes. We leverage these advantage in this paper in two ways: {\em (i)} we develop multi-level schemes based on low-rank prolongation and interpolation operators; {\em (ii)} we evaluate optimal policies for any state in the state space during the execution of the controller. 

\subsection{Contributions}

The main contributions of this paper are as follows. First, we propose novel compressed continuous computation algorithms for dynamic programming. Specifically, we utilize the function train decomposition algorithms to design FT-based value iteration, policy iteration, and one-way multigrid algorithms. These algorithms work with a Markov chain approximation for a given continuous-time continuous-space stochastic optimal control problem. They utilize the FT-based representation of the value function to map the discretization due to Markov chain approximation into a compressed, functional representation.

Second, we prove that, under certain conditions, the new algorithms guarantee convergence to optimal solutions, whenever the standard dynamic programming algorithms also guarantee convergence for the same problem instance. We also prove upper bounds on computational requirements. In particular, we show that the run time of the new algorithms scale polynomially with dimension and polynomially with the rank of the value function, while even the storage requirements for existing dynamic programming algorithms clearly scale exponentially with dimension. 

Third, we demonstrate the new algorithms in challenging problem instances. 
In particular, we consider perching problem that features non-linear non-holonomic non-control-affine dynamics. We estimate that the computational savings reach roughly ten orders of magnitude. In particular, the controller that we find fits in roughly 1MB of space in the compressed FT format; we estimate that a full look up table, for instance, one computed using standard dynamic programming algorithms, would have required around 20 TB of memory. 

We also consider the problem of maneuvering a quadcopter through a small window. This leads to a six-dimensional non-linear non-holonomic non-control-affine stochastic optimal control problem. We compute a near-optimal solution. We demonstrate the resulting controller in both simulation and experiment. In experiment, we utilize a motion capture system for full state information, and run the resulting controller in real time on board the vehicle. 

\rev{
A preliminary version of this paper appeared at the Robotics Science and Systems conference~\cite{Gorodetsky2015}. In the present version, we use continuous, rather than discrete, tensor decompositions and add significantly more algorithmic development, theory, and validation. First, the theoretical grounding behind the methodology is more thorough: the assumptions are more explicit, and the bounds are more relevant and intuitive than those provided in our prior work. Second, the approach is validated on a wider range of problems, including minimum time problems and onboard an experimental system. Third, the methodology is extended to a broader range of algorithms including policy iteration and multigrid techniques, as opposed to only value iteration. 
}

\subsection{Organization}

The paper is organized as follows. We introduce the stochastic optimal control problem in Section~\ref{sec:soc} and the Markov chain approximation method in Section~\ref{sec:discrete}. We briefly describe the compressed continuous computation framework in Section~\ref{sec:discrete}. We describe the proposed algorithms in Section~\ref{sec:lowrankdp}. We analyze their convergence properties and their computational costs in Section~\ref{sec:analysis}. We discuss a wide range of numerical examples and experiments in Section~\ref{sec:numexamples}. Section~\ref{sec:conclusion} offers some concluding remarks.

\section{Stochastic optimal control}\label{sec:soc}

In this section, we formulate a class of continuous-time continuous-space stochastic optimal control problems. Background is provided in Section~\ref{section:stochastic_optimal_control}. 
Under some mild technical assumptions, the optimal control is a Markov policy, \ie a mapping from the state space to the control space, that satisfies the Hamilton-Jacobi-Bellman (HJB) equation. We introduce the notion of Markov policies and the HJB equation in Sections~\ref{section:markov_policies} and \ref{section:hjb}, respectively. 

\subsection{Stochastic optimal control}\label{section:stochastic_optimal_control}

Denote the set of integers and the set of reals by $\integers$ and $\reals$, respectively. We denote the set of all positive real numbers by $\reals_{+}$. Similarly, the set of positive integers is denoted by $\integers_{+}$.
Let $\dx, \du, \dw \in \integers_{+}$,
$\ss \subset \reals^{\dx}$ and $\cs \subset \reals^{d_u}$ be compact sets with smooth boundaries and non-empty interiors, $\ts \subset \reals_{+}$,
and $\{w(t) : t \ge 0\}$ be a $\dw$-dimensional Brownian motion defined on some probability space $(\Omega, {\cal F}, \PP)$, where $\Omega$ is a sample space, ${\cal F}$ is a $\sigma$-algebra, and $\PP$ is a probability measure.

Consider a dynamical system described by the following stochastic differential equation in the differential form: 
\rev{
 \begin{align} \label{eqn:system}
dx(t) = \drift(x(t), u(t)) dt + \diffusion(x(t)) dw(t),
\end{align}
for all $t \in \ts$, where $\drift: \ss \times \cs \to \reals^{\dx}$ is a vector-valued function, called the {\em drift}, and $\diffusion : \statespace \to \reals^{\dx \times \dw}$ is a matrix-valued function, called the {\em diffusion}. 
}
Strictly speaking, for any admissible control process\footnote{Suppose the control process $\{u(t) : t \ge 0\}$ is defined on the same probability space $(\Omega, {\cal F}, \PP)$ which the Wiener process $\{w(t) : t \ge 0\}$ is also defined on. Then, $\{u(t) : t \ge 0\}$ is said to be \textit{admissible} with respect to $\{w(t) : t\ge0\}$, if there exists a filtration $\{{\cal F}_t : t \ge 0\}$ defined on $(\Omega, {\cal F}, \PP)$ such that $u(t)$ is ${\cal F}_t$-adapted and $w(t)$ is an ${\cal F}_t$-Wiener process. Kushner et al.~\cite{Kushner2001} provide the precise measure theoretic definitions.} $\{ u(t) : t \ge 0\}$, the solution to this differential form is a stochastic process $\{x(t) : t \ge 0\}$ satisfying the following integral equation: For all  $t \in \ts$, 
\rev{
\begin{align}
x(t) = x(0) + \int_0^t & \drift(x(\tau), u(\tau)) \, d\tau \nonumber \\
              \quad \quad \quad \quad \quad \quad \quad & + \int_{0}^t \diffusion(x(\tau), u(\tau)) \, dw(\tau), \label{eq:integral}
\end{align}
}
where the last term on the right hand side is the usual It\^o integral~\cite{oksendal:2003ug}. We assume that the drift and diffusion are measurable, continuous, and bounded functions. These conditions guarantee existence and uniqueness of the solution to Equation~\eqref{eq:integral}~\cite{oksendal:2003ug}. \rev{Finally, we consider only time-invariant dynamical systems, however, our algorithms can be extended to systems with time varying dynamics through state augmentation~\cite{Bertsekas:2012uq}.

\rev{
In this paper, we focus on a discounted-cost infinite-horizon problem, although our methodology and framework can be extended finite-horizon problems as well. Our description of the problem and the corresponding notation closely follows that of Fleming and Soner~\cite{Fleming2006}. %

 Let $\ssos \subset \ss$ denote an open subset. If $\ssos \neq \reals^d$, then let its boundary $\partial \ssos$  be a compact $(d-1)$-dimensional manifold of class $C^3$, \ie the set of 3-times differentiable functions.
Let $\stagecost{},\termcost{}$ denote continuous stage and terminal cost functions, respectively, that satisfy polynomial growth conditions:
\begin{align*} 
|\stagecost{}(x,u)| &\leq C(1 + |x|^k + |u|^k), \\
|\termcost{}(x)| &\leq C(1 + |x|^k),
\end{align*}
for some constants $C\in \reals,k \in \naturals$. 

Define the {\em exit time} $\tau$ as either the first time that the state $x(s)$ exits from $\ssos$, or we set $\tau = \infty$ if the state remains forever within $\ssos$, \ie $x(s) \in \ssos$ for all $s \geq 0$. Within this formulation, we can still use a terminal cost $\psi$ for the cases when $\tau < \infty$. To accommodate finite exit times, we use the indicator function $\chi_{\tau < \infty}$ that evaluates to one if the state exits $\ssos$ and to zero otherwise. The cost functional is defined as: 
\begin{align*}
\costfuncc(\xo;u) &= \mathbb{E}\Big[ \int_{0}^{\tau} e^{-\beta s}\stagecost{}(s,x(s),u(s))ds \\
                 &           \quad \quad \quad \quad  + \chi_{\tau < \infty} e^{-\beta \tau}\termcost{}(\tau,x(\tau)) \Big], \quad x(0) = \xo,
\end{align*}
where $\beta>0$ is a discount factor. 

The {\em discounted-cost infinite-horizon stochastic optimal control problem} is to find a control $u(t)$
such that $\costfuncc(z; u)$ is minimized for all $\xo \in \ssos$, subject to Equation~\eqref{eqn:system}. 
We require that the cost until exit is bounded 
\begin{equation*}
\mathbb{E}\left[ \int_{0}^{\tau} \exp^{-\beta s} |\stagecost{}(s,x(s),u(s))|ds \right] \leq \infty,
\end{equation*}
for this problem to be well defined~\cite{Fleming2006}.

}

}

\subsection{Markovian policies}\label{section:markov_policies}
A \textit{Markov policy} is a mapping $\mu : \ss \to \cs$ that assigns a control input to each state. Under a Markov policy $\mu$, an admissible control is obtained according to $u(t) = \mu(x(t)).$
\rev{
For the discounted-cost infinite-horizon problem, the cost functional associated with a specific Markov policy $\mu$ is denoted by}
\begin{align*}
\costfuncc_{\mu}(\xo) &= \mathbb{E}\Big[ \int_{0}^{\tau} e^{-\beta s}\stagecost{}(s,x(s),\mu(s,x(s))ds \\
                    & \quad \quad \quad \quad + \chi_{\tau < \infty} e^{-\beta \tau}\termcost{}(x(\tau)) \Big], \quad x(0) = \xo,
\end{align*}

Under certain conditions, one can show that a Markov policy is at least as good as any other arbitrary $\mathcal{F}_t$-adapted policy; see for example Theorem 11.2.3 by {\O}ksendal~\cite{oksendal:2003ug}. In this work we assume these conditions hold, and only work with Markov control policies. Storing Markov control policies allows us to avoid storing trajectories of the system when considering what action to apply. Instead, Markov policies only require knowledge of the current time and state and are computationally efficient to use in practice.

\rev{
The stochastic control problem is to find an \textit{optimal} cost $\costfuncc_{\mu^*}$ with the following property
\begin{equation*}
\costfuncc_{\mu^*}(\xo) = \inf_{\mu} \costfuncc_{\mu}(\xo), \qquad \mbox{for all } \xo,
\end{equation*}
subject to Equation~\eqref{eqn:system}. 
}

\subsection{Dynamic programming}\label{section:hjb}

We can formulate the stochastic optimal control problem as a dynamic programming problem.
In the dynamic programming formulation, we seek an optimal value function $\val{}(t,\xo)$ defined as
\begin{equation*} \label{eq:dpprob}
\val{}(t,\xo) = \inf_{\mu} \costfunc_{\mu}(t,\xo) \textrm{ for all } \xo \in \ssos.
\end{equation*}

For continuous-time continuous-space stochastic optimal control problems, the optimal value function satisfies a partial differential equation (PDE), called the Hamilton-Jacobi-Bellman (HJB) PDE~\cite{Fleming2006}. \rev{The HJB PDE is a continuous analogue of the Bellman equation~\cite{Bellman1962}, which we will be solving in compressed format. In Section~\ref{sec:discrete} we will describe how a Bellman equation arises from a \textit{discretization} of the SDE. As the discretization is refined, however, this approach converges to the solution of the HJB PDE.} 

To define the HJB PDE, we first introduce some notation. Let $\mathcal{S}_{+}^{\dx}$ denote the set of symmetric, nonnegative definite matrices. Let $\mat{A} \in \mathcal{S}_{+}^{\dx}$ and $\mvf{A} = \diffusion \diffusion^{T}$, then the trace $\text{tr } \mvf{A}\mat{A}$ is defined as
\begin{equation*}
\text{tr }\mvf{A} \mat{A} = \sum_{i,j}^{\dx} \mvf{A}[i,j]\mat{A}[i,j].
\end{equation*}

\rev{
For $\xo \in \ssos$, $p \in \statespace $, $\mat{A} \in \mathcal{S}_{+}^d$, define the \textit{Hamiltonian}  as
\begin{align*}
\hamiltonian(\xo,p,\mat{A}) &= \sup_{\ue \in \controlspace}\Big[ - \drift(\xo,\ue) \cdot p - \frac{1}{2}\text{tr } \mvf{A}(\xo,\ue) \mat{A} \\
                             & \quad \quad \quad \quad \quad \quad \quad \quad - \stagecost{}(\xo,\ue)\Big].
\end{align*}
}

\rev{
For discounted-cost infinite-horizon problems, the HJB PDE is then defined as  
\begin{equation*}\label{eq:hjb2}
\beta \val{} + \hamiltonian(\xo,\nabla \val{}, D_x^2 \val{}) = 0, \quad \xo \in \ssos,
\end{equation*}
with boundary conditions
\begin{equation*}
\val{}(\xo) = \termcost{}(\xo), \quad \xo \in \partial \ssos.
\end{equation*}
}
%

\section{The Markov chain approximation method}\label{sec:discrete}
In this section, we provide background for a solution method based on discretization that forms the basis of our computational framework.
The Markov chain approximation (MCA)~\cite{Kushner2001} and similar methods, \eg the method prescribed by Tsitsiklis~\cite{Tsitsiklis1995}, for solving the stochastic optimal control problem rely on first discretizing the state space and dynamics described by Equation~\eqref{eqn:system} and then solving the resulting discrete-time and discrete-space Markov Decision Process (MDP). The discrete MDP can be solved using standard techniques such as Value Iteration (VI) or Policy Iteration (PI) or other approximate dynamic programming techniques~\cite{Bertsekas1996,Bertsekas2007,Kushner2001,Powell2007,Bertsekas2013}. 

In Section~\ref{sec:mdp}, we provide a brief overview of discrete MDPs. In Section~\ref{sec:mca} we describe the Markov chain approximation method for discretizing continuous stochastic optimal control problems. Finally, in Section~\ref{sec:vipiml}, we describe three standard algorithms to solve discrete MDPs, namely value iteration, policy iteration, as well as multilevel methods.

\subsection{Discrete-time discrete-space Markov decision processes}\label{sec:mdp}

The Markov chain approximation method relies on discretizing the state space of the underlying stochastic dynamical system, for instance, using a grid. The discretization is parametrized by the discretization step, denoted by $h \in \reals_+$, of the grid. The discretization is finer for smaller values of $h$.

The MDP resulting from the Markov chain approximation method with a discretization step $h$ is a tuple, denoted by $\mdp^{h} = (\statespace^h,\controlspace, \ptrans{h},\stagecost{h},\termcost{h})$, where $\statespace^h$ is the set of discrete states, $\ptrans{h}(\cdot,\cdot \vert \cdot): \statespace^h \times \statespace^h \times \controlspace \to [0,1]$ is a function that denotes the transition probabilities satisfying $\sum_{\xoo \in \statespace^h}\ptrans{h}(\xo,\xoo|\ue) = 1$  for all $\xo \in \statespace^h$ and all $\ue \in \controlspace$, $\stagecost{h}$ is the stage cost of the discrete system, and $\termcost{h}$ is the terminal cost of the discrete system.

The transition probabilities replace the drift and diffusion terms of the stochastic dynamical system as the description for the evolution of the state. For example, when the process is at state $\xo \in \statespace^h$ and action $\ue \in \controlspace$ is applied, the next state of the process becomes $\xoo \in \statespace^h$ with probability $\ptrans{h}(\xo,\xoo \vert \ue)$. 

In this discrete setting, Markov policies are now mappings 
$\mu^h: \statespace^h \to \controlspace$ defined from a discrete state space rather than from the continuous space $\statespace$. Furthermore, the cost functional becomes a multidimensional array 
\rev{$\costfunc_{\mu^h} : \statespace^h \to \reals$. }
The cost associated with a particular trajectory and policy for a discrete time system, \ie $t_0 = 0,t_1 = 1,t_2=2,\ldots,$  can be written as
\rev{
\begin{align*}\label{eq:discretecost}
\costfunc_{\mu^h}\left(\xo\right) &= \mathbb{E}\Big[\sum_{i=1}^N\gamma^{i}\stagecost{h}\left(x\left(t_k\right),\mu^h\left(x\left(t_k\right)\right)\right) \\
                                     & \quad \quad \quad \quad \quad + \termcost{h}\big(x\left(t_N\right)\big)\Big], \quad x(t_0) = \xo
\end{align*}
}
for $\xo \in \statespace^h$, where $0 < \gamma < 1$ is the discount factor, and $N$ is the first time the system exists $\ssos$. The \textit{Bellman equation}, a discrete analogue of the HJB equation, describing the optimality of this discretized problem can then be written as
\rev{
\begin{equation}\label{eq:dcih}
\val{h}(\xo) = \displaystyle{\min_{\ue \in \controlspace}}\Big[\stagecost{h}(\xo,\ue)
  + \gamma \displaystyle{\sum_{\xoo \in \statespace^h}}\ptrans{h}(\xo,\xoo|\ue)\val{h}(\xoo)\Big],
\end{equation}
} where $\val{h}$ is the optimal discretized value function and satisfies the Bellman equation
\rev{
\begin{equation*}
\val{h}(\xo) = \inf_{\mu^h} \costfunc_{\mu^h}(\xo).
\end{equation*}}Therefore, it also solves the discrete MDP~\cite{Bertsekas2013}.
\subsection{Markov chain approximation method}\label{sec:mca}
The MCA method, developed by Kushner and co-workers~\cite{Kushner1977,Kushner1990,Kushner2001}, constructs a sequence of discrete MDPs such that the solution of the MDPs converge to the solution of the original continuous-time continuous-space problem. 

Let $\{\mdp^{h_\ell} :  \ell \in \naturals\}$ be a sequence of MDPs, where each $\mdp^{h_{\ell}} = (\statespace^{h_{\ell}},\controlspace, \ptrans{h_{\ell}},\stagecost{h_{\ell}},\termcost{h_{\ell}})$ is defined as before.
Define $\partial \statespace^{h_{\ell}}$ as the subset of $\statespace^{h_{\ell}}$ that falls on the boundary of $\statespace$, \ie $\partial \statespace^{h_{\ell}} = \partial \statespace \cap \statespace^{h_{\ell}}$. 
Let $\{\Delta t^{\ell} : \ell \in \naturals\}$, where $\Delta t^{\ell}: \statespace^{h_{\ell}} \to \reals_{+}$, be a sequence of {\em holding times}~\cite{Kushner2001}.
Let $\{\xi_i^{\ell} : i \in \naturals\}$, where $\xi_i^{\ell} \in \statespace^{h_{\ell}}$, be a (random) sequence of states that describe the trajectory of $\mdp^{h_{\ell}}$. We use holding times as interpolation intervals to generate a continuous-time trajectory from this discrete trajectory as follows. With a slight abuse of notation, let $\xi^{\ell} : \reals_{\ge 0} \to \statespace^{h_{\ell}}$ denote the continuous-time function defined as follows: $\xi^{\ell}(\tau) = \xi_i^{\ell}$ for all $\tau \in [t_i^{\ell}, t_{i+1}^{\ell})$, where $t_i^{\ell} = \sum_{k=0}^{i-1} \Delta t^{\ell}(\xi_k)$. 
Let $\{u^{\ell}_i : i \in \naturals\}$, where $u^{\ell}_i \in \cs$, be a sequence of control inputs defined for all ${\ell} \in \naturals$. Then, we define the continuous time interpolation of $\{u^{\ell}_i : i \in \naturals\}$ as $u^{\ell}(\tau) = u^{\ell}_i$ for all $\tau \in [t_i^{\ell}, t_{i+1}^{\ell})$. An illustration of this interpolation is provided in Figure~\ref{fig:visualization}.
\begin{figure}
\begin{center}
\includegraphics[width=0.4\textwidth]{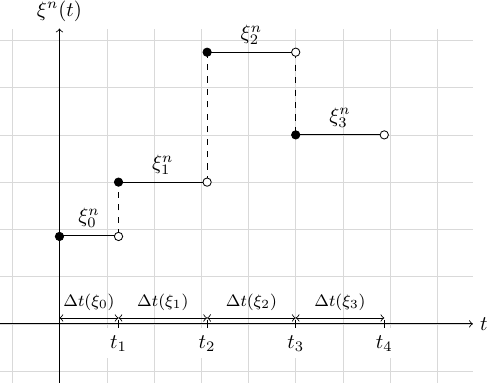}
\caption{An illustration of a continuous-time interpolation of a discrete process arising from the Markov chain approximation.}
\label{fig:visualization}
\end{center}
\end{figure}

The following result by Kushner and co-workers characterizes the conditions under which the trajectories and value functions of the discrete MDPs converge to those of the original continuous-time continuous-space stochastic system. 

\begin{theorem}[See Theorem 10.4.1 by Kushner and Dupuis~\cite{Kushner2001}] \label{theorem:kushner}
\rev{Let $\drift$ and $\diffusion$ denote the drift and diffusion terms of the stochastic differential equation~\eqref{eqn:system}.} Suppose a sequence $\{ \mdp^{h_{\ell}} : \ell \in \naturals\}$ of MDPs and the sequence $\{\Delta t^{\ell} : {\ell} \in \naturals\}$ holding times satisfy the following conditions: For any sequence of inputs $\{u_i^{\ell} : i \in \naturals\}$ and the resulting sequence of trajectories $\{\xi_i^{\ell} : i \in \naturals\}$ if 
$$
\lim_{\ell \to \infty } \Delta t^{\ell} (\xo) = 0, \textrm{ for all } \xo \in \statespace, 
$$
and
\begin{align*}
\lim_{\ell \to \infty} \frac{\EE[\xi_{i+1}^{\ell} - \xi_i^{\ell} \,\vert\, \xi_i^{\ell} = z , u_i^{\ell} = \ue ]}{\Delta t^{\ell}(z)} = \drift(z,\ue), \\
\lim_{\ell \to \infty}\frac{\mathrm{Cov}[\xi_{i+1}^{\ell} - \xi_i^{\ell} \,\vert\, \xi_i^{\ell} = z, u_i^{\ell} = \ue]}{\Delta t^{\ell}(z)} = \diffusion(z,\ue), 
%
\end{align*}
for all $z \in \statespace$ and $\ue \in \cs$.
Then, the sequence $\{(\xi^{\ell}, u^{\ell}) : \ell \in \naturals\}$ of interpolations converges in distribution to $(x, u)$ that solves the integral equation with differential form given by~\eqref{eqn:system}.
Let $\val{h_{\ell}}$ denote the optimal value function for the MDP $\mdp^{h_{\ell}}$. Then, for all $z \in \statespace^{h_{\ell}}$, 
$$
\lim_{\ell \to \infty} \vert \val{h_{\ell}}(z) - \val{}(z) \vert  = 0.
$$
\end{theorem}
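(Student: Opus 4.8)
The statement is the classical convergence theorem for the Markov chain approximation method, so the plan is to follow the weak-convergence-of-processes approach pioneered by Kushner. The overall strategy splits into two parts mirroring the two conclusions: first establish that the interpolated processes $(\xi^{\ell}, u^{\ell})$ converge in distribution to a solution of~\eqref{eqn:system}, and then leverage this weak convergence to pass to the limit in the cost functionals and obtain convergence of the optimal value functions $\val{h_\ell} \to \val{}$.

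For the first part, I would work on the Skorokhod space $D([0,\infty); \statespace \times \cs)$ of cadlag paths and argue in three steps. First, I would establish \emph{tightness} of the laws of $\{(\xi^{\ell}, u^{\ell})\}$: the boundedness of the drift and diffusion (assumed when we imposed existence and uniqueness for~\eqref{eq:integral}), together with the local-consistency hypotheses on the one-step conditional mean and covariance normalized by the holding time $\Delta t^{\ell}$, yields uniform control on the increments and hence tightness via the standard Aldous/Kurtz criterion. Second, invoking Prokhorov's theorem, every subsequence has a further weakly convergent subsequence, and I would identify any such limit $(x,u)$ by showing it solves the \emph{martingale problem} for the controlled generator $\mathcal{L}^{u}\phi = \drift(x,u)\cdot\nabla\phi + \tfrac{1}{2}\,\text{tr}(\diffusion\diffusion^{T} D_x^2\phi)$. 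Concretely, for smooth test functions $\phi$ one writes the telescoping sum of increments $\phi(\xi_{i+1}^{\ell}) - \phi(\xi_i^{\ell})$, applies a Taylor expansion, and uses the two limiting identities in the hypothesis to show that $\phi(x(t)) - \phi(x(0)) - \int_0^t \mathcal{L}^{u}\phi\,ds$ is a martingale in the limit. Third, since the continuity and boundedness of $\drift,\diffusion$ guarantee uniqueness in law for~\eqref{eq:integral}, the martingale problem is well-posed, so the limit is unique and the whole sequence—not merely a subsequence—converges in distribution.

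For the second part, I would transfer this weak convergence to the value functions. Fixing a state $z$, one must show both $\limsup_{\ell} \val{h_\ell}(z) \le \val{}(z)$ and $\liminf_{\ell} \val{h_\ell}(z) \ge \val{}(z)$. For the upper bound I would take a near-optimal admissible control for the continuous problem, construct a compatible sequence of controls for the MDPs, and use continuity of the cost map together with convergence in distribution to show the discrete costs are asymptotically no larger. For the lower bound I would take near-optimal policies $\mu^{h_\ell}$ for the MDPs, extract by tightness a weak limit of the resulting state-control pairs, and argue that this limit is admissible for the continuous problem with cost bounded by $\liminf_{\ell} \val{h_\ell}(z)$, so it cannot beat $\val{}(z)$. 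In both directions, passing the expectation through the limit requires \emph{uniform integrability}, which I would obtain from the polynomial growth bounds on $\stagecost{}$ and $\termcost{}$ combined with uniform moment estimates on the interpolated processes coming from the bounded coefficients.

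The main obstacle is the \emph{exit time} $\tau$ and the boundary data. The cost functional integrates only up to exit from $\ssos$ and then evaluates $\termcost{}$ on $\partial\ssos$, but the map sending a path to its exit time is \emph{not} continuous in the Skorokhod topology at paths that merely graze $\partial\ssos$ without crossing. To push the limit through, I would need the $C^3$ regularity of $\partial\ssos$ together with a nondegeneracy condition—for instance the uniform parabolicity~\eqref{eq:hjbcond}, or an accessibility argument in the degenerate case—ensuring that, under the limiting law, the process exits cleanly with probability one so that $\tau$ is almost surely continuous. Establishing this almost-sure continuity of $\tau$, and the attendant convergence of $\EE[\termcost{}(\tau,x(\tau))]$, is the delicate technical heart of the argument; the remaining estimates are routine given the weak convergence and uniform integrability already in hand.
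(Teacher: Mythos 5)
The paper does not supply its own proof of this statement: it is imported verbatim as Theorem~10.4.1 of Kushner and Dupuis~\cite{Kushner2001}, so there is nothing internal to compare against. Your sketch is a faithful outline of the argument actually given in that reference --- tightness of the interpolated pairs on the Skorokhod space, identification of subsequential limits via the martingale problem for the controlled generator, well-posedness to upgrade subsequential to full convergence, and then the two-sided $\limsup/\liminf$ comparison of value functions using near-optimal controls in each direction, with the a.s.\ continuity of the exit time $\tau$ under the limit law correctly flagged as the delicate point. One substantive gap: in the lower-bound direction you propose to ``extract by tightness a weak limit of the resulting state-control pairs,'' but the family of ordinary $\cs$-valued control paths is not tight in any topology in which the cost functional is continuous; Kushner and Dupuis resolve this by embedding the controls into the compact space of \emph{relaxed} (measure-valued) controls, extracting limits there, and then invoking a chattering/approximation lemma to return to ordinary controls without changing the infimum. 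Without that device the compactness step in your lower bound does not go through. A second, smaller caution: continuity and boundedness of $\drift$ and $\diffusion$ give weak existence but not by themselves uniqueness in law, so the well-posedness of the martingale problem that you use to promote subsequential convergence to full convergence must be taken as a standing assumption (as both the paper and the reference do) rather than derived.
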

The conditions of this theorem are called {\em local consistency conditions}. Roughly speaking, the theorem states that the trajectories of the discrete MDPs will converge to the trajectories of the original continuous-time stochastic dynamical system if the local consistency conditions are satisfied. Furthermore, in that case, the value function of the discrete MDPs also converge to that of the original stochastic optimal control problem. 
A discretization that satisfies the local consistency conditions is called a {\em consistent discretization}. Once a consistent discretization is obtained, standard dynamic programming algorithms such as value iteration or policy iteration~\cite{Bertsekas2007} can be used for its solution. 
\subsubsection{Discretization procedures}\label{sec:discretized}

In this section, we provide a general discretization framework described by Kushner and Dupuis~\cite{Kushner2001} along with a specific example. For the rest of the paper, we will drop the subscript $\ell$ from $h_{\ell}$ for the sake of brevity, and simply refer to the discretization step with $h$.

Let $\statespace^h \subset \statespace$ denote a discrete set of states. For each state $\xo \in \statespace^h$ define a finite set of vectors $M(z) = \{\bvec{v}_{i,\xo} \allowbreak : i < m(\xo)\},$ where $\bvec{v}_{i,\xo} \in \reals^{\dx}$ and $m(\xo): \statespace^h\to \naturals$ is uniformly bounded. These vectors denote directions from a state $\xo$ to a neighboring set of states $\{y: y = \xo + h\bvec{v}_{i,\xo}, i \leq m(\xo)\} \subset \statespace^h$. A valid discretization is described by the functions $q_i^1(\xo): \statespace^h \to \reals$ and $q_i^0(\xo,\ue): \statespace^h \times \cs \to \reals$ that satisfy
\begin{align*}
 \drift(\xo,\ue) &= \sum_{\bvec{v}_{i,\xo} \in M(\xo)}q_i^0(\xo,\ue)\bvec{v}_{i,\xo}, \textrm{ for all } \ue, \\
 \diffusion(\xo) &= \sum_{\bvec{v}_{i,\xo} \in M(\xo)}q_i^1(\xo)\bvec{v}_{i,\xo}\bvec{v}^{\prime}_{i,\xo}, \\
 \sum_{\bvec{v}_{i,\xo} \in M(\xo)}q_i^1(\xo)\bvec{v}_{i,\xo} &= 0, \\
 hq_i^0(\xo,\ue) + q_i^1(\xo) &\geq 0, \\
 q_i^1(\xo) &> 0,
\end{align*}
where the third condition guarantees that $q_i^1$ only contribute to the variance of the chain and not the mean, and the fourth and fifth conditions guarantee non-negative transition probabilities, which we verify momentarily. 

After finding $q_i^1$ and $q_i^0$ that satisfy these conditions, we are ready to define the approximating MDP $\mdp^{h} = (\statespace^{h},\controlspace, \allowbreak\ptrans{h},\allowbreak\stagecost{h},\termcost{h})$. 
First, define the normalizing constant
\begin{equation*}
\normP^h(\xo,\ue) = \sum_{\bvec{v}_{i,\xo} \in M(\xo)}\left[ hq_i^0(\xo,\ue) + q_i^1(\xo)\right]
\end{equation*}
Then, the discrete MDP is defined by the state space $\statespace^{h_{\ell}}$, the control space $\controlspace$, the transition probabilities
\begin{equation*}
\ptrans{h}(\xo,\xo+h\bvec{v}_{i,\xo}|\ue) = \frac{hq_i^0(\xo,\ue) + q_i^1(\xo)}{\normP^h(\xo,\ue)},
\end{equation*}
and the stage costs
\begin{equation*}
\stagecost{h}(\xo,\ue) = \frac{\Delta t^h(\xo,\ue)}{\normP^h(\xo,\ue)}\stagecost{}(\xo,\ue).
\end{equation*}
The discount factor is
\begin{equation*}
\gamma = \exp(-\beta \Delta t^h ).
\end{equation*}
Finally, define the interpolation interval
\begin{equation*}
\Delta t^h(\xo,\ue) = \frac{h^2}{\normP^h(\xo,\ue)}.
\end{equation*}
These conditions satisfy local consistency~\cite{Kushner2001}.

\subsubsection{Upwind differencing}
One realization of the framework described above is generated based on upwind differencing.  This procedure tries to ``push'' the current state of the system in the direction of the drift dynamics on average, and we describe this method here and use it for all of the numerical examples in Section~\ref{sec:numexamples}. 

\rev{The upwind discretization, for a two-dimensional state space, is given by
\begin{align*}
q_1^0(\xo,\ue) &= \frac{h}{h_1}\drift[1](\xo,\ue)^{-}, \quad q_0^{1} = \left(\frac{h}{h_1}\right)^2\frac{\mvf{A}[1,1](\xo)^2}{2}, \\
q_2^0(\xo,\ue) &= \frac{h}{h_1}\drift[1](\xo,\ue)^{+}, \quad q_1^{1} = \left(\frac{h}{h_1}\right)^2\frac{\mvf{A}[1,1](\xo)^2}{2}, \\
q_3^0(\xo,\ue) &= \frac{h}{h_2}\drift[2](\xo,\ue)^{-}, \quad q_2^{1} = \left(\frac{h}{h_2}\right)^2\frac{\mvf{A}[2,2](\xo)^2}{2},  \\
q_4^0(\xo,\ue) &= \frac{h}{h_2}\drift[2](\xo,\ue)^{+}, \quad q_3^{1} = \left(\frac{h}{h_2}\right)^2\frac{\mvf{A}[2,2](\xo)^2}{2}, 
\end{align*}
where the state is discretized with a spacing of $h_1$ in the first dimension and $h_2$ in the second dimension, and $h = \min(h_1,h_2)$. The sample discretization is shown in Figure~\ref{fig:upwind}, where the transition directions are aligned with the coordinate axis. This alignment results in $2d$ neighbors for every node, thus not incurring an exponential growth with dimension. Verification that such a probability assignment satisfies local consistancy can be found in~\cite{Kushner2001}.

}

\begin{figure}
\begin{center}
\includegraphics[scale=1.0,clip=true,trim=0 0 0 0]{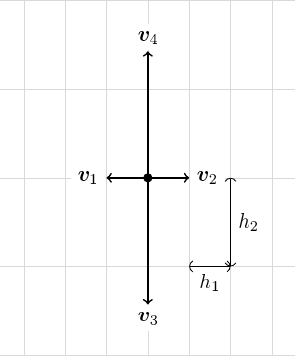}
\caption{Sample discretization of a two-dimensional state space.}
\label{fig:upwind}
\end{center}
\end{figure}

We can also analyze the computational cost of this upwind differencing procedure. The computation of the transition probabilities, for some state $\xo$ and control $\ue$, requires the evaluation of the drift and diffusion. Suppose that this evaluation requires $n_{\textrm{op}}$ operations\footnote{\rev{The number of operations $n_{\textrm{op}}$ to evaluate the drift and diffusion is usually quadratic (and at most polynomial) in the dimension of the state space, and this complexity applies to all the examples in Section~\ref{sec:numexamples}. More specifically, the number of operations required for evaluating each output of the drift typically scales as $\mathcal{O}(d)$, and therefore the full drift vector requires $\mathcal{O}(d^2)$ operations.} }. Assembling each $q_i^0(\xo,\ue)$ and $q_i^1$ requires two operations: multiplication and division. Since there are $2d$ neighbors for each $\xo$, the evaluation of all of them requires $4d$ operations. Next, the computation of the normalization $\normP^h$ involves summing all of the $q_i^0$ and $q_i^1$, a procedure requiring $4d$ operations. Computing the interpolation interval requires a single division, computing the discrete stage cost requires a division and multiplication, and computing the discount factor requires exponentiation. 
Together, these operations mean that the computational complexity of discretizing the SOC for some state $\xo$ and control $\ue$ using upwind differencing is linear with dimension
\begin{equation}
\mathcal{O}(n_{\textrm{op}} + d). \label{eq:transassemble}
\end{equation}

\subsubsection{Boundary conditions}\label{sec:mcabound}
The discretization methods described in the previous section apply to the interior nodes of the state space. In order to numerically solve optimal stochastic control problems, however, one typically needs to limit the state space to a particular region. In order to utilize low-rank tensor based methods in high dimensions, we design $\ssos$ to be a hypercube. Due to this state truncation we are required to assign boundary conditions for the discrete Markov process. Three boundary conditions are commonly used: periodic, absorbing, and reflecting boundary conditions.

A {\em periodic boundary condition} maps one side of the domain to the other. For example consider $\ssos = (-1,1)^2$. Then, if we define a periodic boundary condition for the first dimension, we mean that $\xo=(-1,\cdot)$ and $\xoo = (1,\cdot)$ are equivalent states. 

An {\em absorbing boundary condition} dictates that if the Markov process enters $\partial \ssos$ at the exit time $\tau$, then the process terminates and terminal costs are incurred.

A {\em reflecting boundary condition} is often imposed when one does not want to end the process at the boundary and periodic boundaries are not appropriate. In this case, the stochastic process is modeled with a jump diffusion. The jump diffusion term is responsible for keeping the process within $\ssos$. In our case, we will assume that the jump diffusion term instantaneously ``reflects'' the process using an orthogonal projection back into the state space $\ssos$. For example, if the system state is $\xo \in \ssos$ and the Markov process transitions to $\xoo = \xo + h \bvec{e}_k$ such that $\xoo \in \partial \ssos$, then the system immediately returns to the state $\xo$. Therefore, we can eliminate $\xoo$ from the discretized state space and adjust the self transition probability to be
\begin{equation*}
\ptrans{h}(\xo,\xo|\ue) \leftarrow \ptrans{h}(\xo,\xo|\ue) + \ptrans{h}(\xo,\xoo|\ue).
\end{equation*}
In other words, the probability of self transitioning is increased by the probability of transitioning to the boundary.

\subsection{Value iteration, policy iteration, and multilevel methods}\label{sec:vipiml}

In this section, we describe algorithms for solving the discounted-cost infinite-horizon MDP given by Equation~\eqref{eq:dcih}. In particular, we describe the value iteration (VI) algorithm and the policy iteration (PI) algorithm. Then, we describe a multi-level algorithm that is able to use coarse-grid solutions to generate solutions of fine-grid problems. FT-based versions of these algorithms will then be described in Section~\ref{sec:lowrankdp}.

\subsubsection{DP equations}
Let $\rspace{h}$ be the set of real-valued functions $\polval{h}:\ss^h \to \reals.$  Define the functional $\polrhs{h}:\ss^h \times \cs \times \rspace{h}$ as
\begin{equation}\label{eq:polrhs}
\polrhs{h}(\xo,\ue,\polval{h}) := \stagecost{h}(\xo,\ue) + \gamma \sum_{\xoo \in \ssd{}{h}} \ptrans{h}(\xo,\xoo|\ue) \polval{h}(\xoo).
\end{equation}
For a given policy $\mu$, define operator $\polfp{h}:\rspace{h} \to \rspace{h}$ as
\begin{equation*}
\polfp{h}(\polval{h})(\xo) :=  \polrhs{h}(\xo,\mu(\xo),\polval{h}), \quad \forall \xo \in \ssd{}{h} , \polval{h} \in \rspace{h} \label{eq:op}
\end{equation*}
Define the mapping $\polfpopt{h}:\rspace{h} \to \rspace{h}$, which corresponds to the Bellman equation given by Equation~\eqref{eq:dcih}, as %
\begin{equation*}
\polfpopt{h}(\polval{h})(\xo) := \min_{\ue \in \cs} \polrhs{h}(\xo,\ue,\polval{h}), \quad \forall \xo \in \ssd{}{h}, \polval{h} \in \rspace{h}. \label{eq:optop}
\end{equation*}
Using these operators we can denote two important fixed-point equations. The first describes the value function $\polval{h}$ that corresponds to a fixed policy $\mu$
\begin{equation}\label{eq:polval}
\polval{h} = \polfp{h}(\polval{h}).
\end{equation}
The second equation describes the optimal value function $\val{h}$
\begin{equation}\label{eq:optval}
\val{h} = \polfpopt{h}(\val{h})
\end{equation}
These equations are known as the dynamic programming equations.

\subsubsection{Assumptions for convergence}
Three assumptions are required to guarantee existence and uniqueness of the solution to the dynamic programming equations and to validate the convergence of their associated solution algorithms.
\begin{assumption}[Assumption A1.1 by Kushner and Dupuis~\cite{Kushner2001}]\label{as:continuity}
The functions $\ptrans{h}(\xo,\xoo|\ue)$ and $\stagecost{h}(\xo,\ue)$ are continuous functions of $\ue$ for all $\xo,\xoo \in \ssd{}{h}$.
\end{assumption}
The second assumption involves contraction. 
\begin{definition}[Contraction]
Let $\yspace{}$ be a normed vector space with the norm $\lVert \cdot \rVert$. A function $f:\yspace{} \to \yspace{}$ is a \textit{contraction mapping} if for some $\gamma \in (0,1)$ we have
\begin{equation*}
\lVert f(y) - f(y^{\prime}) \rVert \leq \gamma \lVert y - y^{\prime} \rVert, \quad \forall y,y^{\prime} \in \yspace{}.
\end{equation*}
\end{definition}
\begin{assumption}[Assumption A1.2 by Kushner and Dupuis~\cite{Kushner2001}]\label{as:contraction}
(i) There is at least one admissible feedback policy $\mu$ such that $\polfp{h}$ is a contraction, and the infima of the costs over all admissible policies is bounded from below. (ii) $\polfp{h}$ is a contraction for any feedback policy for which the associated cost is bounded.
\end{assumption}
The third assumption involves the repeated application $\polfp{h}$.
\begin{assumption}[Assumption A1.3 by Kushner and Dupuis~\cite{Kushner2001}]\label{as:convergence}
Let $\pmat_{\mu} = \{ \ptrans{h}\left(\xo,\xoo| \mu\left(\xo\right)\right) : \xo,\xoo \in \ssd{}{h} \}$ be the matrix formed by the transition probabilities of the discrete-state MDP for a fixed policy $\mu$.
If the value functions associated with the use of policies $\mu_1, \ldots, \mu_n, \ldots $ in sequence, is bounded, then 
$$\lim_{n\to \infty} \pmat_{\mu_1} \pmat_{\mu_2} \cdots \pmat_{\mu_n}  = 0.$$
\end{assumption}

\subsubsection{Value iteration algorithm}\label{sec:vi}
The VI algorithm is a fixed-point (FP) iteration aimed at computing the optimal value function $\val{h}$. It works by starting with an initial guess $\vali{h}{0} \in \rspace{h}$ and defining a sequence of value functions $\{\vali{h}{k}\}$ through the iteration $\vali{h}{k+1} = \polfpopt{h}(\vali{h}{k})$. Theorem~\ref{th:jacobi} guarantees the convergence of this algorithm under certain conditions.

\begin{theorem}[Jacobi iteration, Theorem 6.2.2 by Kushner and Dupuis~\cite{Kushner2001}]\label{th:jacobi}
Let $\mu$ be an admissible policy such that $\polfp{h}$ is a contraction. Then for any initial vector $\polvali{h}{0} \in \rspace{h}$, the sequence $\polvali{h}{k}$ defined by 
\begin{equation}\label{eq:valuefp}
\polvali{h}{k+1} = \polfp{h}(\polvali{h}{k})
\end{equation}
converges to $\polval{h}$, the unique solution to Equation~\eqref{eq:polval}. Assume Assumptions~\ref{as:continuity},~\ref{as:contraction}, and~\ref{as:convergence}. Then for any vector $\vali{h}{0} \in \rspace{h}$, the sequence recursively defined by 
\begin{equation}\label{eq:vi}
 \vali{h}{k+1} = \polfpopt{h}(\vali{h}{k})
\end{equation}
converges to the optimal value function $\val{h}$, the unique solution to Equation~\eqref{eq:optval}.
\end{theorem}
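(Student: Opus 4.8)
The plan is to derive both statements from the Banach fixed-point (contraction mapping) theorem applied on the space $\rspace{h}$. Since $\statespace^h$ is a finite set of discrete states, $\rspace{h}$ is a finite-dimensional, hence complete, normed vector space under the supremum norm $\lVert w \rVert_\infty = \max_{\xo \in \ssd{}{h}} |w(\xo)|$. Completeness is immediate, so the real work reduces to verifying the contraction property for each of the two operators and then reading off existence, uniqueness, and geometric convergence directly from the theorem.

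For the first claim, the hypothesis already grants that $\polfp{h}$ is a contraction for the admissible policy $\mu$ in question. I would therefore invoke the contraction mapping theorem directly: there is a unique $\polval{h} \in \rspace{h}$ with $\polval{h} = \polfp{h}(\polval{h})$, and for any initial $\polvali{h}{0}$ the iterates defined by Equation~\eqref{eq:valuefp} satisfy $\lVert \polvali{h}{k} - \polval{h} \rVert_\infty \le \gamma^k \lVert \polvali{h}{0} - \polval{h} \rVert_\infty \to 0$. This part is essentially bookkeeping once completeness is noted.

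The substance lies in the second claim, where the operator $\polfpopt{h}(w)(\xo) = \min_{\ue \in \controlspace} \polrhs{h}(\xo,\ue,w)$ carries a minimization over controls. The key steps I would carry out are: (i) observe that for fixed $\xo, \ue$ the map $w \mapsto \polrhs{h}(\xo,\ue,w)$ differs between two arguments by $\gamma \sum_{\xoo} \ptrans{h}(\xo,\xoo|\ue)\big(w(\xoo)-w'(\xoo)\big)$; (ii) bound this using nonnegativity of $\ptrans{h}$ and the stochasticity $\sum_{\xoo}\ptrans{h}(\xo,\xoo|\ue)=1$ to obtain $|\polrhs{h}(\xo,\ue,w)-\polrhs{h}(\xo,\ue,w')| \le \gamma \lVert w - w'\rVert_\infty$, uniformly in $\ue$; and (iii) apply the elementary inequality $|\min_{\ue} f(\ue) - \min_{\ue} g(\ue)| \le \max_{\ue} |f(\ue)-g(\ue)|$ to transfer this bound through the minimization, yielding $\lVert \polfpopt{h}(w) - \polfpopt{h}(w') \rVert_\infty \le \gamma \lVert w - w' \rVert_\infty$. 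Thus $\polfpopt{h}$ is a contraction, and the contraction mapping theorem again delivers the unique solution $\val{h}$ of Equation~\eqref{eq:optval} together with convergence of the value iteration~\eqref{eq:vi}.

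The main obstacle is making step (iii) rigorous while ensuring that the minimum is actually attained and that the contraction modulus is genuinely below one. Attainment I would secure from Assumption~\ref{as:continuity} (continuity of $\stagecost{h}$ and $\ptrans{h}$ in $\ue$) together with compactness of $\controlspace$, so that $\polrhs{h}(\xo,\cdot,w)$ achieves its minimum. The contraction modulus is immediate when the discount factor $\gamma = \exp(-\beta\Delta t^h)$ is strictly less than one, as in the discounted-cost setting treated here. In the more general, possibly undiscounted MCA regime, where a single-step sup-norm contraction can fail, I would replace step (iii) by an $n$-stage argument: Assumptions~\ref{as:contraction} and~\ref{as:convergence} guarantee that $\polfp{h}$ contracts for the policies generated along the iteration and that the products $\pmat_{\mu_1}\cdots\pmat_{\mu_n}$ vanish, forcing some iterate of $\polfpopt{h}$ to be a contraction; the fixed-point and convergence conclusions then follow for $\polfpopt{h}$ itself. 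This is the step that genuinely uses the full strength of the three assumptions rather than the discount factor alone.
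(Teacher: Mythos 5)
The paper does not actually prove this theorem: it is imported verbatim, with citation, from Kushner and Dupuis (their Theorem 6.2.2), so there is no in-paper proof to match your argument against. Judged on its own merits, your route is the standard one and is sound for the first claim and for the second claim \emph{in the uniformly discounted case}: completeness of $\rspace{h}$, the hypothesis that $\polfp{h}$ contracts, and the inequality $|\min_{\ue} f(\ue) - \min_{\ue} g(\ue)| \le \max_{\ue}|f(\ue)-g(\ue)|$ (with attainment of the minimum secured by Assumption~\ref{as:continuity} and compactness of $\cs$) correctly give that $\polfpopt{h}$ is a $\gamma$-contraction whenever $\gamma<1$ uniformly, after which Theorem~\ref{th:contmap} finishes both parts.

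The one place your argument is thinner than what the cited result actually requires is the final step. In the Markov chain approximation setting the effective discount $\gamma = \exp(-\beta\,\Delta t^h(\xo,\ue))$ is state- and control-dependent and can approach one (and equals one for undiscounted exit-time problems), which is precisely why Kushner and Dupuis impose Assumptions~\ref{as:contraction} and~\ref{as:convergence} instead of a uniform discount. You correctly identify that the fix is an $n$-stage contraction argument, but the sentence ``forcing some iterate of $\polfpopt{h}$ to be a contraction'' is an assertion, not a derivation: Assumption~\ref{as:convergence} gives $\pmat_{\mu_1}\cdots\pmat_{\mu_n}\to 0$ for the \emph{policy-evaluation} operators along sequences of policies with bounded costs, and transferring this to the \emph{optimal} Bellman operator requires sandwiching $\polfpopt{h}$ applied $n$ times between compositions of $\polfpi{h}{k}$ for suitably chosen (near-minimizing) policies, together with monotonicity of these operators. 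That sandwich-and-monotonicity step is the real content of the Kushner--Dupuis proof and is the piece missing from your sketch; everything before it is correct.
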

Indeed, Equation~\eqref{eq:vi} is the fixed-point iteration that is the value iteration algorithm. In Section~\ref{sec:ftvi}, we will describe an FT-based version of this algorithm.

\subsubsection{Policy iteration algorithm}\label{sec:pi}

Policy iteration (PI) is another method for solving discrete MDPs. Roughly, it is analogous to a gradient descent method, and our experiments indicate that it generally converges faster than the VI algorithm. The basic idea is to start with a Markov policy $\mu_0$ and to generate a sequence of policies $\{\mu_k\}$ according to 
\begin{equation}\label{eq:pi}
\mu_{k} = \arg \min_{\mu} \left[ \polfp{h}(\polvali{h}{k-1})\right]
\end{equation}
The resulting value functions $\{\polvali{h}{k}\}$ are solutions of Equation~\eqref{eq:polval}, \ie
\begin{equation}
\polvali{h}{k} = \polfpi{h}{k}(\polvali{h}{k}).
\label{eq:pifp}
\end{equation}
Theorem~\ref{th:pi} provides the conditions under which this iteration converges.
\begin{theorem}[Policy iteration, Theorem 6.2.1 by Kushner and Dupuis~\cite{Kushner2001}]\label{th:pi}
Assume Assumptions~\ref{as:continuity} and~\ref{as:contraction}. Then there is a unique solution to Equation~\eqref{eq:optval}, and it is the \revv{infimum} of the value functions over all time independent feedback policies. Let $\mu_0$ be an admissible feedback policy such that the corresponding value function $\polvali{h}{0}$ is bounded. For $k \geq 1$, define the sequence of feedback policies $\mu_k$ and costs $\polvali{h}{k}$ recursively by Equation~\eqref{eq:pi} and Equation~\eqref{eq:pifp}. Then $\polvali{h}{k} \to \val{h}.$
Under the additional condition given by Assumption~\ref{as:convergence}, $\val{h}$ is the \revv{infimum} of the value functions over all admissible policies.
\end{theorem}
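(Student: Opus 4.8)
The plan is to combine contraction-based policy evaluation with a monotone policy-improvement argument, and then identify the monotone limit with the unique solution of the optimal Bellman equation~\eqref{eq:optval}. First I would check that every evaluation step is well-defined. By hypothesis $\mu_0$ is admissible with bounded value function $\polvali{h}{0}$, so Assumption~\ref{as:contraction}(ii) makes $\polfp{h}$ (for $\mu_0$) a contraction and $\polvali{h}{0}$ is its unique fixed point. The improvement step~\eqref{eq:pi} then selects $\mu_1$ with
\begin{equation*}
\polfpi{h}{1}(\polvali{h}{0}) = \polfpopt{h}(\polvali{h}{0}) \leq \polfpi{h}{0}(\polvali{h}{0}) = \polvali{h}{0},
\end{equation*}
and I would propagate this inductively: as long as $\polvali{h}{k-1}$ is bounded, $\polfpi{h}{k}$ is again a contraction (Assumption~\ref{as:contraction}(ii)) and Theorem~\ref{th:jacobi} produces the unique fixed point $\polvali{h}{k}$ of Equation~\eqref{eq:pifp}.

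The structural fact I would exploit is that each $\polfp{h}$ is affine with nonnegative linear part $\gamma \pmat_{\mu}$, hence \emph{monotone}: $\polval{h} \leq \polvall{h}$ pointwise implies $\polfp{h}(\polval{h}) \leq \polfp{h}(\polvall{h})$. Starting from $\polfpi{h}{k}(\polvali{h}{k-1}) \leq \polvali{h}{k-1}$, repeated application of $\polfpi{h}{k}$ yields a pointwise-decreasing sequence whose contraction limit is the fixed point $\polvali{h}{k}$; therefore $\polvali{h}{k} \leq \polvali{h}{k-1}$. Thus $\{\polvali{h}{k}\}$ is monotone nonincreasing, and by the lower bound in Assumption~\ref{as:contraction}(i) it is bounded below, so it converges pointwise to a limit $\polvali{h}{\infty}$. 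Because every iterate stays in the order interval between $\polvali{h}{\infty}$ and $\polvali{h}{0}$, the boundedness needed at each induction step is automatic, which closes the argument of the previous paragraph.

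Next I would show the limit solves~\eqref{eq:optval}. Assumption~\ref{as:continuity} together with compactness of $\cs$ makes $\polfpopt{h}$ continuous and monotone, so passing to the limit in the sandwich
\begin{equation*}
\polvali{h}{k} \leq \polfpopt{h}(\polvali{h}{k-1}) \leq \polvali{h}{k-1},
\end{equation*}
(the left inequality from the improvement step, the right from $\polvali{h}{k-1} = \polfpi{h}{k-1}(\polvali{h}{k-1}) \geq \polfpopt{h}(\polvali{h}{k-1})$) gives $\polvali{h}{\infty} = \polfpopt{h}(\polvali{h}{\infty})$. To conclude $\polvali{h}{\infty} = \val{h}$ and the uniqueness statement, I would argue that any bounded fixed point of $\polfpopt{h}$ must equal the infimum of the policy value functions over time-independent feedback policies: monotone iteration of $\polfp{h}$ from such a fixed point shows it is dominated by every policy's cost, while the policy attaining the minimum in $\polfpopt{h}$ at the fixed point is itself a stationary policy whose value function equals that fixed point, realizing the infimum. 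The main obstacle is exactly this uniqueness/optimality identification, because $\polfpopt{h}$ need not be a contraction in any single norm---different policies satisfy Assumption~\ref{as:contraction} with different moduli and some are excluded entirely---so the argument must rest on monotonicity and the lower bound of Assumption~\ref{as:contraction}(i) rather than on a direct Banach fixed-point appeal.

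Finally, to upgrade optimality from time-independent feedback policies to all admissible policies, I would invoke Assumption~\ref{as:convergence}: the vanishing product $\pmat_{\mu_1} \pmat_{\mu_2} \cdots \pmat_{\mu_n} \to 0$ controls the tail contribution of any nonstationary policy, so the cost of an arbitrary admissible policy is bounded below by $\val{h}$, and stationarity of the minimizer shows this bound is attained. Throughout, the delicate bookkeeping is keeping each generated policy inside the bounded-cost regime where Assumption~\ref{as:contraction}(ii) applies, which the monotone-improvement chain supplies automatically.
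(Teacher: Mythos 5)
The paper does not prove this statement: Theorem~\ref{th:pi} is quoted directly from Kushner and Dupuis~\cite{Kushner2001} (their Theorem~6.2.1) and used as an imported black box, so there is no in-paper proof to compare against. Your argument is the standard monotone policy-improvement proof---contractive policy evaluation, the improvement inequality $\polfpi{h}{k}(\polvali{h}{k-1}) = \polfpopt{h}(\polvali{h}{k-1}) \leq \polvali{h}{k-1}$ combined with monotonicity of the affine operators to obtain a nonincreasing sequence bounded below by Assumption~\ref{as:contraction}(i), continuity of $\polfpopt{h}$ (via Assumption~\ref{as:continuity} and compactness of $\cs$) to pass to the limit, and Assumption~\ref{as:convergence} to extend optimality to nonstationary admissible policies---which is essentially the argument in the cited reference, and you correctly isolate the one genuinely delicate step, namely showing that the minimizing policy at a fixed point of $\polfpopt{h}$ lies in the bounded-cost regime where Assumption~\ref{as:contraction}(ii) applies.
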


Note that policy iteration requires the solution of a linear system in Equation~\eqref{eq:pifp}. Furthermore, Theorem~\ref{th:jacobi} states that since $\polfp{h}$ is a contraction mapping that a fixed-point iteration can also be used to solve this system. This property leads to a modification of the policy iteration algorithm called {\em optimistic policy iteration}~\cite{Bertsekas2013}, which substitutes $n_{fp}$ steps of fixed-point iterations for solving~\eqref{eq:pifp} to create a computationally more efficient algorithm. The assumptions necessary for convergence of optimistic PI are the same as those for PI and VI. Details are given by Bertsekas~\cite{Bertsekas2013}. In Section~\ref{sec:ftpi}, we will describe an FT-based version of optimistic PI algorithm. 

\subsubsection{Multilevel algorithms}\label{sec:multigrid}

 Multigrid techniques~\cite{Briggs2000,Trottenberg2000} have been successful at obtaining solutions to many problems by exploiting multiscale structure of the problem. For example, they are able to leverage solutions of linear systems at coarse discretization levels for solving finely discretized systems.

We describe how to apply these ideas within dynamic programming framework for two purposes: the initialization of fine-grid solutions with coarse-grid solutions and for the solution of the linear system in Equation~\eqref{eq:pifp} within policy iteration. Our experiments indicate that fine-grid problems typically require more iterations to converge, and initialization with a coarse-grid solution offers dramatic computational gains. Since we expect the solution to converge to the continuous solution as the grid is refined, we expect the number of iterations required for convergence to decrease as the grid is refined.

The simplest multi-level algorithm is the one-way discretization algorithm that sequentially refines coarse-grid solutions of Equation~\eqref{eq:optval} by searching for solutions on a grid starting from an initial guess obtained from the solution of a coarser problem. This procedure was analyzed in detail for shortest path or MDP style problems by Chow and Tsitsiklis in~\cite{Chow1991}. The pseudocode for this algorithm provided in Algorithm~\ref{alg:coarsetofine}. In Algorithm~\ref{alg:coarsetofine}, a set of $\kappa$ discretization levels $\{h_{1},h_{2},\ldots, h_{\kappa}\}$ such that for $j > i$ we have $h_{j} > h_{i}$, are specified. Furthermore, the operator $\ctof{k}$ interpolates the solution of the $h_{k+1}$ grid onto the $h_{k}$ grid. Then a sequence of problems, starting with the coarsest, are solved until the fine-grid solution is obtained.

\begin{algorithm}
  \caption{One-way multigrid~\cite{Chow1991}}
  \label{alg:coarsetofine}                                     
  \begin{algorithmic}[1]                                                  
    \REQUIRE Set of discretization levels $\{h_{1},h_{2},\ldots,h_{\kappa}\}$; Initial cost function $\val{h_{\kappa}}_0$
    \STATE $\val{h_{\kappa}} \leftarrow $ Solve Equation~\eqref{eq:optval} starting from $\val{h_{\kappa}}_{0}$
    \FOR {$k = \kappa-1 \ldots 1$}
    \STATE $\val{h_{k}}_0 = \ctof{k}\val{h_{k+1}}$
    \STATE $\val{h_k}\leftarrow $ Solve Equation~\eqref{eq:optval} starting from $\val{h_k}_{0}$
    \ENDFOR
  \end{algorithmic}                                                        
\end{algorithm}

Multigrid techniques can also be used for solving the linear system in Equation~\eqref{eq:polval} within the context of policy iteration. \rev{For simplicity of presentation, in the rest of this section we consider two levels of discretization with $h_1 = h$ and $h_2 = 2h$.}  Recall that for a fixed policy $\mu$, this system can be equivalently written using a linear operator $\pollin{h}$ defined according to
\begin{equation*}
\resizebox{1\hsize}{!}
{$\pollin{h}(\polval{h})(\xo) \equiv \polval{h}(\xo) - \gamma \sum_{\xoo \in \ssd{}{h}}\ptrans{h}\left(\xo,\xoo|\mu\left(\xo\right)\right)\polval{h}(\xoo), \quad \forall \xo \in \ssd{}{h}.$}
\end{equation*}
Therefore, for a fixed policy $\mu$ the corresponding value function satisfies
\begin{equation}\label{eq:pollin}
\pollin{h}(\polval{h}) = \stagecost{h}.
\end{equation}
Typically, we do not expect to satisfy this equation exactly, rather we will have an approximation $\polvala{h}$ that yields a non-zero residual
\begin{equation}\label{eq:resid}
r^h = \stagecost{h} - \pollin{h}(\polvala{h}).
\end{equation}
In addition to the residual, we can define the difference between the approximation and the true minimum as $\polerr{h} = \polval{h} - \polvala{h}$. 
Since, $\pollin{h}$ is a linear operator, we can replace $\polvala{h}$ in Equation~\eqref{eq:resid} to obtain
\begin{align*}
r^h &= \stagecost{h} -\pollin{h}(\polval{h}-\polerr{h}) \\
    &= \stagecost{h} - \pollin{h}(\polval{h}) + \pollin{h}(\polerr{h})\\
    &= \pollin{h}(\polerr{h})
\end{align*}
Thus, if solve for $\polerr{h}$, then we can update $\polvala{h}$ to obtain the solution
\begin{equation*}
\polval{h} = \polerr{h} + \polvala{h}.
\end{equation*}
In order for multigrid to be a successful strategy, we typically assume that the residual $r^h$ is ``smooth,'' and therefore we can potentially solve for $\polerr{h}$ on a coarser grid. 
The coarse grid residual is
\begin{align*}
r^{2h} &= \pollin{2h}(\polerr{2h}) 
\end{align*}
where we now choose the residual to be the restriction, denoted by operator $I_{h}^{2h}$, of the fine-grid residual
\begin{equation*}
r^{2h} = I_{h}^{2h}r^h.
\end{equation*}
Combining these two equations we obtain an equation for $\polerr{2h}$
\begin{equation}\label{eq:coarsegrid}
\pollin{2h}(\polerr{2h}) = I_{h}^{2h}r^h
\end{equation}
Note that the relationship between the linear operators $\polfp{2h}$ and $\pollin{2h}$  displayed by Equations~\eqref{eq:polval} and~\eqref{eq:pollin} lead to an equivalent equation for the error given by
\begin{equation}\label{eq:mgfp}
\polerr{2h} = \polfp{2h}(\polerr{2h};r^{2h}),
\end{equation}
where we specifically denote that the stage cost is replaced by $r^{2h}$.
Since $\polfp{2h}$ is a contraction mapping we can use the fixed-point iteration in Equation~\eqref{eq:valuefp} to solve this equation.

After solving the system in Equation~\eqref{eq:coarsegrid} above we can obtain the correction at the fine grid 
\begin{equation}
\polvala{h} \leftarrow \polvala{h} + I_{2h}^h\polerr{2h}
\end{equation}
In order, to obtain smooth out the high frequency components of the residual $r^h$ one must perform ``smoothing'' iteration instead of the typical iteration $\polfp{h}$. These iterations are typically Gauss-Seidel relaxations or weighted Jacobi iterations. Suppose that we start with $\polvala{h}_{k}$, then using the weighted Jacobi iteration we obtain an update $\polvala{h}_{k+1}$ through the following two equations
\begin{align}
\polvalt{h} &= \polfp{h}(\polvala{h}_{k},\stagecost{h}), \nonumber \\
\polvala{h}_{k+1} &= \omega \polvalt{h} + (1-\omega) \polvala{h}_{k}, \nonumber
\end{align}
for $\omega > 1$. To shorten notation, we will denote these equations by the operator $\polfpw{h}$ such that
\begin{equation*}\label{eq:jacobiweighted}
\polvala{h}_{k+1} = \polfpw{h}(\polvala{h}_{k},\stagecost{h}).
\end{equation*}

We have chosen to use the weighted Jacobi iteration since it can be performed by treating the linear operator $\polfp{h}$ as a black-box FP iteration, \ie the algorithm takes as input a value function and outputs another value function. Thus, we can still wrap the low-rank approximation scheme around this operator. A relaxed Gauss-Seidel relaxation would require sequentially updating elements of $\polvala{h}_k$, and then using these updated elements for other element updates. Details on the reasons for these smoothing iterations within multigrid is available in the existing literature~\cite{Briggs2000,Trottenberg2000,Kushner2001}.

Combining all of these notions we can design many multigrid methods. We will introduce an FT-based version of the two-level V-grid in Algorithm~\ref{alg:Vgrid} in the next section. 

\section{Low-rank compression of functions}\label{sec:lowrankfunc}
The Markov chain approximation method is often computationally intractable for problems instances with state spaces embedded in more than a few dimensions. This curse of dimensionality, or exponential growth in storage and computation complexity, arises due to state-space discretization. To mitigate the curse of dimensionality, we believe that algorithms for solving general stochastic optimal control problems must be able to
\begin{enumerate}
  \item exploit function structure to perform compression with polynomial time complexity, and
  \item perform multilinear algebra with functions in compressed format in polynomial time.
\end{enumerate}
The first capability ensures that value functions can be \textit{represented} on computing hardware. \rev{The second ensures that the computational operations required by dynamic programming algorithms can be performed in polynomial time.}

In this section, we describe a method for ``low-rank'' representation of multivariate functions, and algorithms that allow us to perform multilinear algebra in this representation. 
The algorithms presented in this section were introduced in earlier work by the authors~\cite{Gorodetsky2015a}. In that paper, algorithms for approximating multivariate black box functions and performing computations with the resulting approximation are described. We review low-rank function representations in Section~\ref{sec:seprep} and the approximation algorithms in Section~\ref{sec:crossround}.

\subsection{Low-rank function representations}\label{sec:seprep}

Let $\xspace{}$ be a tensor product of closed intervals $\xspace{} := [a_1,b_1] \times [a_2,b_2] \times \cdots \times [a_d,b_d]$, with $a_i,b_i \in \reals$ and $a_i < b_i$ for $i = 1,\ldots,d$. 
\rev{A {\em low-rank} %
 function $f:\xspace{} \to \reals$ is, in a general sense, one that exhibits some degree of separability amongst input dimensions. This separability means it can be written in a factored form as small sum of products of univariate functions. While the definitions of rank and the types of factorizations change for different types of tensor network structures, they all generally exploit additive and multiplicative separability. 

One particular low-rank representation of a function uses a sum of the outer products of a set of univariate functions $f_j^{(i)}:[a_i,b_i] \to \reals$, i.e., }
\begin{equation*}
f(x_1,\ldots,x_d) \approx \sum_{j=1}^R f^{(1)}_j(x_1)\ldots f^{(d)}_j(x_d).
\end{equation*}
This approximation is called the canonical polyadic (CP) decomposition~\cite{Carroll1970}. The CP format is defined by sets of univariate functions  ${\cal F}_i^{CP} = \{ f_1^{(i)}(x_i),\ldots,f_R^{(i)}(x_i) \}$ for $i = 1, \ldots d$. The storage complexity of the CP format is clearly linear with dimension, but also depends on the storage complexity of each $f_j^{(i)}.$ For instance, if each input dimension is discretized into $n$ grid points, so that each univariate function is represented by $n$ values, then the storage complexity of the CP format is $\mathcal{O}(dnR)$. This complexity is linear with dimension, linear with discretization level, and linear with rank $R$. Thus, for a class of functions whose ranks grow polynomially with dimension, \ie $R(d) = \mathcal{O}(d^p)$ for some $p \in \naturals$, polynomial storage complexity is attained. 

Contrast this with the representation of $f(x_1, \dots, x_d)$ as a lookup table. If the lookup table is obtained by discretizing each input variable into $n$ points, the storage requirement is $O(n^d)$, which grows exponentially with dimension. 

Regardless of the representation of each $f_j^{(i)}$, the complexity of this representation is always \textit{linear} with dimension. Intuitively, for approximately separable functions, storing many univariate functions requires fewer resources than storing a multivariate function. In the context of stochastic optimal control, the CP format has been used for the special case when the control is unconstrained and the dynamics are affine with control input~\cite{Horowitz2014}.

Using the canonical decomposition can be problematic %
in practice because the problem of determining the canonical rank of a discretized function, or tensor, is NP complete~\cite{Kruskal1989,Hastad1990}, and the problem of finding the best approximation in Frobenius norm for a given rank can be ill-posed~\cite{DeSilva2008}. 
Instead of the canonical decomposition, we propose using a continuous variant of the \emph{tensor-train} (TT) decomposition~\cite{Oseledets2010,Oseledets2011,Gorodetsky2015a} called the functional tensor-train, or \emph{function-train} (FT)~\cite{Gorodetsky2015a}. In these formats, the best fixed-rank approximation problem is well posed, and the approximation can be computed using a sequence of matrix factorizations.
 
A multivariate function $f:\xspace{} \to \reals$ in FT format is represented as
\begin{equation}
  \resizebox{0.95\hsize}{!}{$f(x_1, \ldots, x_d) =   
	\displaystyle{\sum_{\alpha_0=1}^{r_0} 
	\ldots\sum_{\alpha_d=1}^{r_d}  \ffiber{\alpha_0,\alpha_1}{1}(x_1)  
	\ldots \ffiber{\alpha_{d-1},\alpha_d}{d}(x_d)},$} \label{eq:ft}
\end{equation}
where $r_i \in \integers_{+}$ are the FT \textit{ranks}, with $r_0=r_d=1$. For each input coordinate $i = 1, \ldots, d$, the set of univariate functions $\mathcal{F}_i =\{f^{(i)}_{(\alpha_{i-1}, \alpha_{i})}\! \! : [a_i, b_i] \to \mathbb{R}, \  \alpha_{i-1} \in \{ 1, \ldots, r_{i-1} \},\ \alpha_{i} \in \{ 1, \ldots, r_{i} \} \}$ are called \textit{cores}. %
Each set can be viewed as a matrix-valued function and visualized as an array of univariate functions:
\begin{equation*}
\core{F}_{i}(x_i) = \left[
  \begin{array}{ccc}
    \ffiber{1,1}{i}(x_i) & \cdots & \ffiber{1,r_{i}}{i}(x_i) \\
    \vdots          & \ddots & \vdots \\
    \ffiber{r_{i-1},r_{i}}{i}(x_i) & \cdots & \ffiber{r_{i-1},r_i}{k}(x_i)
  \end{array} 
\right].
\end{equation*}
Thus the evaluation of a function in FT format may equivalently be posed as a sequence of $d-1$ vector-matrix products:
\begin{equation}\label{eq:ftmat}
f(x_1,\ldots,x_d) = \core{F}_1(x_1)\ldots\core{F}_d(x_d).
\end{equation}

The tensor-train decomposition differs from the canonical tensor decomposition by allowing a greater variety of interactions between neighboring dimensions through products of univariate functions in neighboring cores $\mathcal{F}_i, \mathcal{F}_{i+1}$. Furthermore, each of the cores contains $r_{i-1} \times r_{i}$ univariate functions instead of a fixed number $R$ for each $\mathcal{F}_i^{CP}$ within the CP format. 

The ranks of the FT decomposition of a function $f$ can be bounded by the singular value decomposition (SVD) ranks of certain separated representations of $f$. 
Let $\xspace{\leq i} := [a_1,b_1] \times [a_2,b_2] \times \cdots \times [a_i,b_i]$ and  $\xspace{>i} := [a_{i+1},b_{i+1}] \times \cdots \times [a_d,b_d]$, such that $\xspace{} = \xspace{\leq i} \times \xspace{>i}$. We then let $f^i$ denote the $i$--separated representation of the function $f$, also called the $i$th \emph{unfolding} of $f$:
\begin{align*}
f^i:\xspace{\leq i} \times \xspace{>i} \to \reals,  & \textrm{ where } \\
 f^i(\{x_1,\ldots, x_i\},& \{x_{i+1},\ldots,x_{d}\}) = f(x_1,\ldots,x_d).  \nonumber
\end{align*}
The FT ranks of $f$ are related to the SVD ranks of $f^i$ via the following result.
\begin{theorem}[Ranks of approximately low-rank unfoldings, from Theorem 4.2 in~\cite{Gorodetsky2015a}]\label{th:approx}
Suppose that the unfoldings $\funfold{i}$ of the function $f$ satisfy\footnote{The rank condition on $g^i$ is based on the \textit{functional} SVD, or Schmidt decomposition, a continuous analogue of the discrete SVD.}, for all $i = 1,\ldots,d-1$,
\begin{equation*}
\funfold{i} = g^i + e^i, \quad \rank g^i = r_i, \quad \left| \left| e^i \right| \right|_{L^2} = \alr_i.
\end{equation*}
Then a rank $\bvec{r} = [r_0,r_1,\ldots,r_d]$  approximation $\FT$ of $f$ in FT format may obtained with bounded error\footnote{In this paper, integrals $\int f dx$ will always be with respect to the Lebesgue measure. For example, $\int f(x_i) dx_i$ should be understood as $\int f(x_i) \mu(dx_i)$, and similarly $\int f(x) dx = \int f(x) \mu(dx)$.}
\begin{equation*}
{\int \left(f-\FT\right)^2 dx} \leq { \sum_{i=1}^{d-1} \alr_i^2}.
\end{equation*}
\end{theorem}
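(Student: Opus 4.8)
The plan is to adapt the sequential-truncation proof of the discrete TT--SVD error bound to the continuous setting, replacing the matrix SVD with the functional SVD (Schmidt decomposition) throughout. The one analytic fact I would establish first is the best-approximation property of the truncated Schmidt expansion: for $h \in L^2(\xspace{\leq i} \times \xspace{>i})$ with Hilbert--Schmidt kernel, the rank-$r_i$ truncation of its Schmidt series is the $L^2$-closest rank-$r_i$ function to $h$, and the squared error equals the sum of the squares of the discarded singular values. Combined with the hypothesis $\funfold{i} = g^i + e^i$, where $\rank g^i = r_i$ and $\lVert e^i \rVert_{L^2} = \alr_i$, this gives that the best rank-$r_i$ approximation of $\funfold{i}$ has error at most $\alr_i$, since a genuine rank-$r_i$ competitor ($g^i$) already achieves $\alr_i$.

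Next I would build $\FT$ by peeling off one core at a time. Set $f_{(0)} = f$. At stage $i = 1, \ldots, d-1$, regard the current iterate $f_{(i-1)}$ through the unfolding that separates the variables $(x_1,\ldots,x_i)$ from $(x_{i+1},\ldots,x_d)$, truncate its Schmidt expansion to rank $r_i$, and record the $r_i$ left singular functions --- which are orthonormal in $L^2(\xspace{\leq i})$ --- as the data of the core $\core{F}_i$; the matching right factors become the next iterate $f_{(i)}$, now indexed by $\alpha_i \in \{1,\dots,r_i\}$. Carried out for $i=1,\dots,d-1$, this produces a function $\FT = \core{F}_1 \cdots \core{F}_d$ of the FT form~\eqref{eq:ft} with the prescribed ranks $\bvec{r}$.

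Two observations then yield the bound. First, because each core $\core{F}_j$ with $j < i$ depends only on the coordinates $x_1,\ldots,x_j$, the projections performed at stages $1,\ldots,i-1$ combine, within the $i$-th unfolding, into a single orthogonal projector $P$ acting on the row space $L^2(\xspace{\leq i})$. Applying $P$ to the splitting $\funfold{i} = g^i + e^i$ gives $P g^i$, of rank at most $r_i$, plus $P e^i$, of norm at most $\alr_i$; hence the $i$-th unfolding of $f_{(i-1)}$ still admits a rank-$r_i$ approximation within $\alr_i$, so the stage-$i$ truncation error is at most $\alr_i$. Second, the error introduced at stage $i$ lies in the range of $I-P_i$ while every subsequent correction lives in the range of $P_i$; the left cores being orthonormal, re-embedding each stage-$i$ residual into the full space is an $L^2$ isometry, and the embedded residuals from distinct stages are mutually orthogonal. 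This gives the Pythagorean decomposition $\int (f-\FT)^2\,dx = \sum_{i=1}^{d-1}\lVert (I-P_i) f_{(i-1)} \rVert_{L^2}^2$, and bounding each term by $\alr_i^2$ proves the claim.

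I expect the main obstacle to be the first observation: verifying that the projectors from the earlier peeling steps really do amalgamate into one orthogonal row-space projector on $L^2(\xspace{\leq i})$, and that this projector is simultaneously non-expansive and rank-non-increasing, so that low-rank approximability of the $i$-th unfolding is \emph{inherited} from $f$ rather than being degraded by the preceding truncations. In the discrete case this is the standard lemma that left-multiplication by an orthogonal projector commutes with the reshaping into later unfoldings; porting it requires writing $P$ as an integral operator with an orthonormal kernel and checking its operator norm is one on the relevant $L^2$ space. Some care at the outset is also needed to ensure every iterate $f_{(i)}$ remains Hilbert--Schmidt, so that the Schmidt decomposition invoked at each stage genuinely exists.
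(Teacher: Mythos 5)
Your argument is the standard sequential functional-SVD (TT-SVD) construction --- truncate the Schmidt expansion of each unfolding in turn, use the orthogonal projection/isometry of the left cores to show low-rank approximability is inherited by later iterates, and sum the mutually orthogonal stage residuals --- which is precisely the constructive proof the paper invokes from Theorem 4.2 of~\cite{Gorodetsky2015a}. Your proposal is correct and matches that approach, including the identification of the projector-commutation lemma as the key technical step.
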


The ranks of an FT approximation of $f$ can also be related to the Sobolev regularity of $f$, with smoother functions having faster approximation rates in FT format; precise results are given in \cite{Bigoni2016}. These regularity conditions are sufficient but not necessary, however; according to Theorem~\ref{th:approx} even discontinuous functions can have low FT ranks, if their associated unfoldings exhibit low rank structure.

\subsection{Cross approximation and rounding}\label{sec:crossround}

The proof of Theorem~\ref{th:approx} is constructive and results in an algorithm that allows one to decompose a function into its FT representation using a sequence of SVDs. While this algorithm, referred to as \texttt{TT-SVD}~\cite{Oseledets2011}, exhibits certain optimality properties, it encounters the curse of dimensionality associated with computing the SVD of each unfolding $f^i$. 

To remedy this problem, Oseledets~\cite{Oseledets2010} replaces the SVD with a cross approximation algorithm for computing CUR/skeleton decompositions~\cite{Goreinov1997,Tyrtyshnikov2000,Mahoney2009} of matrices, within the overall context of compressing a tensor into TT format. Similarly, our previous work~\cite{Gorodetsky2015a} employs a continuous version of cross approximation to compress a function into FT format. If each unfolding function $f^i$ has a finite SVD rank, then these cross approximation algorithms can yield exact reconstructions.

The resulting algorithm only requires the evaluation of univariate function fibers. Fibers are multidimensional analogues of matrix rows and columns, and they are obtained by fixing all dimensions except one~\cite{Kolda2009}. Each FT core can be viewed as a collection of fibers of the corresponding dimension, and the cross approximation algorithm only requires \rev{$\mathcal{O}(dnr^2)$} evaluations of $f$, where $n$ represents the number of parameters used to represent each univariate function in each core \rev{and $r \geq r_i$ for $i=0,\ldots,d$ is an upper bound on all the unfolding ranks.}

\rev{
  More specifically, continuous cross-approximation chooses a basis for each dimension of a multivariate function $f$ using certain univariate fibers. It does so by sweeping across each input dimension and approximating a set of fibers that are represented as univariate functions. Consider the first step of a left-right sweep. Let $x_{>1}^{(\alpha_1)} \in \xspace{>1}$ for $\alpha_1 = 1, \ldots, r_1$ be the fixed values that define univariate functions of $x_1$; then the fibers are defined as
\begin{equation*}
  \ffiber{\alpha_1}{1}(x_1) = f(x_1,x_{>1}^{(\alpha_1)}).
\end{equation*}
Recall that these functions then form the core $\fcore{1}$\footnote{In practice, stability is enhanced through a second step of orthonormalizing these functions using a \textit{continuous} QR decomposition using Householder reflectors to obtain an orthonormal basis; see~\cite{Gorodetsky2015a}}. Similarly, during step $k$, the fibers are obtained by choosing $x_{<k}^{(\alpha_{k-1})} \in \xspace{<k}$ for $\alpha_{k-1} = 1,\ldots,r_{k-1}$ and $x_{>k}^{(\alpha_{k})} \in \xspace{>k}$ for $\alpha_{k} = 1,\ldots,r_{k}$ to form
\begin{equation*}
  \ffiber{\alpha_{k-1},\alpha_{k}}{k}(x_k) = f(x_{<k}^{(\alpha_{k-1})},x_k,x_{>k}^{(\alpha_k)}).
\end{equation*}
The fixed values $x_{<k}^{(\alpha_{k-1})}$ and $x_{>k}^{(\alpha_{k})}$ are obtained using a continuous maximum volume scheme~\cite{Gorodetsky2015a}. These fibers are then adaptively approximated using any linear or nonlinear approximation scheme. Since they are only univariate functions, the computational cost is low.

Here we see a clear difference between the discrete tensor-train used in~\cite{Gorodetsky2015b} and the continuous analogue. In the TT, each of these fibers is actually a vector of function evaluations, and no additional information is available. In the continuous version, \textit{additional structure} can be exploited to obtain more compact representations of the function. For example, if the function is constant, then it can be stored using only one parameter in the FT format, rather than as a constant vector in the TT format. If the discretization from MCA is very fine, i.e., we can evaluate the value function on a fine grid, then we can obtain significant benefits from the FT format by compressing the representation and avoiding the storage of large vectors.
}

While low computational costs make this algorithm attractive, there are two downsides. First, there are no convergence guarantees for the cross approximation algorithm when the unfolding functions are not of finite rank. Second, even for finite-rank tensors, the algorithm requires the specification of upper bounds on the ranks. If the upper bounds are set too low, then errors occur in the approximation; if the upper bounds are set too high, however, then too many function evaluations are required.

We mitigate the second downside, the specification of ranks, using a rank adaptation scheme. The simplest adaptation scheme, and the one we use for the experiments in this paper, is based on TT-rounding. The idea of TT-rounding~\cite{Oseledets2010} is to approximate a tensor in TT format $\tensor{T}$ by another tensor in TT format $\hat{\tensor{T}}$ to a tolerance $\epsilon$, i.e.,
\[
\lVert \tensor{T} - \hat{\tensor{T}} \rVert \leq \epsilon \lVert \tensor{T} \rVert.
\]
The benefit of such an approximation is that a tensor in TT format can often be well \textit{approximated} by another tensor with lower ranks if one allows for a relative error $\epsilon$. Furthermore, performing the rounding operation requires $\mathcal{O}(nr^3)$ operations, where $r$ refers to the rank of $\tensor{T}$. 

\rev{
  Rounding, by construction, guarantees a relative error tolerance of $\epsilon$ in the Frobenius norm ($L_2$ for a flattening of the tensor). In other words, a user specifies a desired accuracy and then one reduces the ranks of a tensor such that the accuracy is maintained. This is a direct multivariate analogue of performing a truncated SVD, where the $\epsilon$ is used to specify the truncation tolerance. Furthermore, due to the equivalence of norms, this accuracy can be obtained in any norm for a flattened tensor. For example,

\begin{equation*}
\lVert \tensor{T} - \hat{\tensor{T}} \rVert_{\infty} \leq \lVert \tensor{T} - \hat{\tensor{T}} \rVert_{F} \leq \epsilon \lVert \tensor{T} \rVert_{F} \leq \sqrt{N} \epsilon \lVert \tensor{T} \rVert_{\infty},
\end{equation*}
where $N$ is the number of elements in the tensor. Thus to achieve an equivalent error in the maximum norm, one needs to divide the error tolerance by $\sqrt{N}$. The maximum norm is important for dynamic programming since the Bellman operator is often proved to be a contraction with respect to this norm.
}

In our continuous context, we use a continuous analogue of TT-rounding~\cite{Gorodetsky2015a} to aid in rank adaptation. The rank adaptation algorithm we use requires the following steps:
\begin{enumerate}
\item Estimate an upper bound on each rank $r_i$
\item Use cross approximation to obtain a corresponding FT approximation $\hat{f}$
\item Perform FT-rounding~\cite{Gorodetsky2015a} with tolerance $\epsilon$ to obtain a new $\tilde{f}$
\item If ranks of $\tilde{f}$ are not smaller than the ranks of $\hat{f},$ increase the ranks and go to step 2.
\end{enumerate}
If all ranks are not rounded down, we may have under-specified the proper ranks in Step 1. In that case, we increase the upper bound estimate and retry. 

 The pseudocode for this algorithm is given in Algorithm~\ref{alg:rankadapt}. Within this algorithm, there are calls to cross approximation ($\texttt{cross-approx}$) and to rounding ($\texttt{ft-round}$). A detailed description of these algorithms can be found in~\cite{Gorodetsky2015a}. Algorithm~\ref{alg:rankadapt} requires five inputs and produces one output. The inputs are: a black-box function that is to be approximated, a cross approximation tolerance $\crossdelta$, a rank increase parameter $\texttt{kickran}$, a rounding tolerance $\roundeps$, and an initial rank estimate. The output of $\ftcross$ is a separable approximation in FT format. We abuse notation by defining $\epsilon= (\crossdelta,\roundeps$), and refer to this procedure as $\hat{f} = \ftcross(f,\epsilon).$ 

\begin{algorithm}[ht!]
    \caption{$\texttt{ft-rankadapt}$: FT approximation with rank adaptation~\cite{Gorodetsky2015a}}
\begin{algorithmic}[1]
\label{alg:rankadapt}
\REQUIRE A $d$-dimensional function $f:\xspace{} \to \reals$; \newline 
         \hspace*{22pt} Cross-approximation tolerance $\crossdelta$; \newline 
         \hspace*{22pt} Size of rank increase $\texttt{kickrank}$; \newline 
         \hspace*{22pt} Rounding accuracy $\roundeps$; \newline
         \hspace*{22pt} Initial rank estimates $\bvec{r}$
\ENSURE Approximation $\hat{f}$ such that a rank increase and rounding does not change ranks.
\STATE $\hat{f} = \texttt{cross-approx}(f,\bvec{r},\crossdelta)$
\STATE $\hat{f}_r = \texttt{ft-round}(\hat{f},\roundeps)$
\STATE $\hat{\bvec{r}} = \rank(\hat{f}_r)$
\WHILE {$\exists i \textrm{ s.t. } \hat{r}_i = r_i$} 
    \FOR{$k=1 \to d-1$}
        \STATE $r_k = \hat{r}_k + \texttt{kickrank}$
    \ENDFOR
    \STATE $\hat{f} = \texttt{cross-approx}(f,\bvec{r},\crossdelta)$
    \STATE $\hat{f}_r = \texttt{ft-round}(\hat{f},\roundeps)$
    \STATE $\hat{\bvec{r}} = \rank(\hat{f}_r)$
\ENDWHILE
\STATE $\hat{f} = \hat{f}_r$
\end{algorithmic}
\end{algorithm}

This algorithm requires $\mathcal{O}(nr^2)$ evaluations of the function $f$, and it requires $\mathcal{O}(nr^3)$ operations in total.

\rev{
\subsection{Examples of low-rank functions}

Next we provide several examples of low-rank functions, both to demonstrate how the rank is related to \textit{separability} of the inputs and to provide intuition about ranks of certain functions. We begin with two simple canonical structures.

The first comprises \textit{additively separable} functions $f(x) = f_1(x_1) + f_2(x_2) + \cdots + f_d(x_d)$.  Additive functions are extremely common within high-dimensional modeling~\cite{Hastie1990,Lie2008,Meier2009}, and they are rank-2 as seen by the following decomposition:
\begin{align*}
f(x_1,&x_2,\ldots,x_d) = \\
&\left[f_1(x_1) \  1\right] \left[ 
  \begin{array}{cc}
    1 & 0 \\
    f_2(x_2) & 1 
  \end{array} \right] \cdots 
\left[ \begin{array}{c}
         1 \\ 
         f_d(x_d) 
       \end{array}\right].
\end{align*}

The second example are quadratic functions. This particular class of functions is important to optimal control as it contain the solutions of the classical linear-quadratic regulator (LQR) control problem. Quadratic functions have ranks bounded by the dimension of the state space, i.e., $r_i \leq d+1.$ One representation of a quadratic function $$f(x_1,x_2,\ldots,x_d) = x^T\mat{A}x$$ in FT format has the following core structure
\begin{align*}
\mvf{F}_1(x_1) &=  \left[ 
\begin{array}{c@{\hspace{2em}}cccc}
a_{1,1} x_1^2 & a_{1,2}x_1  & \ldots & a_{1,d} x_1  & 1
\end{array}
\right], \\
\mvf{F}_d(x_d) &= \left[
\begin{array}{ccc}
1 &
x_d &
a_{d,d}x_d^2 
\end{array}
\right]
\end{align*}
\begin{align*}
&\mvf{F}_i(x_i) = \\
& \left[
\begin{array}{c@{\hspace{0.5em}}c@{\hspace{0.5em}}c@{\hspace{0.5em}}c@{\hspace{0.5em}}c@{\hspace{0.5em}}c@{\hspace{0.5em}}c@{\hspace{0.5em}}}
1  	     & 0      & 0      & 0      & \cdots & 0      & 0        \\ 
x_i      & 0      & 0      & 0      & \cdots & 0      & 0       \\
0 		 & 1      & 0      & 0      & \cdots      & 0      & 0     \\
0 		 & 0      & 1      & 0      & \cdots & 0      & 0       \\ 
\vdots 	 &        & \ddots & \ddots &        &        & \vdots    \\ 
\vdots 	 &        &        & \ddots & \ddots &        & \vdots      \\ 
\vdots 	 & \vdots &        &        &        & \ddots & \ddots \\ 
0 		 &  0     & 0      & 0      & \cdots & 1      & 0       \\
a_{i,i}x_i^2  & a_{i,i+1}x_i  & a_{i,i+2}x_i & \cdots & a_{i,d-1} x_i & a_{i,d} x_i & 1
\end{array}
\right].
\end{align*}
The middle cores of the quadratic are sparse. Since the FT representation stores and represents each univariate function within each core independently, algorithms can accommodate and discover such sparse structure in a routine and automatic manner.

General polynomial functions, however, have exponentially scaling rank $p^d$. This is clear because a polynomial can be written as
\begin{align*}
  f(x_1,x_2,\ldots,x_d) &= \sum_{i_1=1}^p\sum_{i_2=1}^p\ldots\sum_{i_d=1}^pa_{i_1i_2\ldots i_d} x_1^{i_1}x_2^{i_2}\ldots x_d^{i_d}.
\end{align*}
Thus, without any additional structure, $p^d$ coefficients $a_{i_1i_2\ldots i_d}$ must be stored, and low-rank approximation would not be suitable. Without exploiting some other structure of the coefficient tensor, there could be little hope for this class of problems. One important structure that often leads to low-rank representations (i.e., low-rank coefficients) in this setting is \textit{anisotropy} of the functions. For a further discussion of this structure, along with philosophy and intuition as to why certain functions are low rank, we refer the reader to~\cite{Trefethen2016}.

Finally, we emphasize that in many applications, tensors or functions of interest can be \textit{numerically} low-rank. A particularly relevant body of literature is that which seeks numerically low-rank solutions of partial differential equations (PDEs). Since the Markov chain approximation algorithm is closely related to the solution of Hamilton-Jacobi-Bellman PDEs~\cite{Kushner2001}, this literature is applicable. Depending on the drift and diffusion terms, the HJB PDE may be linear or nonlinear  and of elliptic, parabolic, hyperbolic, or some other type. Investigating low-rank solutions of elliptic~\cite{Khoromskij2011}, linear parabolic~\cite{Tobler2012}, and other PDE types~\cite{Bachmayr2016} is an active area of research where significant compression rates have been achieved.

}

\section{Low-rank compressed dynamic programming}\label{sec:lowrankdp}
In this section, we propose novel dynamic programming algorithms based on the compressed continuous computation framework described in the previous section. Specifically, we describe how to represent value functions in FT format in Section~\ref{sec:tensmca}, and how to perform FT-based versions of the value iteration, policy iteration, and multigrid algorithms in Sections~\ref{sec:ftvi}, \ref{sec:ftpi}, and \ref{sec:ftmultigrid}, respectively.

\subsection{FT representation of value functions}\label{sec:tensmca}
The Markov chain approximation method approximates a continous-space stochastic control problem with a discrete-state Markov decision process. In this framework, the continuous value functions $\polval{}$ are approximated by their discrete counterparts $\polval{h}.$ To leverage low-rank decompositions, we focus our attention on situations where the discrete value functions represent the cost of discrete MDPs defined through a tensor-product discretization of the state space, and therefore, $\polval{h}$ can be interpreted as a $d$-way array. In order to combat the curse of dimensionality associated with storing and computing $\polval{h}$, we propose using the FT decomposition.

Let $\ssd{}{} = \ssd{1}{} \times \ssd{2}{} \times \cdots \times \ssd{d}{}$ denote a tensor-product of intervals as described in Section~\ref{sec:seprep}. Recall that the space of the $i$th state variable is $\ssd{i}{} = [a_i,b_i],$ for $a_i,b_i \in \reals$ that have the property $a_i < b_i$. A \textit{tensor-product discretization} $\ssd{}{h}$ of $\ssd{}{}$ involves discretizing each dimension into $n$ nodes to form $\ssd{i}{h} \subset \ssd{i}{}$ where
\[
\ssd{i}{h} = \{ \dnode{1}{i}, \dnode{2}{i}, \ldots \dnode{n}{i} \}, \quad \textrm{ where } \dnode{k}{i} \in \ssd{i}{} \textrm{ for all $k$}.
\]
A discretized value function $\polval{h}$ can therefore be viewed as a vector with $n^d$ elements. 

The Markov chain approximation method guarantees that the solution to the discrete MDP approximates the solution to the original stochastic optimal control problem, \ie $\val{h} \approx \val{}$, for small enough discretizations. This approximation, however, is ill-defined since $\val{h}$ is a multidimensional array and $\val{}$ is a multivariate function. Furthermore, a continuous control law requires the ability to determine the optimal control for a system when it is in a state that is not included in the discretization. Therefore, it will be necessary to use the discrete value function $\polval{h} \in \rspace{h}$ to develop a value function that is a mapping from the continuous space $\ssd{}{}$ to the reals.

We will slightly abuse notation and interpret $\polval{h}$ as both a value function of the discrete MDP and as an \textit{approximation} to the value function of the continuous system. Furthermore, when generating this continuous-space approximation, we are restricted to evaluations located only within the tensor-product discretization. In this sense, we can think of $\polval{h}$ both as an array $\polval{h}:\ssd{}{h} \to \reals$ in the sense that it has elements
$$
\polval{h}[i_1,\ldots,i_d] \approx \polval{} \big(\,\dnode{1}{i_1},\ldots, \dnode{d}{i_d}\,\big),
$$
and simultaneously as a function $\polval{h}: \ssd{}{} \to \reals$ where
\begin{equation*}
\polval{h} \big(\,\dnode{1}{i_1},\ldots, \dnode{d}{i_d} \,\big) \approx \polval{} \big(\,\dnode{1}{i_1},\ldots, \dnode{d}{i_d}\,\big).
\end{equation*}

Now, recall that the FT representation of a function $f$ is given by Equation~\eqref{eq:ft} and defined by the set of FT cores $\{\mvf{F}_i\}$ for $i = 1, \ldots, d$. Each of these cores is a matrix-valued function $\core{F}_{i}: \ssd{i}{} \to \reals^{r_{i-1} \times r_{i}}$ that can be represented as a two-dimensional array of scalar-valued univariate functions:
\begin{equation*}
\core{F}_{i}(x_i) = \left[
  \begin{array}{ccc}
    \ffiber{1,1}{i}(x_i) & \cdots & \ffiber{1,r_{i}}{i}(x_i) \\
    \vdots          & \ddots & \vdots \\
    \ffiber{r_{i-1},r_{i}}{i}(x_i) & \cdots & \ffiber{r_{i-1},r_i}{i}(x_i)
  \end{array} 
\right].
\end{equation*}
Using this matrix-valued function representation for the cores, the evaluation of a function in the FT format can be expressed as 
$
f(x_1,\ldots,x_d) = \core{F}_1(x_1)\ldots\core{F}_d(x_d).
$

Since we can effectively compute $\polval{h}$ through evaluations at uniformly discretized tensor-product grids, we employ a \textit{nodal} representation of each scalar-valued univariate function %
\begin{equation}\label{eq:hatfunc}
\ffiber{\alpha_1,\alpha_2}{i,h}(x_i) = \sum_{\ell=1}^n a_{\alpha_1,\alpha_2,\ell}^{(i,h)}\phi^h_{\ell}(x_i), 
\end{equation}
where $a_{\alpha_1,\alpha_2,\ell}^{(i,h)}$ are the coefficients of the expansion, and the basis functions $\phi^h_{\ell}:\xspace{i} \to \reals$ are hat functions:
\begin{equation*}
\phi_{\ell,h}(x_i) = \left\{
              \begin{array}{cc}
                0 & \textrm{ if } x_i < \dnode{\ell-1}{i}\textrm{ or } x_i > \dnode{\ell+1}{i} \\
                1 &\textrm{ if } x_i = \dnode{\ell}{i} \\
                \frac{x_i-\dnode{\ell-1}{i}}{\dnode{\ell}{i}-\dnode{\ell-1}{i}} & \textrm{ if } \dnode{\ell-1}{i} \leq x_i < \dnode{\ell}{i} \\
                \frac{\dnode{\ell+1}{i}-x_i}{\dnode{\ell+1}{i}-\dnode{\ell}{i}} & \textrm{ if } \dnode{\ell}{i} < x_i \leq \dnode{\ell+1}{i} \\
              \end{array}
  \right. .
\end{equation*}
Note that these basis functions yield a \textit{linear element} interpolation\footnote{If we would have chosen a piecewise constant reconstruction, then for all intents and purposes the FT would be equivalent to the TT. Indeed we have previously performed such an approximation for the value function~\cite{Gorodetsky2015}.} of the function when evaluating it for a state not contained within $\ssd{}{h}$.
We will denote the FT cores of the value functions for discretization level $h$ as $\core{F}_{i}^{h}$. Finally, evaluating $\polval{h}$ anywhere within $\ssd{}{}$ requires evaluating a sequence of matrix-vector products
\begin{align*}
\polval{h}(x_1,x_2,\ldots,x_d) &= \core{F}_{1}^h(x_1)\core{F}_{2}^h(x_2)\ldots \core{F}_{d}^{h}(x_d), 
\end{align*}
for all $x_i \in \ssd{i}{}$.

\subsection{FT-based value iteration algorithm}\label{sec:ftvi}

In prior work~\cite{Gorodetsky2015}, we introduced a version of \rev{value iteration (VI)} in which each update given by Equation~\eqref{eq:vi} was performed by using a low-rank tensor interpolation scheme to selectively choose a small number of states $\xo \in \ssd{}{h}$. Here we follow a similar strategy, except we use the continuous space approximation algorithm described in Algorithm~\ref{alg:rankadapt}, and denoted by $\ftcross$, to accomodate our continuous space approximation. By seeking a low-rank representation of the value function, we are able to avoid visiting every state in $\ssd{}{h}$ and achieve significant computational savings. The pseudocode for low-rank VI is provided by Algorithm~\ref{alg:ttvi}. In this algorithm, $\vali{h}{k}$ denotes the value function approximation during the $k$-th iteration.

\begin{algorithm}
  \caption{FT-based Value Iteration (FTVI)
  }
  \begin{algorithmic}[1]                                                  
    \label{alg:ttvi}                                     
    \REQUIRE \hspace*{23pt} \ftcross tolerances $\epsilon = (\crossdelta,\roundeps)$; \newline 
             \hspace*{23pt} Initial cost function in FT format $\vali{h}{0}$; \newline
             \hspace*{23pt} Convergence tolerance $\delta_{\max}$
    \ENSURE Residual $\delta = \Vert \vali{h}{k}\ - \vali{h}{k-1} \Vert^2 < \delta_{\text{max}}$
    \STATE $k=0$                                                            
    \REPEAT 
    \STATE $ \displaystyle{ \vali{h}{k+1} = \ftcross \left(\polfpopt{h}(\vali{h}{k}),\epsilon\right)} $ \label{alg:vi:update}
    \STATE $k \leftarrow k+1$                                                    
    \STATE $\delta = \Vert \vali{h}{k} - \vali{h}{k-1} \Vert^2 $
    \UNTIL{$\delta < \delta_{\text{max}}$} 
  \end{algorithmic}                                                        
\end{algorithm}     
The update step \ref{alg:vi:update} treats the VI update as a black box function into which one feeds a state and obtains an updated cost. After $k$ steps of FT-based VI we can obtain a policy as the minimizer of $\polfpopt{h}(\vali{h}{k})(\xo)$, for any state $\xo \in \ssd{}{}$.

\subsection{FT-based policy iteration algorithm}\label{sec:ftpi}

As part of the policy iteration algorithm, for each $\mu_k$, one needs to solve Equation~\eqref{eq:pifp} for the corresponding value function $\polvali{h}{k}$. This system of equations has an equivalent number of unknowns as states in $\ssd{}{h}$. Hence, the number of unknowns grows exponentially with dimension, for tensor-product discretizations. In order to efficiently solve this system in high dimensions we seek \textit{low-rank} solutions. A wide variety of low-rank linear system solvers have recently been developed that can potentially be leveraged for this task~\cite{Oseledets2012,Dolgov2013}. %

We focus on optimistic policy iteration, where we utilize the contractive property of $\polfpi{h}{k}$ to solve Equation~\eqref{eq:pifp} approximately, using $n_{fp}$ fixed point iterations. See Section~\ref{sec:pi}. We leverage the low-rank nature of each intermediate value $\polvali{h}{k}$ by interpolating a new value function for each of these iterations. Notice that this iteration in Equation~\eqref{eq:valuefp} is much less expensive than the value iteration in Equation~\eqref{eq:vi}, because it does not involve any minimization. 

The pseudocode for the FT-based optimistic policy iteration is provided in Algorithm~\ref{alg:ttpi}.
\begin{algorithm}[b]
  \caption{FT-based Optimistic Policy Iteration (FTPI)}
  \begin{algorithmic}[1]                                                  
    \label{alg:ttpi}                                     
    \REQUIRE Termination criterion $\delta_{\text{max}}$; \newline
             \hspace*{23pt} \ftcross tolerances $\epsilon = (\crossdelta,\roundeps)$; \newline 
             \hspace*{23pt} Initial value function in FT format $\polvali{h}{0}$; \newline 
             \hspace*{23pt} Number of FP sub-iterations iterations $n_{fp}$
    \STATE $k=1$                                                            
    \REPEAT 
    \STATE $\mu_{k} = \texttt{ImplicitPolicy}\left(\displaystyle{\arg \min_{\mu}} \left[ \polfp{h}(\polvali{h}{k-1})\right]\right)$ \label{alg:pi:implicitpol}
    \STATE $\polvali{h}{k} = \polvali{h}{k-1}$
    \FOR{$\ell=1$ \TO $n_{fp}$} \label{alg:pi:polfunc}
    \STATE $\polvali{h}{k} = \ftcross \left(\polfp{h}\left(\polvali{h}{k}\right),\epsilon\right)$ 
    \ENDFOR
    \STATE $\delta = \Vert \polvali{h}{k} - \polvali{h}{k-1} \Vert^2 $
    \STATE $k \leftarrow k+1$                                                    
    \UNTIL {$\delta < \delta_{\text{max}}$} 
  \end{algorithmic}                                                        
\end{algorithm}
In Line~\ref{alg:pi:implicitpol}, we represent a policy $\mu_k$ \textit{implicitly} through a value function. We make this choice, instead of developing a low-rank representation of $\mu_k$, because the policies are generally not low-rank, in our experience. %
Indeed, discontinuities can arise due to regions of uncontrollability, and these discontinuities increase the rank of the policy. Instead, to evaluate an implicit policy $\mu_{k}$ at a location $\xo$, one is required to solve the optimization problem in Equation~\eqref{eq:pi} using the fixed value function $\polvali{h}{k-1}$. However, one can store the policy evaluated at nodes visited by the cross approximation algorithm to avoid recomputing them during each iteration of the loop in Line~\ref{alg:pi:polfunc}. Since the number of nodes visited during the approximation stage should be relatively low, this does not pose an excessive algorithmic burden.

\subsection{FT-based multigrid algorithm}\label{sec:ftmultigrid}

In this section, we propose a novel FT-based multigrid algorithm. Recall that the first ingredient of multigrid is a prolongation operator $I_{h}^{2h}$ which takes functions defined on the grid $\ssd{}{h}$ into a coarser grid $\ssd{}{2h}$. The second ingredient is an interpolation operator $I_{2h}^h$ which interpolates functions defined on $\ssd{}{2h}$ onto the functions defined on $\ssd{}{h}$.

Many of the common operators used for $I_{h}^{2h}$ and $I_{2h}^{h}$ can take advantage of the low rank structure of any functions on which they are operating. In particular, performing these operations on function in low-rank format often simply requires performing their one-dimensional variants onto each univariate scalar-valued function of its FT core. 
Let us describe the FT-based prolongation and interpolation operators. 

The \textit{prolongation operator} that we use picks out values common to both $\ssd{}{h}$ and $\ssd{}{2h}$ according to 
\begin{equation*}
(I_h^{2h}\polval{h})(\xo) = \polval{2h}(\xo), \quad \forall \xo \in \ssd{}{2h}
\end{equation*}
This operator requires a constant number of computational operations, only access to memory. Furthermore, the coefficients of $\ffiber{\alpha_1,\alpha_2}{i,h}$ are reused to form the coefficients of $\ffiber{\alpha_1,\alpha_2}{i,2h}$. See Equation~\eqref{eq:hatfunc}. Suppose that fine grid is $\ssd{i}{h} \allowbreak= \allowbreak\{\dnode{1}{i},\allowbreak\dnode{2}{i}, \allowbreak\ldots,\dnode{n}{i}\}$ and the coarse grid $\ssd{i}{2h} = \{\dnode{2l-1}{i}\}_{l=1,\ldots,n/2}$ consists of half the nodes ($|\ssd{i}{2h}| = n/2$). Then the univariate functions making up the cores of $\polval{2h}$ are
\begin{align*}
\ffiber{j,k}{i,2h}(x_i) &= \sum_{\ell=1}^{n/2} a_{j,k,\ell}^{(i,2h)}\phi^{2h}_{\ell}(x_i), \\
                   &= \sum_{\ell=1}^{n/2} a_{j,k,2\ell-1}^{(i,h)}\phi^{2h}_{\ell}(x_i),
\end{align*}
where $\phi_{\ell}^{2h}$ are the functions defined on the coarser grid. In the second line, we use every other coefficient from the finer grid as coefficients of the corresponding coarser grid function.\footnote{Alternatively, one can choose more regular nodal basis functions, such as splines. For other basis functions, there may be more natural prolongation and interpolation operators. We choose hat functions in this paper, because we observe that they are well behaved in the face of discontinuities or extreme nonlinearies often encountered in the solution of the HJB equation.}

The \textit{interpolation operator} arises from the interpolation that the FT performs, and in our case, the use of hat functions leads to a linear interpolation scheme. This means that if the scalar-valued univariate functions making up the cores of $\polval{2h}$ are represented in a nodal basis obtained at the tensor product grid $\ssd{}{2h} = \ssd{1}{2h} \times \ldots \times \ssd{d}{2h},$ then we obtain a nodal basis with twice the resolution defined on $\ssd{}{h} = \ssd{1}{h} \times \ldots \times \ssd{d}{h}$ through interpolation of each core. This operation requires interpolating of each of the univariate functions making up the cores of $\polval{2h}$ onto the fine grid. Thus $\polval{h}$ becomes an FT with cores consisting of the univariate functions
\begin{align*}
\ffiber{j,k}{i,h}(x_i) &= \sum_{\ell=1}^n a_{j,k,\ell}^{(i,h)}\phi_{\ell}^h(x_i), \\
                   &= \sum_{\ell=1}^n \ffiber{j,k}{i,2h}(\dnode{\ell}{i}) \phi_{\ell}^h(x_i),
\end{align*}
where we note that in the second equation we use evaluations of the coarser basis functions to obtain the coefficients of the new basis functions.
In summary, both operators can be applied to each FT core of the value function separately. Both of these operations, therefore, scale linearly with dimension.

The FT-based two-level V-grid algorithm is provided in Algorithm~\ref{alg:Vgrid}. In this algorithm, an approximate solution for each equation is obtained by $\ell_i$ fixed point iterations at grid level $h_i$. An extension to other grid cycles and multiple levels of grids is straightforward and can be performed with all of the same operations. %
\begin{algorithm}
  \caption{Two-level FT-based V-grid 
  }
  \begin{algorithmic}[1]                                                  
    \label{alg:Vgrid}                                     
    \REQUIRE Discretization levels $\{h_1,h_2\}$ such that $h_1 < h_2$; \newline
             \hspace*{23pt} Number of iterations at each level $\{\ell_1,\ell_2\}$; \newline
             \hspace*{23pt} Initial value function $\polvala{h}_0$; \newline 
             \hspace*{23pt} Policy $\mu$; \newline 
             \hspace*{23pt} \ftcross~tolerances $\epsilon = (\crossdelta,\roundeps)$
    \STATE $k = 0$
    \REPEAT
        \FOR {$\ell = 1, \ldots, \ell_1$}
            \STATE $\polvala{h_1}_k \leftarrow \ftcross(\polfp{h_1}(\polval{h_1}_{k}),\epsilon)$
        \ENDFOR
        \STATE $r^h = \ftcross(\polfpw{h}(\polvala{h_1}_{k}),\epsilon) - \polvala{h_1}_k$
        \STATE $\polerr{h_2} = 0$
        \FOR {$\ell = 1, \ldots, \ell_2$}
            \STATE $\polerr{h_2}_{n+1} \leftarrow \ftcross(\polfp{h_2}(\polerr{h_2}_{n};I_{h_1}^{h_2}r^h),\epsilon)$ \COMMENT{See~\eqref{eq:mgfp}}
        \ENDFOR
        \STATE $\polvala{h_1}_{k+1} = \polvala{h_1}_{k} + I_{h_2}^{h_1} \polerr{h_2}$
        \STATE $k \leftarrow k+1$
    \UNTIL convergence
  \end{algorithmic}                                                        
\end{algorithm}

\section{Analysis}\label{sec:analysis}
In this section, we analyze the convergence properties and the computational complexity of the FT-based algorithms proposed in the previous section. First, we prove the convergence and accuracy of approximate fixed-point iteration methods in Section~\ref{sec:ftapp}. Then, in Section~\ref{sec:convlowrank}, we apply these result to prove the convergence of the proposed FT-based algorithms we discuss computational complexity in Section~\ref{sec:complexity}.

The algorithms discussed in the previous section all rely on performing cross approximation of a function with a relative accuracy tolerance $\epsilon$. Recall, from Section~\ref{sec:crossround} that cross approximation yields an exact reconstruction only when the value function has finite rank unfoldings. In this case, rounding to a relative error $\epsilon$ also guarantees that we achieve an $\epsilon$-accurate solution. %
In what follows, we provide conditions under which FT-based dynamic programming algorithms converge, assuming the rank-adaptive cross approximation algoritm $\ftcross$ algorithm provides an approximation with at most $\epsilon$ error.

\subsection{Convergence of approximate fixed-point iterations}\label{sec:ftapp}

We start by showing that a small relative error made during each step of the relevant fixed-point iterations result in a bounded overall approximation error. 

We begin recalling the contraction mapping theorem.
\begin{theorem}[Contraction mapping theorem; Proposition B.1 by Bertsekas~\cite{Bertsekas2013}]\label{th:contmap} %
Let $\rspace{}$ be a complete vector space and $\rspacec{}$ be a closed subset. Then if $\ct:\rspacec{} \to \rspacec{}$ is a contraction mapping with modulus $\gamma \in (0,1)$, there exists a unique $\polval{} \in \rspacec{}$ such that 
$
\polval{} = \ct(\polval{}).
$
Furthermore, the sequence defined by $\polvali{}{0} \in \rspacec{}$ and the iteration $\polvali{}{k} = \ct(\polvali{}{k-1})$ converges to $\polval{}$ for any $\polval{} \in \rspacec{}$ according to
\begin{equation*}
\lVert \polvali{}{k} - \polval{} \rVert \leq \gamma^{k} \lVert \polvali{}{0} - \polval{} \rVert, \quad k=1,2,\ldots
\end{equation*}
\end{theorem}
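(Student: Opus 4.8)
The plan is to prove this as the classical Banach fixed-point theorem, proceeding in three stages: first extract a limit point via a Cauchy argument, then verify that this limit is a fixed point, and finally establish uniqueness together with the geometric convergence rate. Throughout I will use only that $\ct$ is a contraction with modulus $\gamma \in (0,1)$, that $\rspace{}$ is complete, and that $\rspacec{}$ is closed.

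First I would show that the iterates $\polvali{}{k} = \ct(\polvali{}{k-1})$ form a Cauchy sequence. Applying the contraction property to consecutive terms gives $\lVert \polvali{}{k+1} - \polvali{}{k} \rVert \leq \gamma \lVert \polvali{}{k} - \polvali{}{k-1} \rVert$, and iterating this bound yields $\lVert \polvali{}{k+1} - \polvali{}{k} \rVert \leq \gamma^{k} \lVert \polvali{}{1} - \polvali{}{0} \rVert$. For any $m > k$, the triangle inequality together with the geometric series then gives
\begin{equation*}
\lVert \polvali{}{m} - \polvali{}{k} \rVert \leq \sum_{j=k}^{m-1} \gamma^{j} \lVert \polvali{}{1} - \polvali{}{0} \rVert \leq \frac{\gamma^{k}}{1-\gamma} \lVert \polvali{}{1} - \polvali{}{0} \rVert,
\end{equation*}
which tends to zero as $k \to \infty$ because $\gamma < 1$. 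Completeness of $\rspace{}$ then supplies a limit, and since every iterate lies in the closed subset $\rspacec{}$, the limit $\polval{}$ also lies in $\rspacec{}$.

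Next I would verify that $\polval{}$ is a fixed point. The contraction inequality makes $\ct$ Lipschitz, hence continuous, so passing to the limit in $\polvali{}{k} = \ct(\polvali{}{k-1})$ gives $\polval{} = \ct(\polval{})$. For uniqueness, if $\polval{}$ and $\polvall{}$ were two fixed points in $\rspacec{}$, then $\lVert \polval{} - \polvall{} \rVert = \lVert \ct(\polval{}) - \ct(\polvall{}) \rVert \leq \gamma \lVert \polval{} - \polvall{} \rVert$, and since $\gamma < 1$ this forces $\lVert \polval{} - \polvall{} \rVert = 0$. Finally, the claimed rate follows by induction: $\lVert \polvali{}{k} - \polval{} \rVert = \lVert \ct(\polvali{}{k-1}) - \ct(\polval{}) \rVert \leq \gamma \lVert \polvali{}{k-1} - \polval{} \rVert$, so unwinding $k$ times yields $\lVert \polvali{}{k} - \polval{} \rVert \leq \gamma^{k} \lVert \polvali{}{0} - \polval{} \rVert$.

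Since this is a textbook result, no single step is a genuine obstacle; the only point requiring care is the interplay between completeness and closedness. Completeness is assumed only for the ambient space $\rspace{}$, so the Cauchy sequence is guaranteed a limit there, and I must separately invoke closedness of $\rspacec{}$ to conclude that this limit actually belongs to $\rspacec{}$ — the set on which $\ct$ is defined — before the fixed-point identity $\polval{} = \ct(\polval{})$ can even be written down.
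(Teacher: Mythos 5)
Your proof is correct and is exactly the classical Banach fixed-point argument that the paper relies on by citation to Bertsekas (Proposition B.1) rather than reproducing; the paper offers no proof of its own for this theorem. You handle the one point that actually needs care — completeness of the ambient space $\rspace{}$ supplying the Cauchy limit and closedness of $\rspacec{}$ placing that limit back in the domain of $\ct$ — so nothing further is needed.
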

Theorem~\ref{th:contmap} can be used, for example, to prove the convergence of the value iteration algorithm, when the operator $\polfpopt{h}$ is a contraction mapping. The proposed FT-based algorithms are based on {\em approximations} to contraction mappings. To prove their convergence properties, it is important to understand when approximate fixed-point iterations converge and the accuracy which they attain. Lemma~\ref{th:afp} below addresses the convergence of approximate fixed-point iterations.
\begin{lemma}[Convergence of approximate fixed-point iterations]\label{th:afp}
Let $\rspace{h}$ be a closed subset of a complete vector space. Let $\ct:\rspace{h} \to \rspace{h}$ be a contractive mapping with modulus $\gamma \in (0,1)$ and fixed point $\polval{h}$. Let $\ctt:\rspace{h} \to \rspace{h}$ be an approximate mapping such that 
\begin{equation}\label{eq:apcontract}
\lVert \ctt\left(\polval{\prime}\right) - \ct\left(\polval{\prime}\right) \rVert \leq \epsilon \lVert \ct\left(\polval{\prime}\right) \rVert, \quad \forall \polval{\prime} \in \rspace{h},
\end{equation}
for $\epsilon > 0$. Then, the sequence defined by $\polvali{h}{0} \in \rspace{h}$ and the iteration $\polvali{h}{k} = \ctt \left( \polvali{h}{k-1}\right)$ for $k=1,2,\ldots$ satisfies
\begin{align*}
\lVert \polvali{h}{k} - \polval{h} \rVert & \leq \epsilon \frac{1 - \left(\gamma\epsilon + \gamma\right)^{k}}{1 - \left(\gamma\epsilon+\gamma\right)} \lVert \polval{h} \rVert \ + \\
    & \quad \quad \quad \qquad \qquad \left(\gamma \epsilon + \gamma\right)^{k} \lVert \polvali{h}{0} - \polval{h} \rVert.
\end{align*}
\end{lemma}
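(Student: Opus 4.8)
The plan is to reduce this vector-space statement to a scalar affine recursion for the error norm $e_k := \lVert \polvali{h}{k} - \polval{h} \rVert$, and then solve that recursion in closed form as a geometric sum. Throughout I will use that $\polval{h}$ is the fixed point of the exact map, i.e. $\ct(\polval{h}) = \polval{h}$, together with the contraction property of $\ct$ and the approximation bound~\eqref{eq:apcontract}.

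First I would establish a one-step bound. Writing $\polvali{h}{k} = \ctt(\polvali{h}{k-1})$ and inserting the intermediate term $\ct(\polvali{h}{k-1})$, the triangle inequality gives
\begin{equation*}
e_k \leq \lVert \ctt(\polvali{h}{k-1}) - \ct(\polvali{h}{k-1}) \rVert + \lVert \ct(\polvali{h}{k-1}) - \ct(\polval{h}) \rVert.
\end{equation*}
The second term is at most $\gamma e_{k-1}$ by contractivity of $\ct$, and the first is at most $\epsilon \lVert \ct(\polvali{h}{k-1}) \rVert$ by~\eqref{eq:apcontract}.

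The crux is then to control the floating quantity $\lVert \ct(\polvali{h}{k-1}) \rVert$, which does not by itself close into a recursion. The key idea is to bound it in terms of the current error and the norm of the fixed point, again via contractivity and the triangle inequality:
\begin{equation*}
\lVert \ct(\polvali{h}{k-1}) \rVert \leq \lVert \ct(\polvali{h}{k-1}) - \ct(\polval{h}) \rVert + \lVert \polval{h} \rVert \leq \gamma e_{k-1} + \lVert \polval{h} \rVert.
\end{equation*}
Substituting this back yields the scalar recursion $e_k \leq (\gamma + \gamma\epsilon)\, e_{k-1} + \epsilon \lVert \polval{h} \rVert$.

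Finally I would unroll this affine recursion by induction. Setting $\rho = \gamma\epsilon + \gamma$ and $c = \epsilon \lVert \polval{h} \rVert$, one obtains $e_k \leq \rho^k e_0 + c\sum_{j=0}^{k-1}\rho^j = \rho^k e_0 + c\,\frac{1 - \rho^k}{1 - \rho}$, which is exactly the claimed bound after expanding $\rho$ and $c$ and recalling $e_0 = \lVert \polvali{h}{0} - \polval{h} \rVert$. I expect the main (and essentially only) obstacle to be the step bounding $\lVert \ct(\polvali{h}{k-1}) \rVert$: without folding that term back into $e_{k-1}$ and $\lVert \polval{h} \rVert$, the per-step errors would not accumulate into a single closed geometric series, and the clean prefactor $\epsilon \lVert \polval{h} \rVert$ multiplying the geometric sum would not emerge. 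Everything else is a routine triangle-inequality manipulation and an induction on $k$.
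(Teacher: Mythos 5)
Your proposal is correct and follows essentially the same route as the paper's proof: the same triangle-inequality decomposition, the same key step of bounding $\lVert \ct(\polvali{h}{k-1}) \rVert$ by $\gamma \lVert \polvali{h}{k-1} - \polval{h} \rVert + \lVert \polval{h} \rVert$ to close the affine recursion $e_k \leq (\gamma\epsilon+\gamma)e_{k-1} + \epsilon\lVert\polval{h}\rVert$, and the same geometric-series unrolling. No gaps.
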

\begin{proof}
The proof is a standard contraction argument. 
We begin by bounding the difference between the $k$-th iterate and the fixed point $\polval{h}$:
\begin{align*}
\lVert \polvali{h}{k} - \polval{h} \rVert &= \lVert \polvali{h}{k} - \ct(\polvali{h}{k-1}) + \ct(\polvali{h}{k-1}) - \polval{h} \rVert \\
& \leq \lVert \polvali{h}{k} - \ct(\polvali{h}{k-1}) \rVert  + \lVert \ct(\polvali{h}{k-1}) - \polval{h} \rVert \\
& \leq \epsilon \lVert \ct(\polvali{h}{k-1}) \rVert  + \gamma \lVert \polvalai{h}{k-1} - \polval{h} \rVert \\
& \leq \epsilon \lVert \ct(\polvali{h}{k-1}) - \polval{h} \rVert + \epsilon \lVert \polval{h} \rVert \ +\\
& \qquad \qquad  \gamma \lVert \polvali{h}{k-1} - \polval{h} \rVert \\
& \leq \left(\gamma \epsilon + \gamma\right) \lVert \polvali{h}{k-1} - \polval{h} \rVert + \epsilon \lVert \polval{h} \rVert,
\end{align*}
where the second inequality comes from the triangle inequality, the third comes from Equation~\eqref{eq:apcontract} and contraction, the fourth inequality arises again from the triangle inequality, the final inequality arises from contraction.
Using recursion results in
\begin{align*}
\lVert \polvali{h}{k} - \polval{h} \rVert & \leq \left(\gamma \epsilon + \gamma\right)\Big[\left(\gamma \epsilon + \gamma\right) \left[ \left(\gamma \epsilon + \gamma\right) \ldots + \epsilon \lVert \polval{h} \rVert \right] \\
                                          & \quad \quad \quad \quad \quad \quad \quad \quad + \epsilon \lVert \polval{h} \rVert \Big] + \epsilon \lVert \polval{h} \rVert  \\ 
                                          &=  \epsilon \lVert \polval{h} \rVert \left[ \sum_{\ell=0}^{k-1} \left(\gamma \epsilon + \gamma\right)^{\ell} \right] + \\
                                          & \quad \quad \quad \quad \quad \quad \quad \quad \left(\gamma \epsilon + \gamma\right)^{k} \lVert \polvalai{h}{0} - \polval{h} \rVert.
\end{align*}
Evaluating the sum of a geometric series,
\begin{align*}
\lVert \polvalai{h}{k} - \polval{h} \rVert & \leq \epsilon \frac{1 - \left(\gamma\epsilon + \gamma\right)^{k}}{1 - \left(\gamma\epsilon+\gamma\right)} \lVert \polval{h} \rVert  + \\
                                           & \quad \quad \quad \quad \quad
                                           \left(\gamma \epsilon + \gamma\right)^{k} \lVert \polvalai{h}{0} - \polval{h} \rVert,
\end{align*}
we reach the desired result.
\end{proof}

Notice that, as expected, when $\epsilon=0$ and $k \to \infty$ this result yields that the iterates $\polvali{h}{k}$ converge to the fixed point $\polval{h}$. Second, the condition $\gamma\epsilon  + \gamma < 1$ is required to avoid divergence. This condition effectively states that, if the contraction modulus is small enough, a larger approximation error may be incurred. On the other hand, if the contraction modulus is large, then the approximation error must be small. In other words, this requirement can be thought as a condition for which the approximation can remain a contraction mapping; larger errors can be tolerated when the original contraction mapping has a small modulus, and vice-versa.

The following result is an alternative to the previous lemma. It is more flexible in the size of error that it allows, so long as each iterate is bounded.

\begin{lemma}[Convergence of approximate fixed-point iterations with a boundedness assumption]\label{th:afp2}
Let $\rspace{h}$ be a closed subset of a complete vector space. Let $\ct:\rspace{h} \to \rspace{h}$ be a contractive mapping with modulus $\gamma \in (0,1)$ and fixed point $\polval{h}$. Let $\ctt:\rspace{h} \to \rspace{h}$ be an approximate mapping such that 
\begin{equation}\label{eq:apcontract2}
\lVert \ctt\left(\polval{\prime}\right) - \ct\left(\polval{\prime}\right) \rVert \leq \epsilon \lVert \ct\left(\polval{\prime}\right) \rVert, \quad \forall \polval{\prime} \in \rspace{h},
\end{equation}
for $\epsilon > 0$. Let $\polvali{h}{0} \in \rspace{h}$. Define a sequence the sequence $\{\polvali{h}{k}\}$ according to the iteration $\polvali{h}{k} = \ctt \left( \polvali{h}{k-1}\right)$ for $k=1,2,\ldots$. Assume that $\lVert \polvali{h}{k} \rVert \leq \rho_1 < \infty $ so that $\lVert \ct\left(\polvali{h}{k}\right) \rVert \leq \rho < \infty.$ Then $\{\polvali{h}{k}\}$ satisfies
\begin{equation}\label{eq:c2res1}
\lVert \polvali{h}{k} - \ct^{[k]}(\polvali{h}{0}) \lVert \leq \frac{1-\gamma^k}{1-\gamma} \epsilon \rho, \quad \forall k,
\end{equation}
so that,
\begin{equation}\label{eq:c2res}
\lim_{k \to \infty} \lVert \polvali{h}{k} - \polval{h} \lVert \leq \frac{\epsilon \rho}{1 - \gamma},
\end{equation}
where $\ct^{[k]}$ denotes $k$ applications of the mapping $\ct$.
\end{lemma}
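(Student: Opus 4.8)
The plan is to compare the approximate iterates $\polvali{h}{k}$ against the \emph{exact} iterates $\ct^{[k]}(\polvali{h}{0})$ of the true contraction, and then invoke the contraction mapping theorem (Theorem~\ref{th:contmap}) to control how far those exact iterates sit from the fixed point $\polval{h}$. A single application of the triangle inequality then splits the target quantity into an ``approximation drift'' term, measuring the accumulated error from replacing $\ct$ by $\ctt$ at each step, and a ``contraction convergence'' term that decays geometrically.

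First I would establish Equation~\eqref{eq:c2res1} by a one-step recursion on $E_k := \lVert \polvali{h}{k} - \ct^{[k]}(\polvali{h}{0}) \rVert$. Inserting the intermediate term $\ct(\polvali{h}{k-1})$ and applying the triangle inequality gives
\begin{align*}
E_k &\leq \lVert \ctt(\polvali{h}{k-1}) - \ct(\polvali{h}{k-1}) \rVert + \lVert \ct(\polvali{h}{k-1}) - \ct(\ct^{[k-1]}(\polvali{h}{0})) \rVert.
\end{align*}
I would bound the first term using the approximation property~\eqref{eq:apcontract2} together with the boundedness hypothesis $\lVert \ct(\polvali{h}{k-1}) \rVert \leq \rho$, yielding $\epsilon\rho$; the second term is bounded by $\gamma E_{k-1}$ via contraction. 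This produces the linear recursion $E_k \leq \epsilon\rho + \gamma E_{k-1}$ with $E_0 = 0$, whose solution is the finite geometric sum $E_k \leq \epsilon\rho \sum_{j=0}^{k-1}\gamma^j = \frac{1-\gamma^k}{1-\gamma}\,\epsilon\rho$, which is exactly~\eqref{eq:c2res1}.

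For the limit~\eqref{eq:c2res}, I would apply the triangle inequality once more, $\lVert \polvali{h}{k} - \polval{h} \rVert \leq E_k + \lVert \ct^{[k]}(\polvali{h}{0}) - \polval{h} \rVert$, and control the last term by $\gamma^k \lVert \polvali{h}{0} - \polval{h} \rVert$ using Theorem~\ref{th:contmap}. Letting $k \to \infty$, the contraction term vanishes while $\frac{1-\gamma^k}{1-\gamma} \to \frac{1}{1-\gamma}$, delivering the stated bound $\frac{\epsilon\rho}{1-\gamma}$.

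The only genuine subtlety, as opposed to routine bookkeeping, is recognizing that the boundedness assumption is precisely what lets the per-step approximation error be bounded by the \emph{constant} $\epsilon\rho$ rather than by a quantity that scales with the iterate norm. This decouples the error accumulation from the contraction estimate and makes the geometric series summable; without it (as in Lemma~\ref{th:afp}) one is forced into the more delicate $(\gamma\epsilon+\gamma)^k$ bound. I would also emphasize that here $\gamma \in (0,1)$ alone suffices for convergence, with no additional smallness condition imposed on $\epsilon$.
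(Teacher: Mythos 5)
Your proof is correct and follows essentially the same route as the paper: the one-step recursion $E_k \leq \epsilon\rho + \gamma E_{k-1}$ is just the paper's telescoping decomposition of $\lVert \polvali{h}{k} - \ct^{[k]}(\polvali{h}{0})\rVert$ written in recursive rather than unrolled form, and both arguments finish by summing the geometric series and invoking Theorem~\ref{th:contmap} to send $\ct^{[k]}(\polvali{h}{0})$ to $\polval{h}$. Your closing remark about the role of the boundedness assumption matches the paper's own discussion contrasting this lemma with Lemma~\ref{th:afp}.
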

\begin{proof}
The strategy for this proof again relies on standard contraction and triangle inequality arguments. Furthermore, it follows closely the proof of Proposition 2.3.2 (Error Bounds for Approximate VI) work by Bertsekas~\cite{Bertsekas2013}. In that work, the proposition provides error bounds for approximate VI when an absolute error (rather than a relative error) is made during each approximate FP iteration. The assumption of boundedness that we use here will allow us to use the same argument as the one made by Bertsekas.

Equation~\eqref{eq:apcontract2} implies 
\begin{equation}\label{eq:rrr}
\lVert \polvali{h}{k} - \ct(\polvali{h}{k-1}) \rVert = \lVert \ctt(\polvali{h}{k-1}) - \ct(\polvali{h}{k-1}) \rVert \leq \epsilon \lVert\ct(\polvali{h}{k-1}) \rVert.
\end{equation}
Recall $\ct^{[k]}(\polval{\prime})$ denotes $k$ applications of the operator $\ct$, \ie
\begin{align*}
\ct^{[k]}\left(\polval{\prime}\right) &= \ct^{[k-1]}\left(\ct\left(\polval{\prime}\right)\right) = \ct^{[k-2]}\left(\ct\left(\ct\left(\polval{\prime}\right)\right)\right) = \cdots \\
                                     &= \ct\left(\ct\left(\ct\left( \cdots \ct\left(\polval{\prime} \right) \right) \right)\right).
\end{align*}
Using the triangle inquality, contraction, and Equation~\eqref{eq:rrr}, 
\begin{align*}
\lVert \polvali{h}{k} - \ct^{[k]}(\polvali{h}{0}) \lVert &\leq \lVert \polvali{h}{k} - \ct(\polvali{h}{k-1}) \rVert \\ 
         & \quad \quad + \lVert \ct(\polvali{h}{k-1}) - \ct^{[2]}(\polvali{h}{k-2}) \rVert + \cdots \\
         & \quad \quad + \lVert \ct^{[k-1]}(\polvali{h}{1}) - \ct^{[k]}(\polvali{h}{0}) \rVert \\
         & \leq \epsilon \lVert \ct(\polvali{h}{k-1}) \rVert +  \gamma \epsilon \lVert \ct(\polvali{h}{k-2}) \rVert + \cdots \\
         & \quad \quad + \gamma^{k-1} \epsilon \lVert \ct(\polvali{h}{0}) \rVert.
\end{align*}
The boundedness assumption yields
\begin{equation*}
\lVert \polvali{h}{k} - \ct^{[k]}(\polvali{h}{0}) \lVert \leq \epsilon \rho +  \gamma \epsilon \rho + \cdots + \gamma^{k-1} \epsilon \rho.
\end{equation*}
We then evaluate the sum of a geometric series, 
\begin{equation*}
\lVert \polvali{h}{k} - \ct^{[k]}(\polvali{h}{0}) \lVert \leq \frac{1-\gamma^k}{1-\gamma} \epsilon \rho.
\end{equation*}
Taking the limit $k \to \infty$ and using Theorem~\ref{th:contmap}, where $\lim_{k\to \infty}\ct^{[k]}(\polvali{h}{0}) = \polval{h}$, yields Equation~\eqref{eq:c2res}.
\hfill
\end{proof}
Equation~\eqref{eq:c2res1} shows the difference between iterates of exact fixed-point iteration and approximate fixed-point iteration, and indicates that this difference can not grow larger than $\epsilon \rho$. Furthermore, this result does not require any assumption on the relative error $\epsilon$. 

In the next section, we use these intermediate results to prove that the proposed FT-based dynamic programming algorithms converge under certain conditions.

\subsection{Convergence of FT-based fixed-point iterations}\label{sec:convlowrank}

In order for the above theorems to be applicable to the case of low-rank approximation using the \ftcross algorithm, we need to be able to explicitly bound the error committed during each iteration of FT-based value iteration, and each subiteration within the optimistic policy iteration algorithm. 
In order to strictly adhere to the intermediate results of the previous section, we focus our attention to functions with finite FT rank. This assumption is formalized below.
\begin{assumption}[Bounded ranks of FT-based FP iteration]\label{as:boundedrank}
Let $\rspace{h}$ be a closed subset of a complete vector space. Let $\ct:\rspace{h} \to \rspace{h}$ be a contractive mapping with modulus $\gamma \in (0,1)$ and fixed point $\polval{h}$.
The sequence defined by the initial condition $\polvali{h}{0} \in \rspace{h}$ and the iteration
$\polvali{h}{k} = \ftcross\left(f\left(\polvali{h}{k-1}\right),\epsilon\right)$ has the property that the functions $\{ \ct(\polvali{h}{k}) \}$ have finite FT ranks \rev{and that Algorithm~\ref{alg:rankadapt} successfully finds upper bounds to these ranks.} In other words, each of the unfoldings of $\ct(\polvali{h}{k})$ have rank bounded by some $r < \infty$,
\begin{align*}
\rank \left[ \ct(\polvali{h}{k}) (\{x_1,\ldots,x_l\}; \{x_{l+1} \ldots x_{d}\}) \right] &< r < \infty,
\end{align*}
for $l = 1, \ldots d-1$.
\end{assumption}

Since we are approximating a function with finite FT ranks at each step of the fixed-point iteration under Assumption~\ref{as:boundedrank}, the \ftcross algorithm converges. \rev{Furthermore in order to guarantee an accuracy of approximation at each step, we need to assume that the rounding procedure successfully finds an upper bound to the true ranks. Such an upper bound is necessary to guarantee that the cross-approximation algorithm can exactly represent the function. If the function is represented exactly after cross approximation, then rounding can generate an approximation with arbitrary accuracy. Indeed rounding helps control the growth in ranks that might be required by an exact solver.}
Thus, this assumption leads to the satisfaction of the conditions required by Lemmas~\ref{th:afp} and~\ref{th:afp2}. Then, the following result is immediate.
\begin{theorem}[Convergence of the FT-based value iteration algorithm]
Let $\rspace{h}$ be a closed subset of a complete vector space. Let the operator $\polfpopt{h}$ of~\eqref{eq:optval} be a contraction mapping with modulus $\gamma$ and fixed point $\val{h}$. Define a sequence of functions according the an initial function $\vali{h}{0} \in \rspace{h}$ and the iteration $\vali{h}{k} = \ftcross\left(\polfpopt{h}\left(\vali{h}{k-1}\right),\epsilon\right)$. Assume Assumption~\ref{as:boundedrank}. Furthermore, assume that $\lVert \polvali{h}{k} \rVert \leq \rho_1 < \infty $ so that $\lVert \ct\left(\polvali{h}{k}\right) \rVert \leq \rho < \infty.$ Then, FT-based VI converges according to
\begin{equation*}
\lim_{k \to \infty} \lVert \vali{h}{k} - \val{h} \lVert \leq \frac{\epsilon \rho}{1 - \gamma}.
\end{equation*}
\end{theorem}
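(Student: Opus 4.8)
The plan is to deduce this theorem directly from Lemma~\ref{th:afp2}, identifying the exact Bellman operator $\polfpopt{h}$ with the contraction $\ct$ of that lemma and the composite map $\ctt := \ftcross(\polfpopt{h}(\cdot),\epsilon)$ with its approximation. The operator $\polfpopt{h}$ is assumed contractive with modulus $\gamma$ and fixed point $\val{h}$, and the iterates of the theorem are by definition $\vali{h}{k} = \ctt(\vali{h}{k-1})$, so the two structural hypotheses of the lemma---a contractive $\ct$ together with the recursion generated by $\ctt$---hold. The boundedness hypothesis of the lemma, $\lVert \vali{h}{k} \rVert \leq \rho_1 < \infty$ and consequently $\lVert \polfpopt{h}(\vali{h}{k}) \rVert \leq \rho < \infty$, is imposed directly in the statement of the theorem. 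Thus essentially all of the work is inherited, and the only substantive task is to certify that $\ctt$ satisfies the relative-error condition \eqref{eq:apcontract2}.

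To establish \eqref{eq:apcontract2}, namely $\lVert \ctt(w^\prime) - \polfpopt{h}(w^\prime) \rVert \leq \epsilon \lVert \polfpopt{h}(w^\prime) \rVert$, I would invoke Assumption~\ref{as:boundedrank}, which guarantees that each image $\polfpopt{h}(\vali{h}{k})$ has unfoldings of finite FT rank (bounded by some $r < \infty$). By the discussion in Section~\ref{sec:crossround}, finite-rank unfoldings ensure that the cross-approximation stage inside \ftcross reconstructs $\polfpopt{h}(\vali{h}{k})$ exactly; the subsequent FT-rounding stage, carried out to relative tolerance $\epsilon$, then returns a function differing from $\polfpopt{h}(\vali{h}{k})$ by at most $\epsilon \lVert \polfpopt{h}(\vali{h}{k}) \rVert$ in norm. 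This is precisely the relative-error bound required by Lemma~\ref{th:afp2}. With \eqref{eq:apcontract2} in hand, every hypothesis of that lemma is met, and its conclusion yields $\lim_{k\to\infty} \lVert \vali{h}{k} - \val{h} \rVert \leq \epsilon\rho/(1-\gamma)$, which is the claim.

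I expect the only genuine obstacle to be the verification of the relative-error bound, and more precisely the bookkeeping needed to apply it \emph{along the entire trajectory}. Lemma~\ref{th:afp2} phrases \eqref{eq:apcontract2} as a bound holding for all $w^\prime \in \rspace{h}$, whereas Assumption~\ref{as:boundedrank} supplies finite-rank unfoldings, and hence exact cross approximation, only for the functions $\polfpopt{h}(\vali{h}{k})$ actually generated by the approximate iteration. I would therefore make explicit that the relative accuracy of \ftcross is what is used at each step $k$, so that the inductive chaining inside Lemma~\ref{th:afp2} is valid even though the approximation guarantee is trajectory-dependent rather than uniform. Once this point is stated carefully, the remaining manipulations are exactly the triangle-inequality and geometric-series estimates already carried out in the proof of Lemma~\ref{th:afp2}, and nothing further is needed.
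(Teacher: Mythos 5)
Your proposal matches the paper's own argument: the paper simply notes that Assumption~\ref{as:boundedrank} makes the cross approximation exact up to the rounding tolerance, so the relative-error condition of Lemma~\ref{th:afp2} holds and the result is "immediate" by that lemma with $\ct = \polfpopt{h}$. Your additional remark that the relative-error guarantee is only needed along the generated trajectory (rather than uniformly over $\rspace{h}$) is a careful refinement the paper glosses over, but the route is the same.
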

\rev{ Note that in practice, the functions $\ct(\polvali{h}{k})$ may not have finite rank, but rather a decaying spectrum, and can therefore be well approximated numerically by low-rank functions. In such cases, the results still hold for an error $\epsilon$ incurred at each step; however, we cannot guarantee or indeed check the value of $\epsilon$ obtained by the compression routine. Still, numerical examples suggest that the rank adaptation scheme is able to find sufficiently large ranks to obtain good performance.} 

\subsection{Complexity of FT-based dynamic programming algorithms}\label{sec:complexity}
Suppose that each dimension is discretized into $n$ nodes,
then recall that Algorithm~\ref{alg:rankadapt} consists of a single interpolation of black box function having all FT ranks equal to $r$ that requires $\mathcal{O}(nr^2)$ evaluations of the black box function for cross approximation and a rounding step that requires  $\mathcal{O}(nr^3)$ operations. %

Suppose we use the upwind differencing scheme described in Section~\ref{sec:discretized}.
Then, the complexity of one step of an approximate fixed-point iteration can be characterized as follows.

\begin{proposition}[Complexity of the evaluation of Equation~\eqref{eq:polrhs}]\label{prop:bellmaneval}
Let the evaluation of stage cost $\stagecost{h}$, drift $\drift$, and diffusion $\diffusion$ require $n_{\textrm{op}}$ operations. Let the discretization $\ssd{}{h}$ of the MCA method arise from a tensor product of $n$-node discretizations of each dimension of the state space. Let the resulting transition probabilities $\ptrans{h}(\xo,\xoo|\ue)$ be computed according to the upwind scheme described in Section~\ref{sec:discretized}. Furthermore, let the value function $\polvali{h}{k}$ have ranks $\bvec{r} = [r_0,r_1,\ldots,r_d]$ where $r_0=r_d=1$ and $r_k = r$ for $k = 1 \ldots d-1$. Then, evaluating 
\begin{align*}
\stagecost{h}(\xo,\ue) + \gamma \sum_{\xoo \in \ssd{}{h}} \ptrans{h}(\xo,\xoo|\ue) \polvali{h}{k}(\xoo)
\end{align*}
for a fixed $\xo \in \ssd{}{h}$ and $\ue \in \cs$ requires $\mathcal{O}(n_{\textrm{op}} + d^2nr^2)$ operations.
\end{proposition}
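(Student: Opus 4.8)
The plan is to decompose the evaluation into three pieces --- the discrete stage cost, the transition probabilities, and the weighted sum of value-function evaluations --- and to bound each separately, the dominant contribution coming from the last.

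First I would exploit the \emph{sparsity} of the transition kernel produced by upwind differencing. From each state $\xo$ the directions $\bvec{v}_i$ point only to the $2d$ nearest grid neighbors $\xo + h\bvec{v}_i$, each of which differs from $\xo$ in a single coordinate; hence $\ptrans{h}(\xo,\xoo|\ue)$ is nonzero for at most $2d$ values of $\xoo$, so the nominally $n^d$-term sum $\sum_{\xoo \in \ssd{}{h}} \ptrans{h}(\xo,\xoo|\ue)\polvali{h}{k}(\xoo)$ collapses to $2d$ terms. Assembling all of these transition probabilities, together with the discrete stage cost $\stagecost{h}$ (which reuses the same normalization $\normP^h$), costs $\mathcal{O}(n_{\textrm{op}} + d)$ by Equation~\eqref{eq:transassemble}.

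The key step is bounding the cost of a single evaluation $\polvali{h}{k}(\xoo)$ of the value function in FT format. By Equation~\eqref{eq:ftmat} this evaluation is the product $\core{F}_{1}^{h}(x_1)\cdots\core{F}_{d}^{h}(x_d)$. Each core $\core{F}_{i}^{h}(x_i)$ is a matrix with at most $r^2$ entries, and each entry is a univariate function of the form~\eqref{eq:hatfunc}; evaluating one such function costs $\mathcal{O}(n)$ operations, so evaluating all $d$ cores costs $\mathcal{O}(dnr^2)$. Carrying out the remaining $d-1$ vector--matrix products, each of size $\mathcal{O}(r^2)$, adds only $\mathcal{O}(dr^2)$, so one FT evaluation is $\mathcal{O}(dnr^2)$.

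Finally I would combine the pieces: evaluating the value function at each of the $2d$ neighbors costs $2d \cdot \mathcal{O}(dnr^2) = \mathcal{O}(d^2nr^2)$, forming the weighted sum adds $\mathcal{O}(d)$ multiply--adds, and the stage cost and transition probabilities contribute $\mathcal{O}(n_{\textrm{op}}+d)$, yielding the claimed total $\mathcal{O}(n_{\textrm{op}} + d^2nr^2)$. I do not expect a genuine obstacle, since the argument is careful bookkeeping; the one point that must be made precise is that the sum collapses to $2d$ summands because of the nearest-neighbor structure of the upwind scheme, without which the bound would fail to be polynomial in $d$. A secondary subtlety worth flagging is that the factor $n$ arises solely from evaluating a single univariate nodal function and could be reduced to $\mathcal{O}(1)$ by exploiting the local support of the hat basis, but the stated bound does not require this refinement.
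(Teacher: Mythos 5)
Your proposal is correct and follows essentially the same route as the paper's proof: identify the $2d$ nonzero-probability neighbors from the upwind scheme, invoke the $\mathcal{O}(n_{\textrm{op}}+d)$ cost of assembling the transition probabilities, bound each FT evaluation by $\mathcal{O}(dnr^2)$, and multiply by the $2d$ neighbors to get the dominant $\mathcal{O}(d^2nr^2)$ term. Your accounting of the per-core evaluation cost and the remark on the local support of the hat basis are slightly more detailed than the paper's, but the argument is the same.
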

\begin{proof}
In the specified upwind discretization scheme, there exist $2d$ neighbors to which the transition probabilities are non-zero. Furthermore, in Section~\ref{sec:discretized}, we showed that computing all of these transition probabilities requires $\mathcal{O}(n_{\textrm{op}} + d)$ operations. The evaluation of the cost of each neighbor, $\polvali{h}{k}(\xoo)$, requires $\mathcal{O}(dnr^2)$ evaluations. Since this evaluation is required at $2d$ neighbors, a conservative estimate for this total cost is $\mathcal{O}(d^2nr^2).$ Thus, the result is obtained by observing that the cost is dominated by the computation of the transition probabilities and the evaluation of value function at the neighboring grid points.
\end{proof}

Using Proposition~\ref{prop:bellmaneval} and assuming that for each $\xo$ the minimization over control $\ue$ requires $\mathcal{\kappa}$ evaluations of Equation~\eqref{eq:polrhs}, the following result is immediate.
\begin{theorem}[Computational complexity of the FT-based value iteration algorithm]
Let $\rspace{h}$ be a closed subset of a complete vector space. Let the operator $\polfpopt{h}$ of Equation~\eqref{eq:optval} be a contraction mapping with modulus $\gamma$ and fixed point $\val{h}$. Let the evaluation of stage cost $\stagecost{h}$, drift $\drift$, and diffusion $\diffusion$ corresponding to $\polfpopt{h}$ require $n_{\textrm{op}}$ operations. Let the discretization $\ssd{}{h}$ of the MCA method arise from a tensor product of $n$-node discretizations of each dimension of the state space.

 Define a sequence of functions according the an initial function $\vali{h}{0} \in \rspace{h}$ and the iteration $\vali{h}{k} = \ftcross\left(\polfpopt{h}\left(\vali{h}{k-1}\right),\epsilon\right)$. Assume that for each $\xo \in \xspace{h}$, the minimization over control $\ue \in \cs$ requires $\mathcal{\kappa}$ evaluations of Equation~\eqref{eq:polrhs}. Then, each iteration of FT-based VI involves cross approximation and rounding and requires 
$$
\mathcal{O}\left(dnr^2\kappa \left( n_{\textrm{op}} + d^2nr^2\right) + dnr^3\right)
$$
operations.
\end{theorem}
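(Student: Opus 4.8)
The plan is to decompose the cost of a single FTVI iteration into three nested pieces and then compose the complexity estimates already established in the excerpt: (i) the cost of one evaluation of $\polrhs{h}$, supplied by Proposition~\ref{prop:bellmaneval}; (ii) the cost of one evaluation of the black-box map $\polfpopt{h}(\vali{h}{k})$, which additionally carries out a minimization over $\ue$; and (iii) the number of such black-box evaluations together with the rounding work performed by $\ftcross$ in Algorithm~\ref{alg:rankadapt}. Since each FTVI iteration consists of the single call $\vali{h}{k+1} = \ftcross(\polfpopt{h}(\vali{h}{k}),\epsilon)$, the total iteration cost is exactly (number of black-box evaluations) $\times$ (cost per black-box evaluation) plus the additive cost of the rounding step.

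First, I would recall the cost of $\ftcross$ itself. Approximating a function whose FT ranks are all equal to $r$ on a grid with $n$ nodes per dimension requires $\mathcal{O}(nr^2)$ black-box evaluations to build each core; summing over the $d$ cores gives $\mathcal{O}(dnr^2)$ total black-box evaluations, while the rounding step contributes $\mathcal{O}(dnr^3)$ arithmetic operations that do not depend on the black-box evaluation cost. It is important here to note that the rounding cost is incurred once per iteration, not once per black-box evaluation, so it enters the final bound additively.

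Next, I would bound the cost of one black-box evaluation. The black box is $\polfpopt{h}(\vali{h}{k})(\xo) = \min_{\ue}\polrhs{h}(\xo,\ue,\vali{h}{k})$, and by the standing assumption this minimization uses $\kappa$ evaluations of Equation~\eqref{eq:polrhs}. Proposition~\ref{prop:bellmaneval} bounds each evaluation of $\polrhs{h}$ by $\mathcal{O}(n_{\textrm{op}} + d^2nr^2)$, so a single black-box evaluation costs $\mathcal{O}(\kappa(n_{\textrm{op}} + d^2nr^2))$. Multiplying the $\mathcal{O}(dnr^2)$ black-box evaluations by this per-evaluation cost and adding the $\mathcal{O}(dnr^3)$ rounding term yields $\mathcal{O}(dnr^2\kappa(n_{\textrm{op}} + d^2nr^2) + dnr^3)$, which is the claimed bound.

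The argument is a routine composition of previously proved estimates, so I do not expect a genuine obstacle; the only point demanding care is the bookkeeping of the dimension factors. Specifically, one factor of $d$ arises from the $d$ FT cores constructed by cross approximation, while the $d^2$ appearing inside Proposition~\ref{prop:bellmaneval} comes from the $2d$ upwind neighbors, each of whose value-function evaluations costs $\mathcal{O}(dnr^2)$. Keeping these two sources of $d$-dependence distinct, and ensuring the rounding cost is treated as additive rather than per-evaluation, is the main thing to verify before concluding.
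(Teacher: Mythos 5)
Your proposal is correct and follows exactly the route the paper takes: the paper presents this theorem as "immediate" from Proposition~\ref{prop:bellmaneval} together with the previously stated $\mathcal{O}(nr^2)$-evaluations-per-core and $\mathcal{O}(nr^3)$-per-core rounding costs of \ftcross, and your write-up simply fills in that composition, with the correct bookkeeping of the $d$ factor from the cores and the $d^2$ factor from the $2d$ upwind neighbors each costing $\mathcal{O}(dnr^2)$ to evaluate.
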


We remark that the the computational cost of the proposed FT-based value iteration algorithm grows polynomially with increasing dimensionality. Therefore, this algorithm mitigates the curse of dimensionality as long as the rank $r$ of the problem does not grow exponentially with dimensionality.

\section{Computational results and\\ quadcopter experiments}\label{sec:numexamples}

In this section, we demonstrate the proposed algorithm in a number of challenging examples. First, we demonstrate the algorithm on simple control problems involving linear dynamics, quadratic cost and Gaussian process noise in Section~\ref{sec:lqg}. While these problems are not high dimensional, we experiment with various parameters, such as the amount the noise, to see their effect on various problem variables, such as rank. In this manner, we gain insight into various problem dependent properties including the FT rank. Next, we consider four different dynamics with increasing dimensionality. In Section~\ref{sec:cars}, we consider two models of car-like robots with dimension three and four. These models are nonlinear. In Section~\ref{sec:perch}, we consider a widely-studied perching problem that features nonlinear underactuated dynamics with a seven-dimensional state space that is not affine in control. Finally, in Section~\ref{sec:quad}, we consider a problem instance involving a quadcopter maneuvering through a tight window, using a nonlinear quadcopter model with a six-dimensional state space. We compute a full-state feedback controller, and demonstrate it on a quadcopter flying through a tight window using a motion-capture system for full state estimation. 

The simulation results in this section are obtained using multiple threads of an Intel Xeon CPU clocked at 2.4GHz. We use the Compressed Continuous computation ($C^3$) library~\cite{c3} for FT-based compressed computation, and this library is BSD licensed and available through GitHub. 

A multistart BFGS optimization algorithm, available within $C^3$, is used for generating a control for a particular state within simulation and real-time system operation in Section~\ref{sec:experiments}. In particular for any state $\xoo$ encountered in our simulation/experiment, we use multistart BFGS to obtain the minimizer of Equation~\eqref{eq:dcih}. Within the objective the compressed cost function is evaluated at the corresponding neighboring states, i.e., $\val{h}(\xoo)$ is evaluated such that $\xoo$ are neighbors determined by the MCA discretization of $\xo$.

The low-rank dynamic programming algorithms are provided in a stochastic control module that is released separately, also on GitHub~\cite{c3sc}.

\subsection{Linear-quadratic-Gaussian problems with bounded control}\label{sec:lqg}
In this section, we investigate the effect of state boundary conditions and control bounds on a prototypical control system. The system has a bounded state space, linear dynamics and quadratic cost. However, it also has a bounded control and state space, thereby making analytical solutions difficult to obtain. 

Consider the following stochastic dynamical system
\begin{align*}
    dx_1 &= x_2dt  + \sigma_1 dw_1(t)\\
    dx_2 &= u(t)dt + \sigma_2 dw_2(t).
\end{align*}
This stochastic differential equations represent a physical system with position $x_1$ and velocity $x_2$ with $\ssos = (-2,2)^2.$ The control input to this system is the acceleration $u$, and we consider different lower and upper bounds on the control space $\controlspace = [u_{lb},u_{ub}]$. 
We consider a discounted-cost infinite-horizon problem, with discount $e^{-\frac{t}{10}}$.
The stage cost is
\begin{equation}
\stagecost{}(x,u) = x_1^2 + x_2^2 + u^2,
\end{equation}
and the terminal cost is
\begin{equation}
\termcost{}(x) = 100, \quad x \in \partial \ssos.
\end{equation}

We first solve the problem for various parameter values to gain insight to the problem. Next, we discuss numerical results summarizing the convergence of the algorithm.

\subsubsection{Parameter studies}
We present computational experiments that assess when low-rank cost functions arise and what factors affect ranks. 
Our first computational experiment studies the effects of the control boundary conditions on a problem with absorbing boundaries. This computational experiment is performed over several different $u_{lb},u_{ub}$ combinations and the resulting optimal value functions are shown in Figure~\ref{fig:lqr_absorb_ubound}. We utilized FT-based policy iteration. The Markov chain approximation was obtained using 60 points in each dimension. 
\newcolumntype{C}{>{\centering\arraybackslash} m{4cm} }  
\newcolumntype{D}{>{\centering\arraybackslash} m{3cm} }  
\begin{figure*}
\begin{center}
\setlength{\extrarowheight}{0.1cm}
\begin{tabular}{c}
  \begin{tabular}{m{0.9cm}|m{3.5cm}m{3.5cm}m{3.5cm}}
    $u_{lb} \backslash u_{ub}$ & \centerline{$10$} & \centerline{$4$} & \centerline{$1/2$} \\
    \hline
    $-10$ &
    \vspace{5pt}
    \begin{subfigure}[b]{0.2\textwidth}
      \includegraphics[width=\textwidth]
      {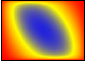}
    \end{subfigure}
    &
    \vspace{5pt}
    \begin{subfigure}[b]{0.2\textwidth}
      \includegraphics[width=\textwidth]{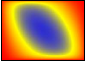}
    \end{subfigure}
    &
    \vspace{5pt}
    \begin{subfigure}[b]{0.2\textwidth}
      \includegraphics[width=\textwidth]{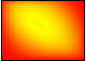}
    \end{subfigure}
    \\
    $-4$
    &
    \begin{subfigure}[b]{0.2\textwidth}
      \includegraphics[width=\textwidth]{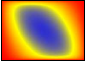}
    \end{subfigure}
    &
    \begin{subfigure}[b]{0.2\textwidth}
      \includegraphics[width=\textwidth]{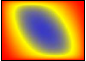}
    \end{subfigure}
    &
    \begin{subfigure}[b]{0.2\textwidth}
      \includegraphics[width=\textwidth]{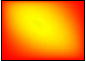}
    \end{subfigure}
    \\
    $-\frac{1}{2}$
    &
    \begin{subfigure}[b]{0.2\textwidth}
      \includegraphics[width=\textwidth]{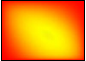}
    \end{subfigure}
    &
    \begin{subfigure}[b]{0.2\textwidth}
      \includegraphics[width=\textwidth]{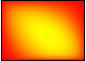}
    \end{subfigure}
    &
    \begin{subfigure}[b]{0.2\textwidth}
      \includegraphics[width=\textwidth]{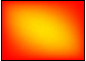}
    \end{subfigure}
  \end{tabular}
  \\
  \begin{subfigure}[b]{0.8\textwidth}
    \quad \quad \quad \quad \quad \quad \quad \quad \includegraphics{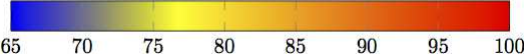}
  \end{subfigure}
\end{tabular}
\caption[Cost functions for the LQG problem with different control bounds and absorbing boundaries.]{Cost functions and ranks of the solution to the LQG problem for varying control bounds $[u_{lb},u_{ub}].$ Diffusion magnitude is $\sigma_1=\sigma_2=1$, and absorbing boundary conditions are used. The FT ranks found through the cross approximation algorithm with rounding tolerance $\roundeps=10^{-7}$ were either 7 or 8. The x-axis of each plot denotes $x_1$ and the y-axis denotes $x_2$.}
\label{fig:lqr_absorb_ubound}
\end{center}
\end{figure*}

Several phenomena are evident in Figure~\ref{fig:lqr_absorb_ubound}. When the range of the valid controls is wide, the value function is able to achieve smaller values, i.e., the blue region (indicating small costs) is larger with a wider control range. This characteristic is expected since the region from which the state can avoid the boundary is larger when more control can be exercised. 
However, the alignment of the value function, with the diagonal stretching from the upper left to the upper right, is the same for all of the test cases. Only the magnitude of the value function changes, and therefore the ranks of the value functions are all either rank 7 or 8. Changing the bounds of the control space does not greatly affect the ranks of this problem.

\begin{figure*}
\begin{center}
\setlength{\extrarowheight}{0.1cm}
  \begin{tabular}{m{0.9cm}|m{3.5cm}m{3.5cm}m{3.5cm}}
    $\sigma_1 \backslash \sigma_2$& \centerline{$10$} & \centerline{$1$} & \centerline{$0.1$} \\
    \hline
    $10$ &
    \vspace{5pt}
    \begin{subfigure}[b]{0.2\textwidth}
      \includegraphics[clip=true,trim=0 0 0 0,width=\textwidth]{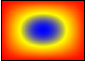}
      \caption{$r=2$}
    \end{subfigure}
    &
    \vspace{5pt}
    \begin{subfigure}[b]{0.2\textwidth}
      \includegraphics[clip=true,trim=0 0 0 0,width=\textwidth]{{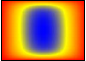}}
      \caption{$r=3$}
    \end{subfigure}
    &
    \vspace{5pt}
    \begin{subfigure}[b]{0.2\textwidth}
      \includegraphics[clip=true,trim=0 0 0 0,width=\textwidth]{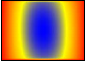}
      \caption{$r=3$}
    \end{subfigure} \\
    $1$ &
    \begin{subfigure}[b]{0.2\textwidth}
      \includegraphics[clip=true,trim=0 0 0 0,width=\textwidth]{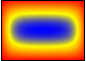}
      \caption{$r=3$}
    \end{subfigure}
    &
    \begin{subfigure}[b]{0.2\textwidth}
      \includegraphics[clip=true,trim=0 0 0 0,width=\textwidth]{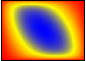}
      \caption{$r=8$}
    \end{subfigure}
    &
    \begin{subfigure}[b]{0.2\textwidth}
      \includegraphics[clip=true,trim=0 0 0 0,width=\textwidth]{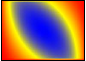}
      \caption{$r=9$}
    \end{subfigure} \\
    $0.1$ &
    \begin{subfigure}[b]{0.2\textwidth}
      \includegraphics[clip=true,trim=0 0 0 0,width=\textwidth]{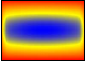}
      \caption{$r=3$}
    \end{subfigure}
    &
    \begin{subfigure}[b]{0.2\textwidth}
      \includegraphics[clip=true,trim=0 0 0 0,width=\textwidth]{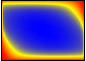}
      \caption{$r=11$}
    \end{subfigure}
    &
    \begin{subfigure}[b]{0.2\textwidth}
      \includegraphics[clip=true,trim=0 0 0 0,width=\textwidth]{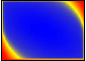}
      \caption{$r=16$}
    \end{subfigure}
  \end{tabular}
\caption[Cost functions and ranks for the LQG problem with different diffusion magnitudes and absorbing boundaries.]{Cost functions and ranks of the solution to the LQG problem for different $\sigma_1,\sigma_2$. We fix $u_{lb} = -3$, $u_{ub} = 3,$ and use absorbing boundary conditions. The FT ranks found through the cross approximation algorithm with rounding tolerance $\roundeps=10^{-7}$ are indicated by the caption for each frame. The x-axis of each plot denotes $x_1$ and the y-axis denotes $x_2$. The color scale is different in each plot to highlight each function's shape.}
\label{fig:lqr_absorb_diff}
\end{center}
\end{figure*}

Next, we consider the effect of the diffusion magnitude on the optimal value function and its rank. The results of this experiment are shown in Figure~\ref{fig:lqr_absorb_diff}. We observe that the diffusion is influential for determining the rank of the problem.
One striking pattern seen in Figure~\ref{fig:lqr_absorb_diff} is that as the diffusion decreases, the blue region grows in size. The blue region indicates low cost, and it intuitively represents the area from which a system will not enter the boundary. In other words, the system is more controllable when there is less noise. When the noise is very large, for example in the upper left panel, there is a large chance that the Brownian motion can push the state into the boundary. This effect causes the terminal cost to propagate further into the interior of the domain. As $\sigma_2$ decreases, there is less noise affecting the acceleration of the state, and the system remains controllable for a wide range of velocities. However, since the diffusion magnitude is large in the equation for velocity, the value function is only small when the position is close to the origin. In the area close to the origin there is less of a chance for the state to be randomly pushed into the absorbing region. Finally, when both diffusions are small, the system is controllable from a far greater range of states, as indicated by the lower right panel.

The ranks of the value functions follow the same pattern as controllability. The ranks are small when the diffusion terms are large, and high then the diffusion terms are small. When the features of the function are aligned with the coordinate axes, the ranks remain low. As the function becomes more complex due to small diffusion terms, the boundaries between guaranteed absorption and nonabsorption begin to have more complex shapes, resulting in increased ranks. 

\begin{figure*}
\begin{center}
\setlength{\extrarowheight}{0.1cm}
\begin{tabular}{c}
  \begin{tabular}{m{0.9cm}|m{3.5cm}m{3.5cm}m{3.5cm}}
    $u_{lb} \backslash u_{ub}$ & \centerline{$10$} & \centerline{$4$} & \centerline{$1/2$} \\
    \hline
    $-10$ &
    \vspace{5pt}
    \begin{subfigure}[b]{0.2\textwidth}
      \includegraphics[clip=true,trim=0 0 0 0,width=\textwidth]{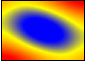}
    \end{subfigure}
    &
    \vspace{5pt}
    \begin{subfigure}[b]{0.2\textwidth}
      \includegraphics[clip=true,trim=0 0 0 0,width=\textwidth]{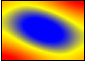}
    \end{subfigure}
    &
    \vspace{5pt}
    \begin{subfigure}[b]{0.2\textwidth}
      \includegraphics[clip=true,trim=0 0 0 0,width=\textwidth]{lqg_rb_12.pdf}
    \end{subfigure}
    \\
    $-4$ &
    \vspace{5pt}
    \begin{subfigure}[b]{0.2\textwidth}
      \includegraphics[clip=true,trim=0 0 0 0,width=\textwidth]{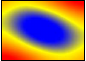}
    \end{subfigure}
    &
    \vspace{5pt}
    \begin{subfigure}[b]{0.2\textwidth}
      \includegraphics[clip=true,trim=0 0 0 0,width=\textwidth]{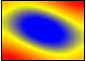}
    \end{subfigure}
    &
    \vspace{5pt}
    \begin{subfigure}[b]{0.2\textwidth}
      \includegraphics[clip=true,trim=0 0 0 0,width=\textwidth]{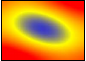}
    \end{subfigure}
    \\
    $-\frac{1}{2}$ &
    \vspace{5pt}
    \begin{subfigure}[b]{0.2\textwidth}
      \includegraphics[clip=true,trim=0 0 0 0,width=\textwidth]{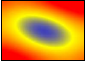}
    \end{subfigure}
    &
    \vspace{5pt}
    \begin{subfigure}[b]{0.2\textwidth}
      \includegraphics[clip=true,trim=0 0 0 0,width=\textwidth]{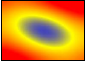}
    \end{subfigure}
    &
    \vspace{5pt}
    \begin{subfigure}[b]{0.2\textwidth}
      \includegraphics[clip=true,trim=0 0 0 0,width=\textwidth]{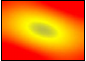}
    \end{subfigure}
  \end{tabular}
 \\
  \begin{subfigure}[b]{0.8\textwidth}
    \quad \quad \quad \quad \quad \quad \quad \quad \includegraphics{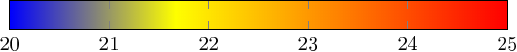}
  \end{subfigure}
\end{tabular}
\caption[Cost functions for the LQG problem with different control bounds and reflecting boundaries.]{Reflecting boundary conditions with cost functions and ranks for different $[u_{lb},u_{ub}]$, and the diffusion is set to $\sigma_1=\sigma_2=1$. The FT ranks found through the cross approximation algorithm with rounding tolerance $\roundeps=10^{-7}$ were 3 or \rev{4} for all cases.}
\label{fig:lqr_reflect_ubound}
\end{center}
\end{figure*}
Next, we investigate the value functions associated with reflecting boundary conditions. The results of this experiment are shown in Figure~\ref{fig:lqr_reflect_ubound}. The results indicate that neither that value functions nor their ranks are much affected by the bounds of the control space.  Notably, the value functions in all examples are of rank 3, and they all appear to be close to a quadratic function. We note that, for a classical LQR problem the solution is quadratic, and therefore also has a value function with rank 3. Thus, in some sense, reflecting boundary conditions more accurately represent a the classical problem with linear dynamics, quadratic cost, and Gaussian process noise, but with no state or input constraints.

\begin{figure*}
\begin{center}
\setlength{\extrarowheight}{0.1cm}
  \begin{tabular}{m{0.9cm}|m{3.5cm}m{3.5cm}m{3.5cm}}
    $\sigma_1 \backslash \sigma_2$ & \centerline{$10$} & \centerline{$1$} & \centerline{$0.1$} \\
    \hline
    $10$ &
    \vspace{5pt}
    \begin{subfigure}[b]{0.2\textwidth}
      \includegraphics[clip=true,trim=0 0 0 0,width=\textwidth]{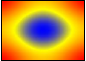}
      \caption{$r=1$}
    \end{subfigure}
    &
    \vspace{5pt}
    \begin{subfigure}[b]{0.2\textwidth}
      \includegraphics[clip=true,trim=0 0 0 0,width=\textwidth]{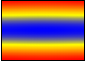}
      \caption{$r=1$}
    \end{subfigure}
    &
    \vspace{5pt}
    \begin{subfigure}[b]{0.2\textwidth}
      \includegraphics[clip=true,trim=0 0 0 0,width=\textwidth]{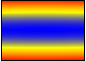}
      \caption{$r=1$}
    \end{subfigure} \\
    $1$ &
    \begin{subfigure}[b]{0.2\textwidth}
      \includegraphics[clip=true,trim=0 0 0 0,width=\textwidth]{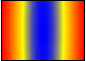}
      \caption{$r=1$}
    \end{subfigure}
    &
    \begin{subfigure}[b]{0.2\textwidth}
      \includegraphics[clip=true,trim=0 0 0 0,width=\textwidth]{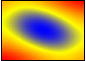}
      \caption{$r=3$}
    \end{subfigure}
    &
    \begin{subfigure}[b]{0.2\textwidth}
      \includegraphics[clip=true,trim=0 0 0 0,width=\textwidth]{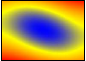}
      \caption{$r=4$}
    \end{subfigure} \\
    $0.1$ &
    \begin{subfigure}[b]{0.2\textwidth}
      \includegraphics[clip=true,trim=0 0 0 0,width=\textwidth]{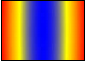}
      \caption{$r=1$}
    \end{subfigure}
    &
    \begin{subfigure}[b]{0.2\textwidth}
      \includegraphics[clip=true,trim=0 0 0 0,width=\textwidth]{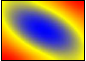}
      \caption{$r=6$}
    \end{subfigure}
    &
    \begin{subfigure}[b]{0.2\textwidth}
      \includegraphics[clip=true,trim=0 0 0 0,width=\textwidth]{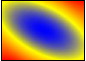}
      \caption{$r=11$}
    \end{subfigure}
  \end{tabular}
\caption[Cost functions and ranks for the LQG problem with different diffusion magnitudes and reflecting boundaries.]{Cost functions and ranks of the solution to the LQG problem for different $\sigma_1,\sigma_2$. We fix $u_{lb} = -3$, $u_{ub} = 3,$ and use reflecting boundary conditions. The FT ranks found through the cross approximation algorithm with rounding tolerance $\roundeps=10^{-7}$ are indicated by the caption for each frame. The x-axis of each plot denotes $x_1$ and the y-axis denotes $x_2$. The color scale is different in each plot to highlight each function's shape.}
\label{fig:lqr_reflect_diff}
\end{center}
\end{figure*}

Next, we consider the effect of the magnitude of the diffusion terms on the optimal value function and its rank, this time in problem instances involving reflecting boundary conditions. The results of this experiment are shown in Figure~\ref{fig:lqr_reflect_diff}, where it is evident that diffusion magnitude is again influential for determining the rank of this problem.
The shapes and ranks of these value functions are similar to those in the case of absorbing boundary conditions. They follow the same pattern of increasing rank as the diffusion magnitude decreases.

\subsubsection{Convergence}
In this section, we focus on the convergence properties of the proposed algorithms in computational experiments. We discuss convergence in terms of {\em (i)} the norm of the value function, {\em (ii)} the difference between iterates, and {\em (iii)} the fraction of states visited during each iteration. We consider both the FT-based value iteration and the FT-based policy iteration algorithms. 

Let us first consider the FT-based value iteration given in Algorithm~\ref{alg:ttvi}. We use FT tolerances of $\crossdelta=\roundeps=10^{-7}$ as input to the algorithm. In Figure~\ref{fig:diagvi}, we compare the convergence and the computational cost for solving the stochastic optimal control problem for varying discretization of the Markov chain approximation method, specifically discretization sizes of $n=25$, $n=50$, and $n=100$ points along each dimension. Note that this corresponds to $25^2=625$, $50^2=2,500$, and $100^2=10,000$ total number of discrete states, respectively. 
\begin{figure*}
\begin{center}
\begin{subfigure}[t]{0.32\textwidth}
  \includegraphics[height=0.80\textwidth,clip=true,trim=0 0 0 0,width=\textwidth]{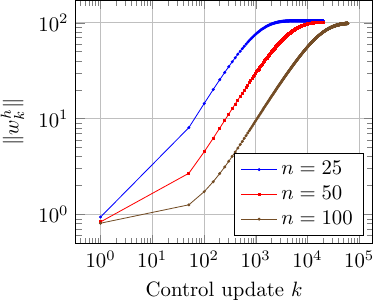}
\end{subfigure}
\begin{subfigure}[t]{0.32\textwidth}
  \includegraphics[height=0.80\textwidth,clip=true,trim=0 0 0 0,width=\textwidth]{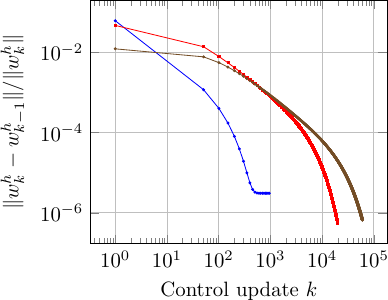}
\end{subfigure}
\begin{subfigure}[t]{0.32\textwidth}
  \includegraphics[clip=true,trim=0 0 0 0,width=\textwidth]{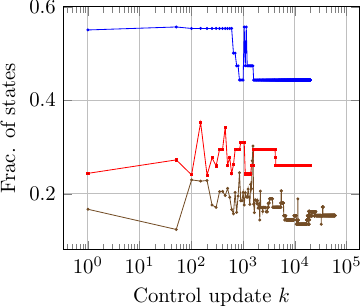}
\end{subfigure}
\caption[FT-based value iteration diagnostic plots for the LQG problem with reflecting boundaries]{FT-based value iteration diagnostic plots for the linear quadratic problem with reflecting boundaries, $u(t) \in [-1,1]$, and $\sigma_1=\sigma_2=1.$ The left panel shows that for all discretization levels the value function norm converges to approximately the same value. The middle panel shows the relative difference between value functions of sequential iterations. The right panel shows that the fraction of states evaluated within cross approximation decreases with increasing discretization resolution.}
\label{fig:diagvi}
\end{center}
\end{figure*}

The results demonstrate that approximately the same value function norm is obtained regardless of discretization. However, convergence is much faster for coarse discretizations, hence the the one-way multigrid algorithm may be useful. Furthermore, we observe that the low-rank nature of the problem emerges because the fraction of discretized states evaluated during cross approximation for each iteration decreases with increasing grid resolution. For reference, the standard value iteration algorithm would have the fraction of state space evaluated be $1$ for each iteration, as the standard value iteration would evaluate all discrete states. Thus, even in this two-dimensional example, the low-rank algorithm achieves between two to five times computational gains for each iteration, when compared to the standard value iteration algorithm.

Next, we repeat this experiment for FT-based policy iteration algorithm given by Algorithm~\ref{alg:ttpi}. We use 10 sub-iterations to solve for the value function for every policy, and we use FT tolerances of $\crossdelta=\roundeps=10^{-7}$ as input to the algorithm.  Figure~\ref{fig:diagpi} shows the results for the value function associated with each control update. The sub-iteration cost functions are not plotted. Note that, as expected, far fewer iterations are required for convergence. We observe similar solution quality when compared to the FT-based value iteration algorithm; however, we observe that convergence occurs using almost an order of magnitude fewer number of iterations. \rev{The timings using 8 threads were $0.75$s, $0.96$s and $1.32$s per control update for 25 $\times$ 25, 50 $\times$ 50, and 100 $\times$ 100 grids, respectively. These timings include 10 sub-iterations in PI along with a optimization update. So for example,  1000 iterations of the medium grid takes approximately 960 seconds or 16 minutes, while for the coarse grid it would take 12 minutes. These timings motivate the need for multilevel schemes since the coarse grid converges in fewer iterations to almost the exact solution.}
\begin{figure*}
\begin{center}
\begin{subfigure}[t]{0.30\textwidth}
  \includegraphics[height=0.80\textwidth,clip=true,trim=0 0 0 0,width=\textwidth]{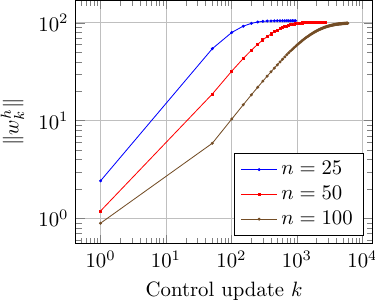}
\end{subfigure}
\begin{subfigure}[t]{0.30\textwidth}
  \includegraphics[height=0.80\textwidth,clip=true,trim=0 0 0 0,width=\textwidth]{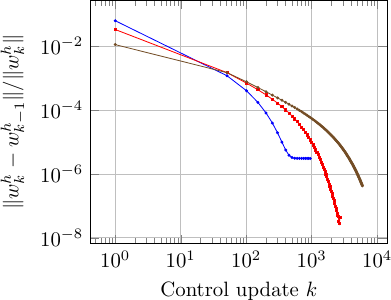}
\end{subfigure}
\begin{subfigure}[t]{0.30\textwidth}
  \includegraphics[clip=true,trim=0 0 0 0,width=\textwidth]{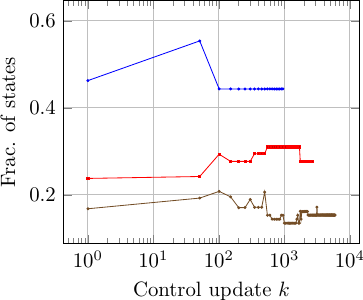}
\end{subfigure}
\caption[FT-based policy iteration diagnostic plots for the LQG problem with reflecting boundaries]{FT-based policy iteration diagnostic plots for the linear quadratic problem with reflecting boundaries, $u(t) \in [-1,1]$, and $\sigma_1=\sigma_2=1.$ The left panel shows that for all discretization levels the value function norm converges to same value. The middle panel shows the relative difference between value functions of sequential iterations. The right panel shows that the fraction of states evaluated decreases with increasing discretization.}
\label{fig:diagpi}
\end{center}
\end{figure*}

To summarize, from these experiments we observe the following: 
{\em (i)} the rank depends on the intrinsic complexity of the value function, which seems to increase with decreasing process noise when the state constraints are active; 
\rev{{\em (ii)} the rank does not seem to change with varying control bounds;}
{\em (iii)} even in these two dimensional problems we obtain substantial computational gains: the proposed algorithms evaluate two to five times less number of states to reach a high quality solution, when compared to standard dynamic programming algorithms; 
{\em (iv)} we observe that the FT-based policy iteration algorithm converges almost an order of magnitude faster than the FT-based value iteration algorithm in these examples. 

\rev{
\subsubsection{Discrete vs.\ continuous tensors  and underspecified ranks}

As described in Section~\ref{sec:crossround} the TT-based algorithm of~\cite{Gorodetsky2015b} can be interpreted as a specific realization of the continuous framework described in this paper that uses piecewise-constant (rather than piecewise-linear) reconstructions of each univariate fiber. For the case when the rank-adaptation scheme finds an upper bound to the exact ranks, we have found that the performance of these two approaches is similar.

However, for the more realistic case where we utilize low-rank \textit{approximations} of some high-rank function, we have noticed an accuracy benefit by using the more accurate continuous representation. As an example, we again consider the LQG problem with reflecting boundary conditions, $\sigma_1 = \sigma_2 = 1$, and $|u| \leq 1$. For this problem, we computed a reference solution on a 100 $\times$ 100 grid and found the rank to be four. Then we ran cross-approximation with fixed ranks of 2 and 3 for grids of size 25 $\times$ 25, 50 $\times$ 50, and 100 $\times$ 100 using both piecewise constant and piecewise linear reconstruction of the fibers. We compare the ratios of the root-mean-squared error of the resulting approximations at the grid's \textit{nodal locations}. Table~\ref{tab:discrete_vs_continuous} summarizes the results.

\begin{table}
  \centering
  \caption{Nodal RMSE ratios ($\text{RMSE}_{FT} / \text{RMSE}_{TT}$) between the piecewise-linear functional fiber reconstructions and the discrete TT approach of~\cite{Gorodetsky2015b}.}\label{tab:discrete_vs_continuous}
  \begin{tabular}{|cccc|}
    \hline
    Rank / Grid     & 25 $\times$ 25 & 50 $\times$ 50 & 100 $\times$ 100 \\ 
    \hline
    \hline
    2              & 0.687   &  0.517  & 0.504  \\
    3              & 2.222   &  0.071  & 0.226 \\
    \hline
  \end{tabular}
\end{table}

In this table we see that, for all but one case, the error of the continuous approach was smaller than the error of the discrete approach. The case where the error of the discrete approach was lower occurred in the regime of small rank underestimation and a coarse grid. This discrepancy may be attributed to the fact that imposing even a piecewise-linear reconstruction on a coarse grid may lead to overfitting. For all other cases, the continuous error was smaller. The difference in nodal values of these reconstructions is entirely due to a different inner product used within cross-approximation, and these results suggest that using the inner product resulting from piecewise-linear functions leads to higher accuracy when ranks have been underestimated.

}

\rev{
  \subsubsection{Scaling with dimension}


Next we test the ranks of value functions 
functions with increasing dimension.
 In particular we consider a set of double integrators 
\begin{align*}
dx_{2i-1} &= x_{2i} + \sigma_{2i-1}dw_{2i-1}(t) \\
dx_{2i} &= u_i + \sigma_{2i}dw_{2i}(t)
\end{align*}
for $i = 1,\ldots, \du$. Note that the state space has dimension $\dx = 2\du$. For the cost function we use $g(x,u) = \sum_{i=1}^{\dx}x_i^2 + \sum_{i=1}^{\du}u_i^2$, and we discretize each dimension into $n=50$ nodes.  

For an unbounded state space the problem would completely decouple into $\du$ separate double integrators of double integrators, and thus the value function would also decouple into a sum of quadratics of neighboring variables
\begin{equation*}
\val{}(x_1,\ldots,x_d) = \sum_{i=1}^{d/2} q_i(x_{2i-1},x_{2i}), 
\end{equation*}
where $q_i$ are bivariate quadratic functions,\footnote{We assume that the coefficients of the terms are one for simplicity of presentation} i.e., $q_i(z,z^{\prime}) = z^2 + zz^{\prime} + z^{\prime^2}$. In this case, we can show that the maximum rank of a low-rank decomposition will be $3$. This fact can be verified from the following decomposition,
\begin{align*}
\val{}(x_1,& \ldots,x_d) = \left[ x_1^2 \ \ x_1 \ \ 1 \right]
\left[ 
\begin{array}{cc}
1 & 0 \\
x_2 & 0 \\
x_2^2 & 1 \\
\end{array}
\right]
\left[
\begin{array}{ccc}
1 & 0 & 0 \\
x_3^2 & x_3 & 1 
\end{array}
\right] \times \\
& \left[ 
\begin{array}{cc}
1 & 0 \\
x_4 & 0 \\
x_4^2 & 1 \\
\end{array}
\right]
\left[
\begin{array}{ccc}
1 & 0 & 0 \\
x_5^2 & x_5 & 1 
\end{array}
\right] 
\times \cdots \times
\left[ 
\begin{array}{c}
1  \\
x_d \\
x_d^2 \\
\end{array}
\right] .
\end{align*}
Thus, the FT ranks alternate between 2 and 3, i.e., $\bvec{r} = (1, 3, 2, 3, 2, 3, 2, 3, \ldots, 1)$, and therefore, for unbounded domains, a low-rank decomposition can efficiently capture decoupling between variables. 

It is less clear, however, how the imposition of reflecting boundary conditions should affect the degree of coupling in this problem. We explore this question numerically, by imposing a bounded state space $x_i \in [-2,2]^{\dx}$ with reflecting boundary conditions. We also use the tolerance $\crossdelta=10^{-5}$ and $\roundeps=10^{-7}$. 
Table~\ref{tab:lqg_decoupled} shows that we discover the same alternating rank pattern as we expected in the unbounded domain problem. Furthermore, we see that the rank does not grow quickly with dimension. The maximum rank increases by at most one with each increment in the dimension of the state space. 

\begin{table}
\centering
\caption{Ranks of decoupled double integrators in $d$ dimensions}
\label{tab:lqg_decoupled}
\begin{tabular}{|cc|}
\hline
Dimension & Ranks \\
\hline
\hline
2 & (1,3,1) \\
4 & (1, 4, 3, 3, 1) \\ 
6 & (1, 4, 4, 6, 4, 4, 1) \\ 
8 & (1, 4, 4, 6, 4, 6, 4, 4, 1) \\ 
10 & (1, 5, 4, 7, 4, 7, 4, 7, 4, 6, 1)  \\
12 & (1, 4, 4, 7, 4, 7, 4, 7, 4, 7, 4, 5, 1) \\
\hline
\end{tabular}
\end{table}

}

\subsection{Car-like robots maneuvering in minimum time}\label{sec:cars}
Next, we consider two examples, each modeling car-like robots. The first example is a standard Dubins vehicle. The Dubins vehicle dynamics is nonlinear, nonholonomic, and has a three-dimensional state space. The second example extends the Dubins vehicle dynamics to vary speed; and at high speeds the vehicle understeers. The resulting system is also nonlinear and nonholonomic, and the state space is four dimensional. 

For all of the examples in this section, we use the FT-based policy iteration algorithm given by Algorithm~\ref{alg:ttpi} with $n_{fp}=10$, cross approximation and rounding tolerances set to $10^{-5}$.

\subsubsection{Dubins vehicle}\label{sec:dubin}

Consider the following dynamics:
\begin{align*}
dx &= \cos(\theta)dt + dw_1(t)\\
dy &= \sin(\theta)dt + dw_2(t)\\
d\theta &= u(t)dt + 10^{-2}dw_3(t),
\end{align*}
which is the standard Dubins vehicle dynamics with process noise added to each state. 
Consider bounded state space: $x \in (-4,4)$, $y \in (-4,4)$, and $\theta \in [-\pi, \pi]$. The control space consists of three points $\cs = \{-1,0,1\}$. Boundary conditions are absorbing for the position dimensions ($x,y$) and periodic for the angle $\theta$. An absorbing region is specified at the origin with a width of $0.5$, and the terminal costs are
\rev{
\[
\termcost{}(x,y,\theta) = \left\{
\begin{array}{cc}
0  &  \textrm{for } (x,y) \in [-0.25,0.25]^2 \\
10 & \textrm{for } |x| \geq 4 \textrm{ or } |y| \geq 4
\end{array}
\right.
\]
}

To compute a time-optimal control, the stage cost is set to \[\stagecost{}(x,u) = 1.\] The solution is obtained using one-way multigrid as discussed in Section~\ref{sec:multigrid}. First, one hundred steps of FT-based policy iteration are used with each dimension discretized into $n=25$ points. 
Then the result is interpolated onto a grid discretized into $n=50$ points per dimension,
and the problem is solved with fifty steps of FT-based policy iteration. This multigrid procedure is repeated for $n=100$ and $n=200$. 

A simulation of the resulting feedback controller for several initial conditions is shown in Figure~\ref{fig:dubinsim}.
\begin{figure*}
\begin{center}
\begin{subfigure}[t]{0.3\textwidth}
  \includegraphics[clip=true,trim=0 0 0 0,width=\textwidth]{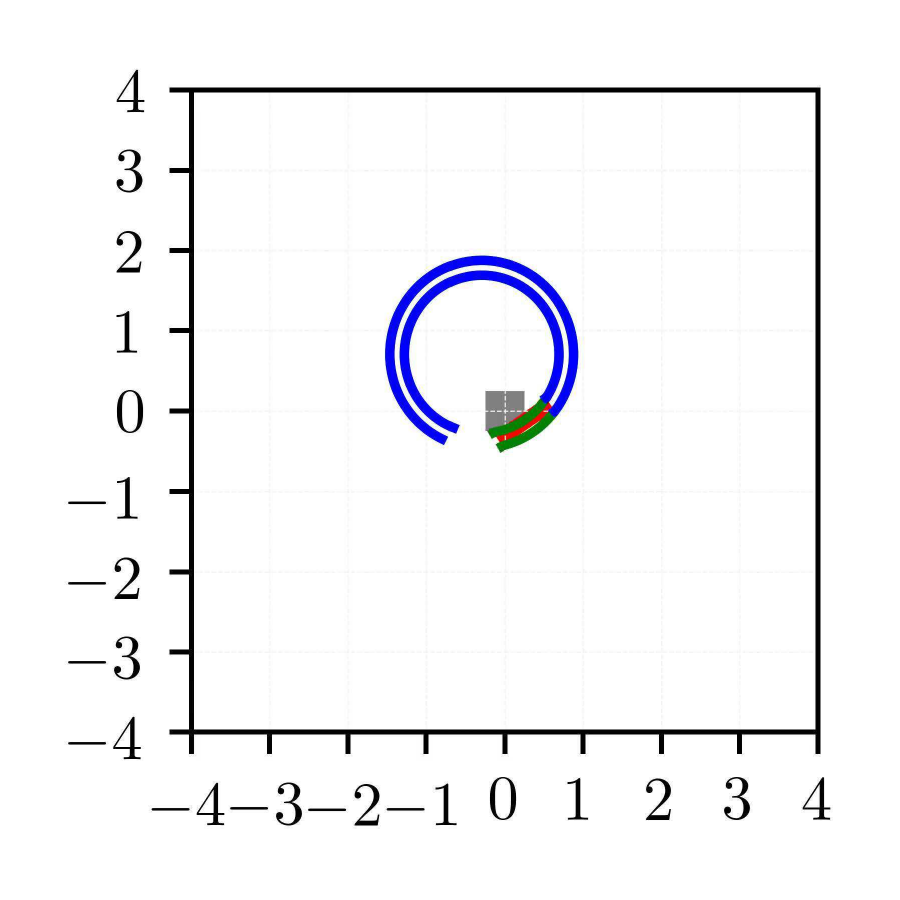}
\end{subfigure}\qquad
\begin{subfigure}[t]{0.3\textwidth}
  \includegraphics[clip=true,trim=0 0 0 0,width=\textwidth]{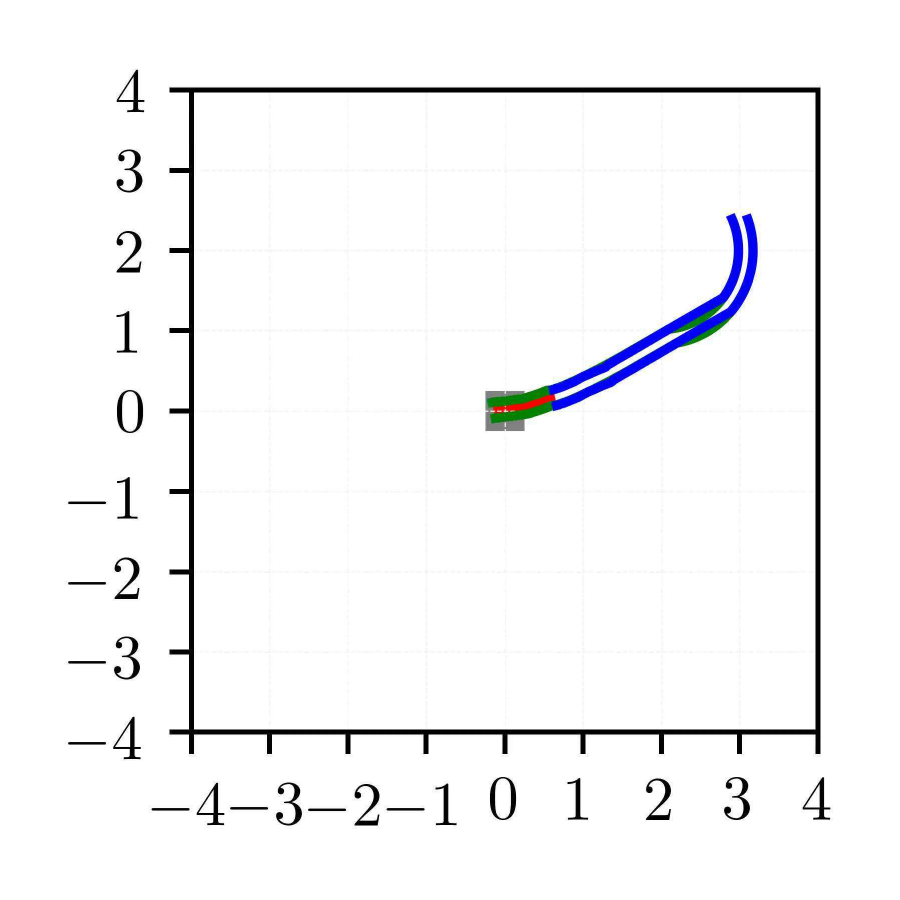}
\end{subfigure}\qquad
\begin{subfigure}[t]{0.3\textwidth}
  \includegraphics[clip=true,trim=0 0 0 0,width=\textwidth]{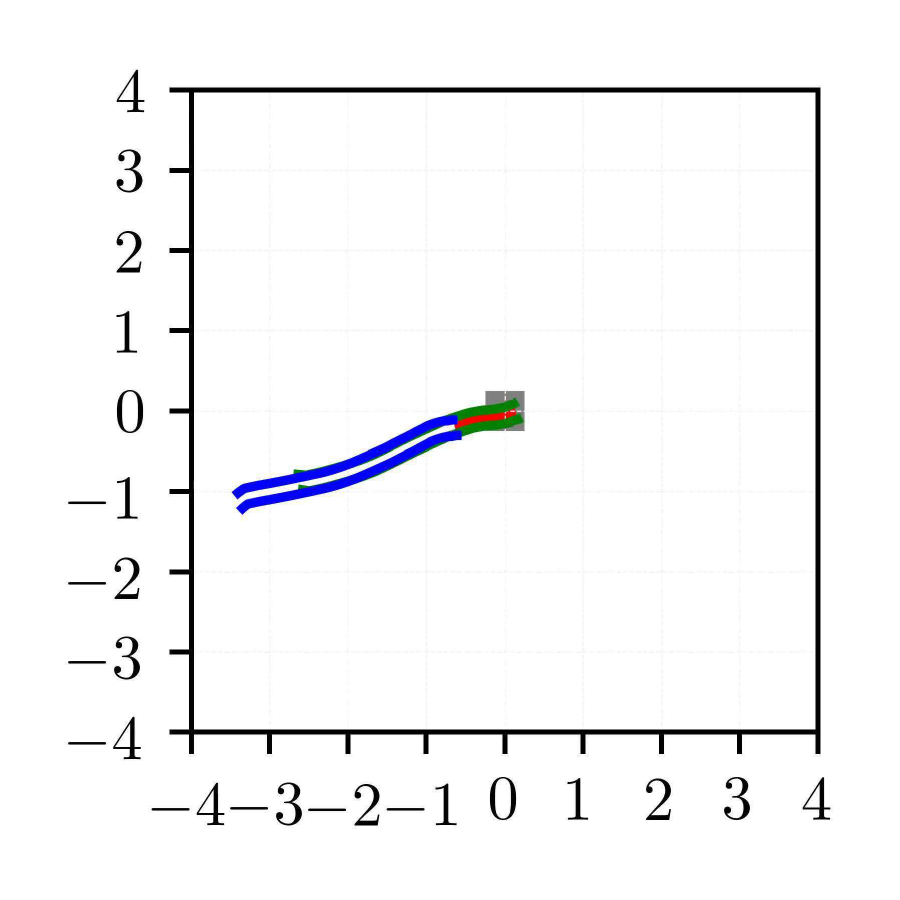}
\end{subfigure}
\caption[Trajectories of the Dubins car for three initial conditions.]{Trajectories of the Dubins car for three initial conditions. Panels show the car when it arrives in the absorbing region. \rev{The grey box is the target region, the red rectangle represents the final position of the car when it enters the absorbing region, and the parallel blue lines represent the trajectory of the car from each of the initial conditions.}}
\label{fig:dubinsim}
\end{center}
\end{figure*}

Convergence plots are shown in Figure~\ref{fig:dubinconverge}. It is worth noting at this point that these plots demonstrate one of the advantages of the FT framework: since the value function is represented as function rather than an array, we can compare the norms of functions represented with different discretizations. In fact, the upper left panel of Figure~\ref{fig:dubinconverge} demonstrates that the norm \textit{continuously} decreases when the discretization is refined.

Furthermore, these results suggest that the solution to this problem, with an accuracy due to rounding of $\roundeps=10^{-5}$, indeed has FT rank 12. This suggestion is supported by the fact that once the grid is refined enough, the fraction of states evaluated decreases, but the maximum rank levels off at 12. 
Note that, the total number of states changes throughout the iterations depending on the discretization level, \ie for $n=25, 50, 100, 200$, we have that the total number of states is $25^3=15,625$, $50^3=125,000$, $100^3 = 1,000,000$ and $200^3 = 8,000,000$. That is, at $n=200$, the total number of states reaches 8 million. 

The lower right panel shows the fraction of states evaluated at various iterations. We observe that when $n=100$, we evaluate only about 1\% of the states in each iteration. This number improves when $n = 200$. Hence, the FT-based algorithm provides an order of magnitude computational savings in terms of number of states evaluated, when compared to standard dynamic programming algorithms, even in this three-dimensional problem. 

\rev{Solving the Dubin's car using 8 threads required 4 minutes for $n=25$, 2.5 minutes for $n=50$, $4$ minutes for $n=100$, and $3$ minutes for $n=200$.}

\begin{figure*}
\begin{center}
\begin{subfigure}[t]{0.43\textwidth}
  \includegraphics[clip=true,trim=0 0 0 0,width=\textwidth]{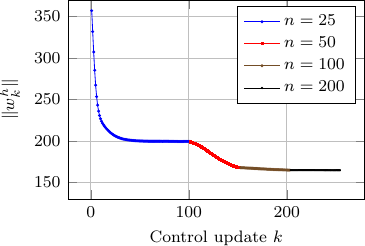}
\end{subfigure}\qquad
\begin{subfigure}[t]{0.43\textwidth}
  \includegraphics[clip=true,trim=0 0 0 0,width=\textwidth]{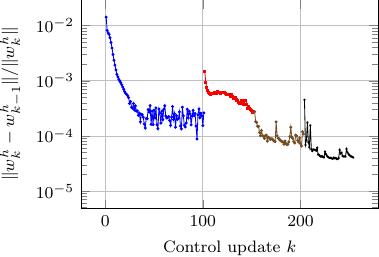}
\end{subfigure}
\begin{subfigure}[t]{0.43\textwidth}
  \includegraphics[clip=true,trim=0 0 0 0,width=\textwidth]{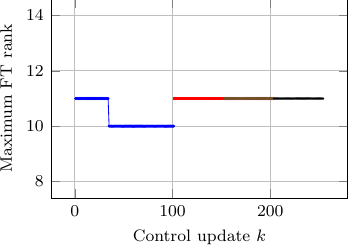}
\end{subfigure}\qquad
\begin{subfigure}[t]{0.43\textwidth}
  \includegraphics[clip=true,trim=0 0 0 0,width=\textwidth]{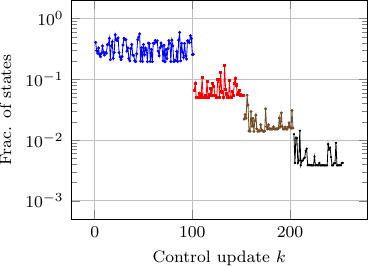}
\end{subfigure}
\caption{One-way multigrid for solving the Dubins car control problem.}
\label{fig:dubinconverge}
\end{center}
\end{figure*}
\subsubsection{Understeered car}
In this section, we consider a model that extends the Dubins vehicle. In this model, the speed of the vehicle is another state that can be controlled. If the speed is larger the vehicle starts to understeer, modeling skidding behavior. 
The states $(x,y,\theta,v)$ are now the $x$-position, the $y$-position, orientation, and velocity, respectively. The control for steering angle is $u_1(t) \in [-15\frac{\pi}{180},15\frac{\pi}{180}]$ and for acceleration is $u_2(t) \in [-1,1]$. \rev{We optimize over all controls in the tensor-product set $[-15\frac{\pi}{180}, 0, 15\frac{\pi}{180}] \times [-1, 0, 1]$.} The dynamical system is described by 
\begin{align*}
dx &= v \cos(\theta) dt + dw_1(t) \\
dy &= v \sin(\theta)dt + dw_2(t) \\ 
d\theta &= \frac{1}{1 + (v/v_c)}\frac{v}{L}\tan(u_1(t))dt + 10^{-2}dw_3(t) \\
dv &= \alpha u_2(t)dt + 10^{-2}dw_4(t)
\end{align*}
where $v_c = 8$ m/s is the characteristic speed, $L = 0.2$ m is the length of the car, and $\alpha=2$ is a speed control constant. The boundary conditions are absorbing for the the positions $x$ and $y$, periodic for $\theta$, and reflecting for $v$. The space for the positions and orientations are identical to that of the Dubins vehicle. The velocity is restricted to forward with $v \in [3,5].$ The stage cost is altered to push the car to the center
\begin{equation*}
\stagecost{}(x,y,\theta,v) = 1 + x^2 + y^2,
\end{equation*}
and we have expanded the absorbing region to have width 1. The terminal cost is the same as for the Dubins vehicle with an absorbing region at the origin of the $x,y$ plane. 

We remark that the dynamics of this example are {\em not} affine in control input. \rev{While they can be made affine by a simple transformation of variables on $u_1$, this problem still would not fit into many standard frameworks that solve a corresponding linear HJB equation~\cite{Horowitz2014,Yang2014} because of both the bounded state and arbitrary noise specifications.}

The trajectories for various starting locations are shown in Figure~\ref{fig:undersim}.
\begin{figure*}
\begin{center}
\begin{subfigure}[t]{0.3\textwidth}
  \includegraphics[clip=true,trim=0 0 0 0,width=\textwidth]{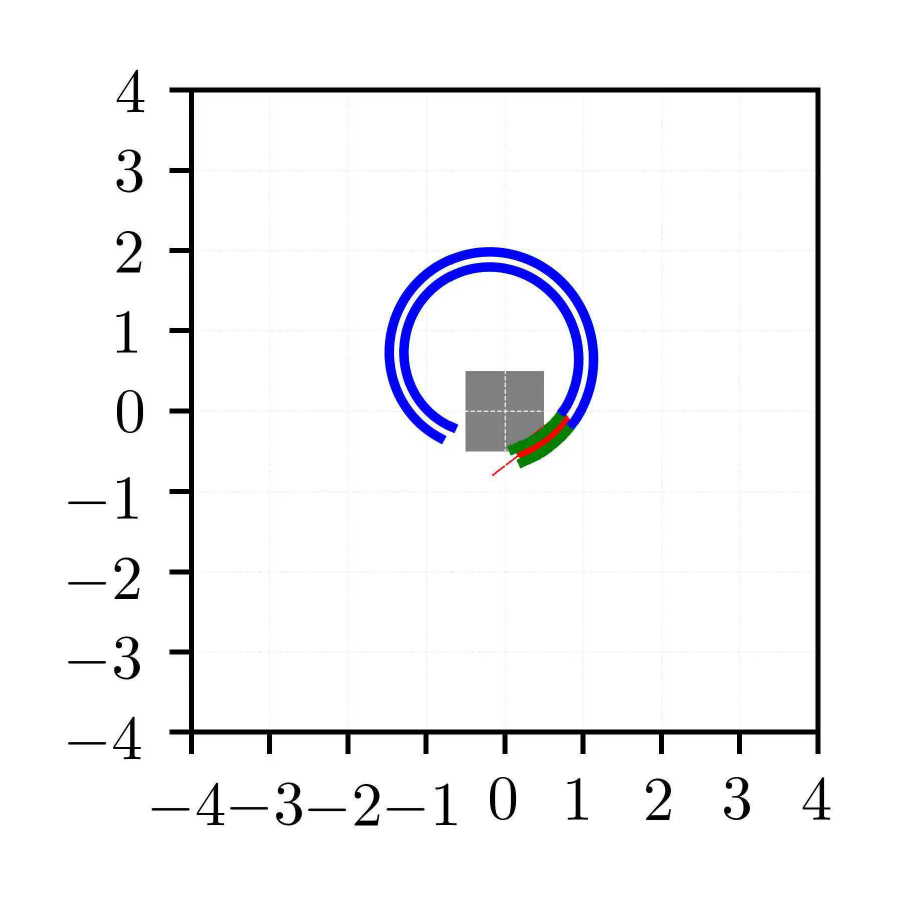}
\end{subfigure}\qquad
\begin{subfigure}[t]{0.3\textwidth}
  \includegraphics[clip=true,trim=0 0 0 0,width=\textwidth]{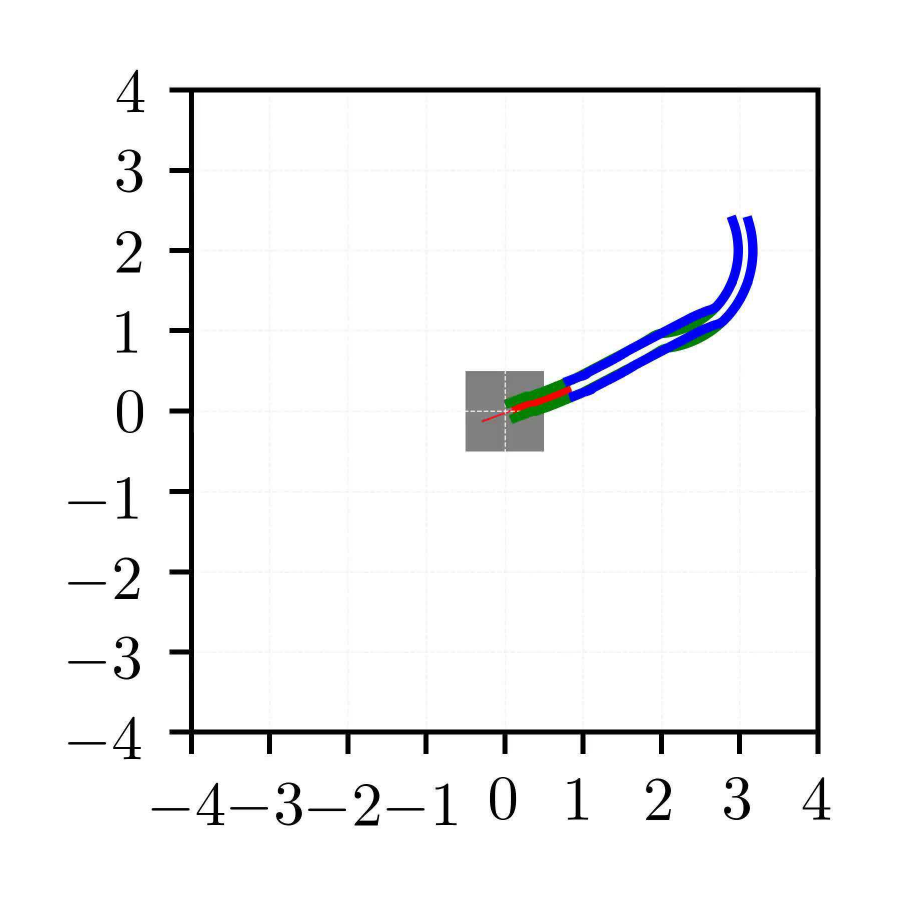}
\end{subfigure}\qquad
\begin{subfigure}[t]{0.3\textwidth}
  \includegraphics[clip=true,trim=0 0 0 0,width=\textwidth]{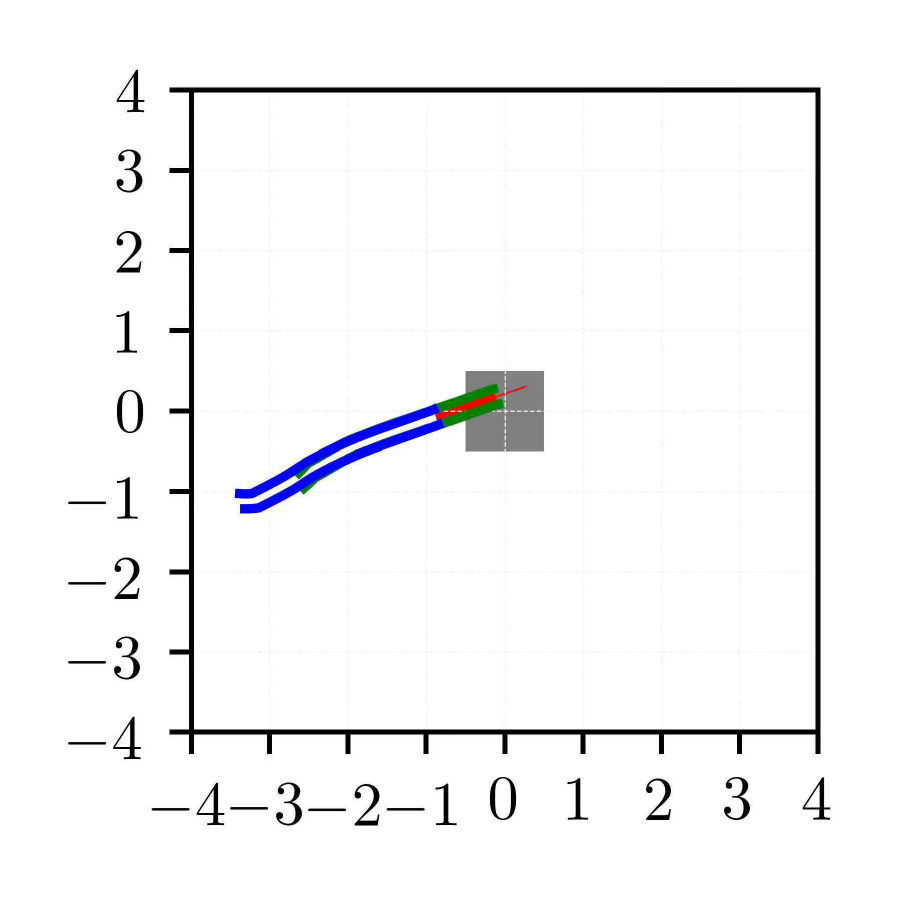}
\end{subfigure}
\caption[Trajectories of the understeered car for three initial conditions.]{Trajectories of the understeered car for three initial conditions. Panels show the car when it arrives in the absorbing region.\rev{The grey box is the target region, the red rectangle represents the final position of the car when it enters the absorbing region, and the parallel blue lines represent the trajectory of the rear wheels and the green lines indicate the trajectories of the front wheels of the car from each of the initial conditions.}}
\label{fig:undersim}
\end{center}
\end{figure*}
Figure~\ref{fig:underconverge} shows the convergence plots. Due to computational considerations, we fixed a maximum FT adaptation rank to 20. Therefore, instead of plotting the maximum rank, we plot the \textit{average} FT rank in the lower left panel. The average rank varies more widely for coarse discretizations, than for fine discretizations. But, when $n=100$, the average rank becomes smaller and more consistent. 

Let us note that, when $n = 100$, the number of discrete states is $100^4 = 100,000,000$, \ie 100 million. At this discretization level, the proposed FT-based algorithm evaluates less than 1\% of the states, leading to more than two orders of magnitude computational savings when compared to the standard dynamic programming algorithms. Even storing the optimal control as a standard lookup table in memory would require a large storage space, if the controller was computed using standard dynamic programming algorithms. 
At $n= 100$, the storage required is is around $0.8 \times 10^9$ bytes, or 800 MB, assuming we are storing floating point values. The proposed algorithm naturally stores the controller in a compressed format that leads to two orders of magnitude savings in storage as expected from the fraction of states evaluated. In fact, storing the final value function in the FT format required less than 1 MB of storage for this experiment. These savings can be tremendously beneficial in constrained computing environments. For example, these controllers can potentially be used on embedded systems that have serious memory constraints.

\rev{Finally, we note that the additional complexity of this problem over the Dubin's car is exhibited by larger average ranks. In particular, while the \textit{maximum} rank of the Dubin's car was 11, the \textit{average} rank of the understeered car is over 14.}
\begin{figure*}
\begin{center}
\begin{subfigure}[t]{0.43\textwidth}
  \includegraphics[clip=true,trim=0 0 0 0,width=\textwidth]{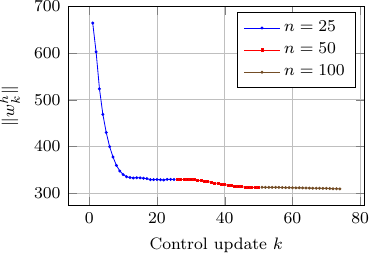}
\end{subfigure}\qquad
\begin{subfigure}[t]{0.43\textwidth}
  \includegraphics[clip=true,trim=0 0 0 0,width=\textwidth]{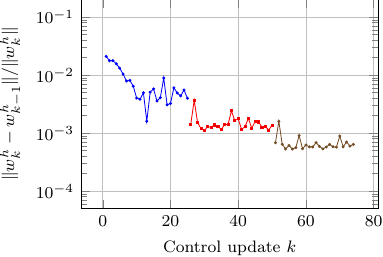}
\end{subfigure}
\begin{subfigure}[t]{0.43\textwidth}
  \includegraphics[clip=true,trim=0 0 0 0,width=\textwidth]{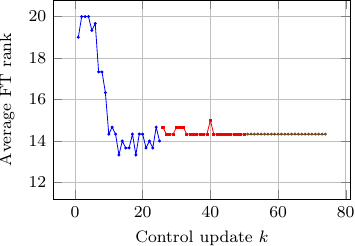}
\end{subfigure}\qquad
\begin{subfigure}[t]{0.43\textwidth}
  \includegraphics[clip=true,trim=0 0 0 0,width=\textwidth]{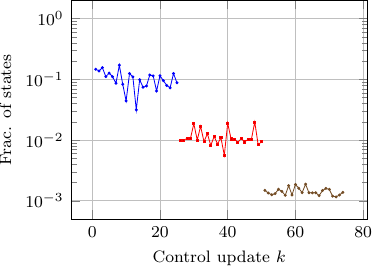}
\end{subfigure}
\caption[One-way multigrid for solving the understeered car control problem.]{One-way multigrid for solving the understeered car control problem. Maximum rank FT rank is restricted to 20.}
\label{fig:underconverge}
\end{center}
\end{figure*}

\rev{Solving the understeered car using 8 threads required 9 minutes for $n=25$, 19 minutes for $n=50$, and $35$ minutes for $n=100$.}

\subsection{Glider perching on a string}\label{sec:perch}
We now consider a problem involving a glider with a seven-dimensional state space. The mission is to control the glider to perch on a horizontal string. This problem has been widely studied in the literature~\cite{Cory2008,Roberts2009,Moore2012}. To the best of our knowledge, no optimal controller is known. We compute a controller using the FT-based dynamic programing algorithms. 

The glider is described by flat-plate model in the two-dimensional plane involving seven state variables, namely $(x,y,\theta,\phi,v_x,v_y, \dot{\theta})$, specifying its $x$-position, $y$-position, angle of attack, elevator angle, horizontal speed, vertical speed, and the rate of change of the angle of attack, respectively. The input control is the rate of change of the elevator angle $u = \dot{\phi}$. 

A successful perch is defined by a horizontal velocity between 0 and 2 m/s, a vertical velocity between -1 and -3 m/s, and the $x$ and $y$ positions of the glider within a \revv{5cm} radius of the perch. Under these conditions, \revv{Moore and Tedrake~\cite[Sec. III]{Moore2012} have indicated that the experimental aircraft can attach to the string}. For more information on this experimental platform, the reader is referred to to either Roberts et al.~\cite{Roberts2009} or Moore and Tedrake~\cite{Moore2012}. \rev{Here we use these specifications to help specify stage costs, boundary conditions, and terminal costs. In particular, we use them to place an absorbing region with zero cost to motivate the system to achieve these objectives.

While this example was previously solved using LQR Trees~\cite{Tedrake:2010jr}, we note that our approach here does not involve trajectory generation or linearization. Our formulation is one of an optimal stochastic control problem where we seek an optimal feedback control during an \textit{offline} procedure. As such our approach can also be used to determine controllability of a particular problem and to enable the robot to learn how to exploit its own dynamics.}

The dynamics of the glider are given by 
\begin{align*}
	\mathbf{x}_w &= [x-l_{w} c_{\theta}, y-l_ws_{\theta}], \\ 
	\dot{\mathbf{x}}_w &= [\dot{x} + l_w\dot{\theta}s_{\theta}, \dot{y}-l_w\dot{\theta}c_{\theta}] \\
	\mathbf{x}_{e}& = [x-lc_\theta-l_e,c_{\theta+\phi}, y- ls_\theta-l_es_{\theta+\phi}] \\
	\dot{\mathbf{x}}_{e} &= [\dot{x}+l\dot{\theta}s_{\theta} + l_e(\dot{\theta}+u)s_{\theta+\phi}, 
					     \dot{y}-l\dot{\theta}c_{\theta} - l_e(\dot{\theta}+u)c_{\theta+\phi}]\\
	\alpha_w &= \theta-\tan^{-1}(\dot{y}_{w},\dot{x}_w), \quad \alpha_e = \theta + \phi - \tan^{-1}(\dot{y}_{e},\dot{x}_e) \\
	f_{w} &= \rho S_{w}|\dot{\mathbf{x}}_w|^2\sin(\alpha_{w}), \quad f_{e} = \rho S_{e}|\dot{\mathbf{x}}_e|^2\sin(\alpha_{e}) 
\end{align*}
\begin{align*}
    dx &= v_xdt + 10^{-9}dw_1(t) \\
    dy &= v_ydt + 10^{-9}dw_2(t) \\
    d\theta &= \dot{\theta}dt + 10^{-9}dw_3(t) \\
    d\phi  &= udt + 10^{-9}dw_4(t) \\
    dv_x &= \frac{1}{m} \left(-f_w s_\theta - f_{e}s_{\theta+\phi}\right) dt + 10^{-9}dw_5(t) \\
    dv_y &= \frac{1}{m} \left( f_wc_\theta + f_ec_{\theta+\phi} -mg\right)dt + 10^{-9}dw_6(t)  \\
    d\dot{\theta} &= \frac{1}{I} \left( -f_wl_w - f_e(lc_\phi + l_e \right)dt + 10^{-9}dw_7(t) 
\end{align*}
where $\rho$ is the density of air, $m$ is the mass of the glider, $I$ is the moment of inertia of the glider, $S_w$ and $S_e$ are the surface areas of the wing and tail control surfaces, $l$ is the length from the center of gravity to the elevator, $l_w$ is the half chord of the wing, $l_e$ is the half chord of the elevator, $c_{\gamma}$ denotes $\cos(\gamma)$, and  $s_{\gamma}$ denotes $\sin(\gamma)$. The values of these parameters are chosen to be the same as those proposed by Roberts et al.~\cite{Roberts2009}. 

Absorbing boundary conditions are used, and an absorbing region is defined to encourage a successful perch. Let this region be defined for $x \in [-0.05,0.05], y \in [-0.05,0.05], v_x \in [-0.25,0.25],$ and $v_y \in [-2.25,2.25]$. 

The stage cost is specified as 
\begin{equation*}
\resizebox{0.95\hsize}{!}{$
\stagecost{}(x,y,\theta,\phi,v_x,v_y,\dot{\theta}) =20x^2 + 50y^2 + \phi^2 + 11v_x^2 + v_y^2 + \dot{\theta}^2.
$}
\end{equation*}

The terminal cost for both of these regions is
\begin{align*}
\termcost{}(x,y,\theta,\phi,v_x,v_y, \dot{\theta}) \qquad\qquad\qquad\qquad\qquad\qquad\qquad\\
= \left\{
\begin{array}{cl}
0 &  \textrm{ if perched} \\[0.5em]
600x^2 + 400y^2 + \frac{1}{9}\theta^2 + \frac{1}{9}\phi^2 + v_x^2 \\+ \left(v_y+1.5\right)^2 + \frac{1}{9}\left(\dot{\theta}+0.5^2\right) & \textrm{ otherwise. }
\end{array}
\right.
\end{align*}

A controller is computed using the FT-based one-way multi-grid algorithm.
Several trajectories of the controlled system under this controller are shown in Figure~\ref{fig:perchsim}. These trajectories are generated by starting the system from different initial states. Notice that they all follow the same pattern: first dive, then climb, and finally drop into the perch. This behavior is similar to that demonstrated in experiments by Cory and Tedrake~\cite{Cory2008}, Roberts et al.~\cite{Roberts2009}, and Moore and Tedrake~\cite{Moore2012}. 
\begin{figure*}
\begin{center}
\includegraphics[clip=true,trim=0 0 0 0, width=0.9\textwidth]{{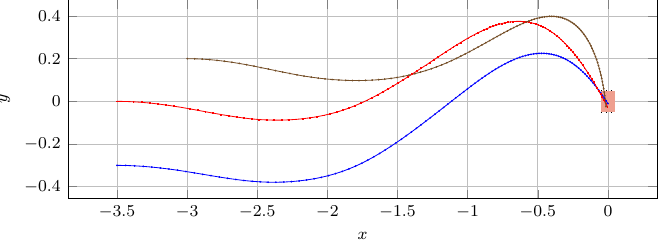}}
\caption[Trajectories of the perching glider for three initial conditions.]{\rev{Trajectories of the perching glider for three initial conditions given by: $(-3.5,-0.3,0,0,6.2,0,0)$ (blue), $(-3.5,0,0,0,5.8,0,0)$ (red), and $(-3,0.2,0,0,5.3,0,0)$ (brown). Target region is shown by the shaded rectangle centered at $(0,0)$.}}
\label{fig:perchsim}
\end{center}
\end{figure*}

Figure~\ref{fig:perchconverge} shows the convergence diagnostic plots for the one-way multigrid algorithm used for solving this problem. We used discretization levels of $n=20,40,80,160$ points along each dimension. Thus, the number of discretized states at the finest discretization is $160^7 \approx 2.6 \times 10^{15}$, \ie 2.6 quadrillion or 2,600,000 billion. Furthermore, we limited the rank to a maximum of \rev{15}, and the panel in lower left panel shows that the average ranks \rev{reach this maximum threshold.} In terms of the number of states evaluated, the final ranks of the value function translate into savings of approximately four orders of magnitude \textit{per iteration} when $n=20$, and a corresponding savings of ten orders of magnitude \textit{per iteration} when $n=160$. However, in this example it seems that the rank truncation is indeed affecting the convergence of the problem. A noisy and volatile decay of the value function norm is seen in the upper left panel. Furthermore, the difference between iterates, in the upper right panel, indicates a relative error of approximation $10^{-2}$ which is above our FT rounding threshold of $10^{-5}.$ Nonetheless, the resulting controller seems to be successful at achieving the desired behavior. The storage size for the controller is stored in 940 kB, or approximately 1MB. This means that for an $n=80$ controller, the resulting storage cost would require $\mathcal{O}(10^8)$ MB or $\mathcal{O}(100)$ TB. \rev{The solution times were 21 minutes for $n=20$, 110 minutes for $n=40$, 296 minutes for $n=80$ and 364 minutes for $n=160$.}

\begin{figure*}
\begin{center}
\begin{subfigure}[t]{0.43\textwidth}
  \includegraphics[clip=true,trim=0 0 0 0,width=\textwidth]{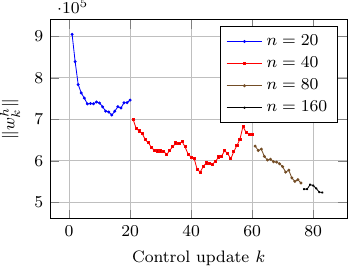}
\end{subfigure}\qquad
\begin{subfigure}[t]{0.43\textwidth}
  \includegraphics[clip=true,trim=0 0 0 0,width=\textwidth]{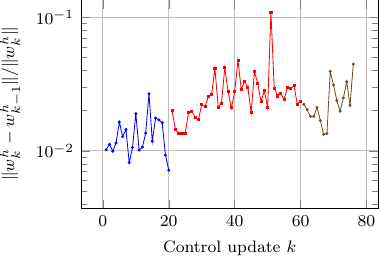}
\end{subfigure}
\begin{subfigure}[t]{0.43\textwidth}
  \includegraphics[clip=true,trim=0 0 0 0,width=\textwidth]{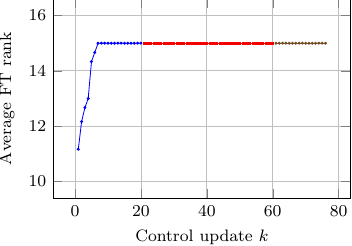}
\end{subfigure} \qquad
\begin{subfigure}[t]{0.43\textwidth}
  \includegraphics[clip=true,trim=0 0 0 0,width=\textwidth]{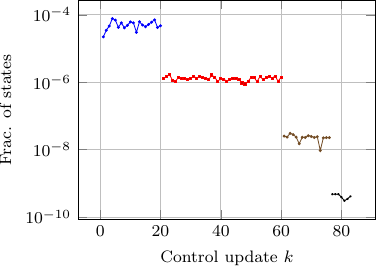}
\end{subfigure}
\caption[One-way multigrid for solving the perching glider control problem.]{One-way multigrid for solving perching glider. Maximum rank FT rank is restricted to 15.}
\label{fig:perchconverge}
\end{center}
\end{figure*}

\subsection{Quadcopter flying through a window}\label{sec:quad}
Finally, we consider the problem of maneuvering a quadcopter through a small target region, such as a window. The resulting dynamical system modeling the quadcopter has six states and three controls. The states $(x,y,z,v_x,v_y,v_z)$ are the $x$-position in meters $x \in [-3.5,3.5]$, $y$-position in meters $y \in [-3.5,3.5]$, $z$-position in meters $z \in [-2,2]$, $x$-velocity in meters per second $v_x \in [-5,5]$, $y$-velocity in meters per second $v_y \in [-5,5]$, and $z$-velocity in meters per second $v_z \in [-5,5]$. The controls are the thrust (offset by gravity) $u_1 \in [-1.5,1.5],$ the roll angle $u_2 \in [-0.4,0.4]$, and the pitch angle $u_3 \in [-0.4,0.4].$ Then, the dynamical system is described by the following stochastic differential equation:
\begin{align*}
dx &= v_xdt + 10^{-1}dw_1(t) \\
dy &= v_ydt + 10^{-1}dw_2(t) \\
dz &= v_zdt + 10^{-1}dw_3(t) \\
dv_x &= \frac{u_1-mg}{m}\cos(u_2)\sin(u_3)dt + 1.2 dw_4(t)\\
dv_y &= -\frac{u_1-mg}{m}\sin(u_2)dt + 1.2dw_5(t))\\
dv_z &= \left( \cos(u_2)\cos(u_3)\frac{u_1-mg}{m} + g \right)dt + 1.2dw_6(t),
\end{align*}
and reflecting boundary conditions are used for every state. A similar model was used by Carrillo et al.~\cite{Carrillo2012}. 

A target region is specified as a cube centered at the origin, and a successful maneuver is one which enters the cube with a forward velocity of one meter per second with less than $0.15$m/s speed in the $y$ and $z$ directions. 

The terminal cost is assigned to be zero for this region, \ie 
\begin{equation*}
\termcost{}(x,y,z,v_x,v_y,v_z) = 0, 
\end{equation*}
for
\[(x,y,z,v_x,v_y,v_z) \in [-0.2,0.2]^3 \times [0.5, 1.5] \times [-0.2,0.2]^2.\]
The stage cost is set to 
\begin{align*}
\stagecost{}(x,y,z,v_x,&v_y,v_z,u_1,u_2,u_3) = \\
                       &60 + 8x^2 + 6y^2 + 8z^2 + 2u_1^2 + u_2^2 + 6u_3^2.
\end{align*}
The maximum FT-rank is restricted to 10. The tolerances were set as $\crossdelta=\roundeps=10^{-5}$.
While, theoretically, underestimating the ranks can potentially cause significant approximation errors, in this case we are still able to achieve a well performing controller. Investigating the effect of rank underestimation is an important area of future work. 

Trajectories of the optimal controller for various initial conditions are shown in Figures~\ref{fig:quadsim} and~\ref{fig:quadtraj}. In Figure~\ref{fig:quadsim}, the position and velocities each approach their respective absorbing conditions for each simulation. In the first simulation, the quadcopter quickly accelerates and decelerates into the goal region. The final positions do not lie exactly within the absorption region. This absorption region is, in a way, unstable since it requires the quadcopter enter with a forward velocity. The forward velocity requirement virtually guarantees that the quadcopter must eventually exit the absorption region. The second simulation starts with positive $y$ and $z$ velocities. The third simulation starts with a negative $y$ velocity and the quadcopter far away from the origin.

Note that the controls are all fairly smooth except when the quadcopter state is near or in the absorption region. At this point, the roll angle (and pitch angle in the first and third simulations) oscillates rapidly around zero. We conjecture this behavior is due to ``flatness'' of the value function. Since the quadcopter is so close to the absorption region the control inputs can change rapidly to ensure it stays there. 
\begin{figure*}
\begin{center}
\begin{subfigure}[t]{0.58\textwidth}
  \includegraphics[clip=true,trim=0 0 0 0,width=\textwidth]{{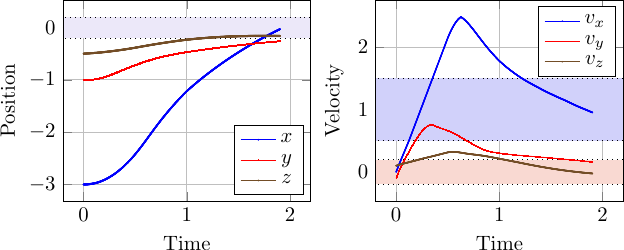}}
\end{subfigure}
\begin{subfigure}[t]{0.3\textwidth}
  \includegraphics[clip=true,trim=0 0 0 0,width=\textwidth]{{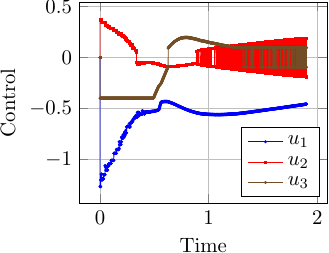}}
\end{subfigure}
\begin{subfigure}[t]{0.58\textwidth}
  \includegraphics[clip=true,trim=0 0 0 0,width=\textwidth]{{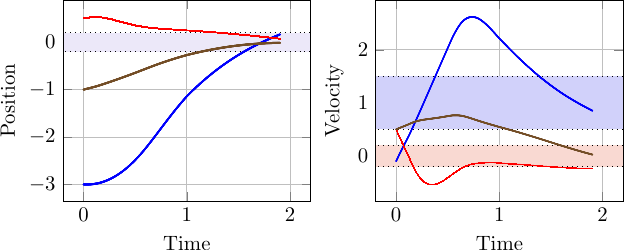}}
\end{subfigure}
\begin{subfigure}[t]{0.3\textwidth}
  \includegraphics[clip=true,trim=0 0 0 0,width=\textwidth]{{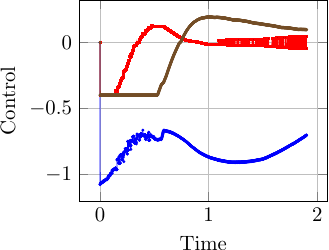}}
\end{subfigure}
\begin{subfigure}[t]{0.58\textwidth}
  \includegraphics[clip=true,trim=0 0 0 0,width=\textwidth]{{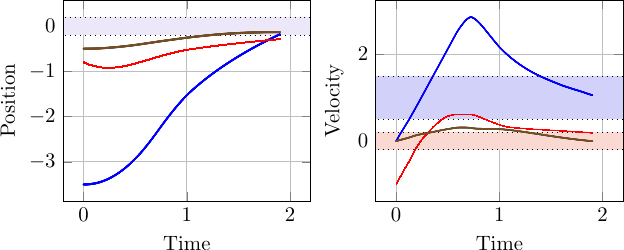}}
\end{subfigure}
\begin{subfigure}[t]{0.3\textwidth}
  \includegraphics[clip=true,trim=0 0 0 0,width=\textwidth]{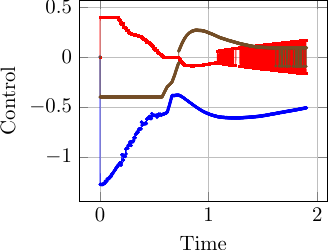}
\end{subfigure}
\caption[Trajectories of the quadcopter for three initial conditions.]{Three \rev{simulated} quadcopter trajectories using optimal low-rank feedback controller. Shaded region indicates the target positions and velocities.}
\label{fig:quadsim}
\end{center}
\end{figure*}

\begin{figure*}
\begin{center}
 \includegraphics[clip=true,trim=0 0 0 0]{{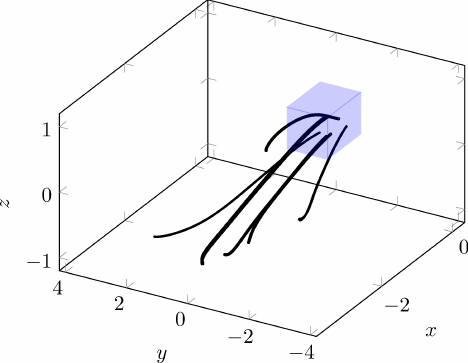}}
\caption{Three-dimensional \rev{simulated} trajectories showing the position of the quadcopter for various initial conditions.}
\label{fig:quadtraj}
\end{center}
\end{figure*}

Figure~\ref{fig:quadconverge} shows the convergence diagnostics. We used a one-way multigrid strategy with $n=20$, $n=60$, and $n=120$ discretization nodes. 
The difference between iterates, shown in the upper right panel, is between $1$ and $0.1$. 
In fact, this means that the relative difference is approximately $10^{-4}$, which matches well with the algorithm tolerances $\epsilon=(\crossdelta,\roundeps)$.

The lower right panel indicates computational savings between three and seven orders of magnitude for each iteration, in terms of the number of states evaluated. In terms of storage space, if we had used the standard value iteration algorithms, storing the value function as a complete lookup table for $n=120$ would require storing $10^6 = 2 \times 10^{12}$ floating point numbers, which is approximately 24 TB of data. The value function computed using the proposed algorithms required only 778 KB of storage space. \rev{The computational time was 40 minutes for $n=20$, 500 minutes for $n=60$, and 120 minutes for $n=120$.}
\begin{figure*}
\begin{center}
\begin{subfigure}[t]{0.43\textwidth}
  \includegraphics[clip=true,trim=0 0 0 0,width=\textwidth]{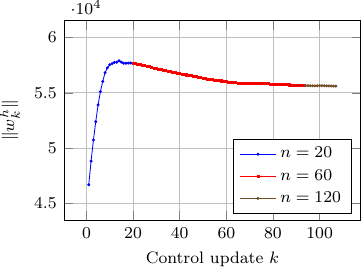}
\end{subfigure}
\begin{subfigure}[t]{0.43\textwidth}
  \includegraphics[clip=true,trim=0 0 0 0,width=\textwidth]{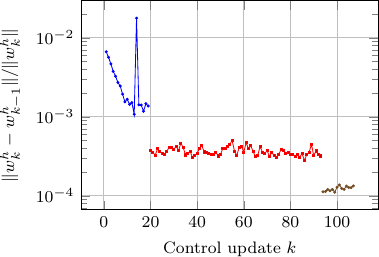}
\end{subfigure}
\begin{subfigure}[t]{0.43\textwidth}
  \includegraphics[clip=true,trim=0 0 0 0,width=\textwidth]{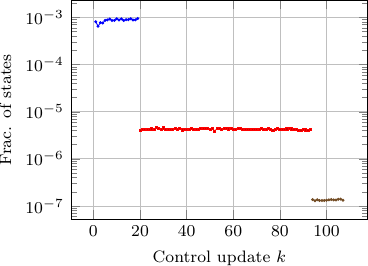}
\end{subfigure}
\begin{subfigure}[t]{0.43\textwidth}
  \includegraphics[clip=true,trim=0 0 0 0,width=\textwidth]{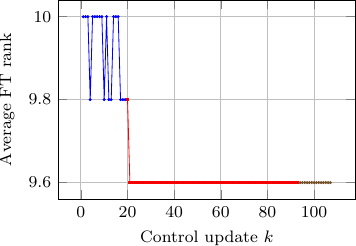}
\end{subfigure}
\caption[One-way multigrid for solving the quadcopter control problem.]{One-way multigrid for solving the quadcopter control problem. Maximum rank FT rank is restricted to 10.}
\label{fig:quadconverge}
\end{center}
\end{figure*}

\subsection{Experiments with a quadcopter in a motion capture room}\label{sec:experiments}
In this section, we report the results of experiments involving a quadcopter. We utilize a motion capture system for state estimation. We compute the controller offline, as described in the previous section, but we run the resulting controller on a computer on board the quadcopter in \emph{real time}. \rev{Our goal with this demonstration is to show that the proposed algorithms are practical, can be implemented onboard a system, and can be made to work in real time. As it stands, the evaluation of a cost function requires interpolation of univariate functions and multiplication of sets of matrices, and \textit{a priori} the feasibilty of this approach as a lookup table is not clear. In this section, we demonstrate that indeed we are able to achieve a performance level that allows for real time operation. 

Other approaches, e.g., using trajectory optimization with minimum snap control ~\cite{Mellinger2011,Richter13}, may be possible for this particular example. However, trajectory optimization approaches are fundamentally different from the offline-based optimal feedback control that we are presenting here. In particular, we do not first generate trajectories through 3D space that pass from some starting point to an end point, and then attempt to follow these trajectories. Instead, our optimization framework attempts to force the robot to discover what it can and cannot do using its dynamics and the specified cost. One advantage of this approach is that it is applicable to cases where trajectory generation is non-trivial (e.g., through complex configuration spaces). Furthermore, since it uses the dynamics of the system to discover behavior, it provides more information about controllability and better optimality properties. Future work can potentially attempt to couple these approaches by finding feasible trajectories using our optimal control formulation, and then attempting to follow them using minimum snap control.}

The experimental setup, the real-time control execution, and the experimental results are detailed below.

\subsubsection{Quadcopter hardware}
We use a custom-built quadcopter vehicle shown in Figure~\ref{fig:drone}. The vehicle is equipped with an embedded computer, namely an Nvidia Tegra K1. An Arduino Nano computer controls the propeller motors, and it is connected to the embedded computer via UART. The drone also includes a camera and an inertial measurement unit (IMU). THe IMU was used to estimate the drone's angular rates.

\begin{figure}
\centerline{\includegraphics[width=0.4\textwidth]{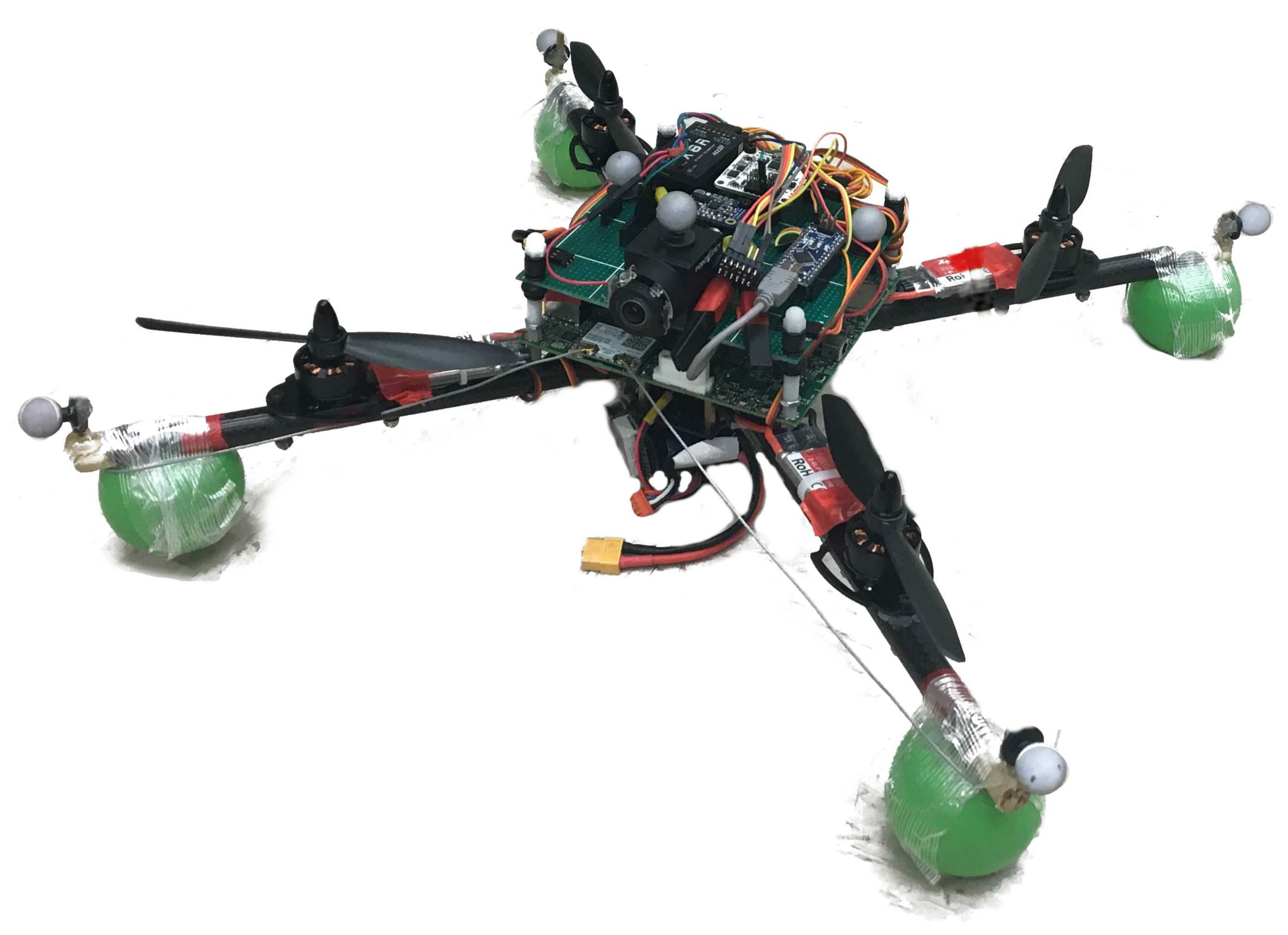}}
\caption{The quadcopter utilized in the experiments.}
\label{fig:drone}
\end{figure}

\subsubsection{Motion capture system}
We utilize an OptiTrack Motion Capture system~\footnote{http://www.optitrack.com/} with six Flex 13 cameras. We place the cameras such that the pose (position and orientation) estimates are reliably obtained within a roughly 2-meter wide, 5-meter long and 2-meter high volume. The system provides pose estimates in 360Hz. We place passive infrared markers on the quadcopter to track its pose when it is in this volume. The pose information is captured by the motion capture computer and sent to the drone via a wifi link. 

\subsubsection{Real-time execution of the the controller}
Once the vehicle gets the pose information via the wifi link, the control inputs, namely the four motor speeds, are computed using an Nvidia Tegra K1 computer that is on board the vehicle. 

We utilize a cascade controller architecture. First, high-level controller determines the desired pitch angle, roll angle, and thrust from the vehicle pose. Then, a low-level controller commands the motor speeds to follow the desired pitch angle, roll angle, and thrust. The high-level controller is designed using the proposed algorithm as in Section~\ref{sec:quad}. The low-level controller is a PD controller, described in~\cite{Riether2016}. 

The high-level controller execution follows exactly the same steps as for the simulated system and is described in the introduction to Section~\ref{sec:numexamples}. It consists of two steps: {\em (i)} computing the value corresponding to the neighboring states of the current state of the quadcopter, obtained from the motion capture system; {\em (ii)} obtaining the minimizer of Equation~\eqref{eq:dcih} through a multistart  Newton-based BFGS optimization scheme within the $C^3$ library. Note that the value function is computed offline and stored in the FT format, identically to the simulated results, and thus its evaluation at a particular state requires the multiplication of six matrices. \rev{The FT-based control is called at a rate of 100Hz.}

\subsubsection{Experimental results}
Figure~\ref{fig:quadwind} shows an image of the quadcopter at various times through its flight. Note that between the last two images, the quadcopter passes through the window. Figure~\ref{fig:mocap} shows motion capture data for the quadcopter entering the goal region; the velocities are shown on the left panel, and the path is shown on the right. 
As seen from the figures, the vehicle is able to fly the through the window with the intended velocities.

\begin{figure*}
\centering
\includegraphics[clip=true,trim=0 0 0 0,width=\textwidth]{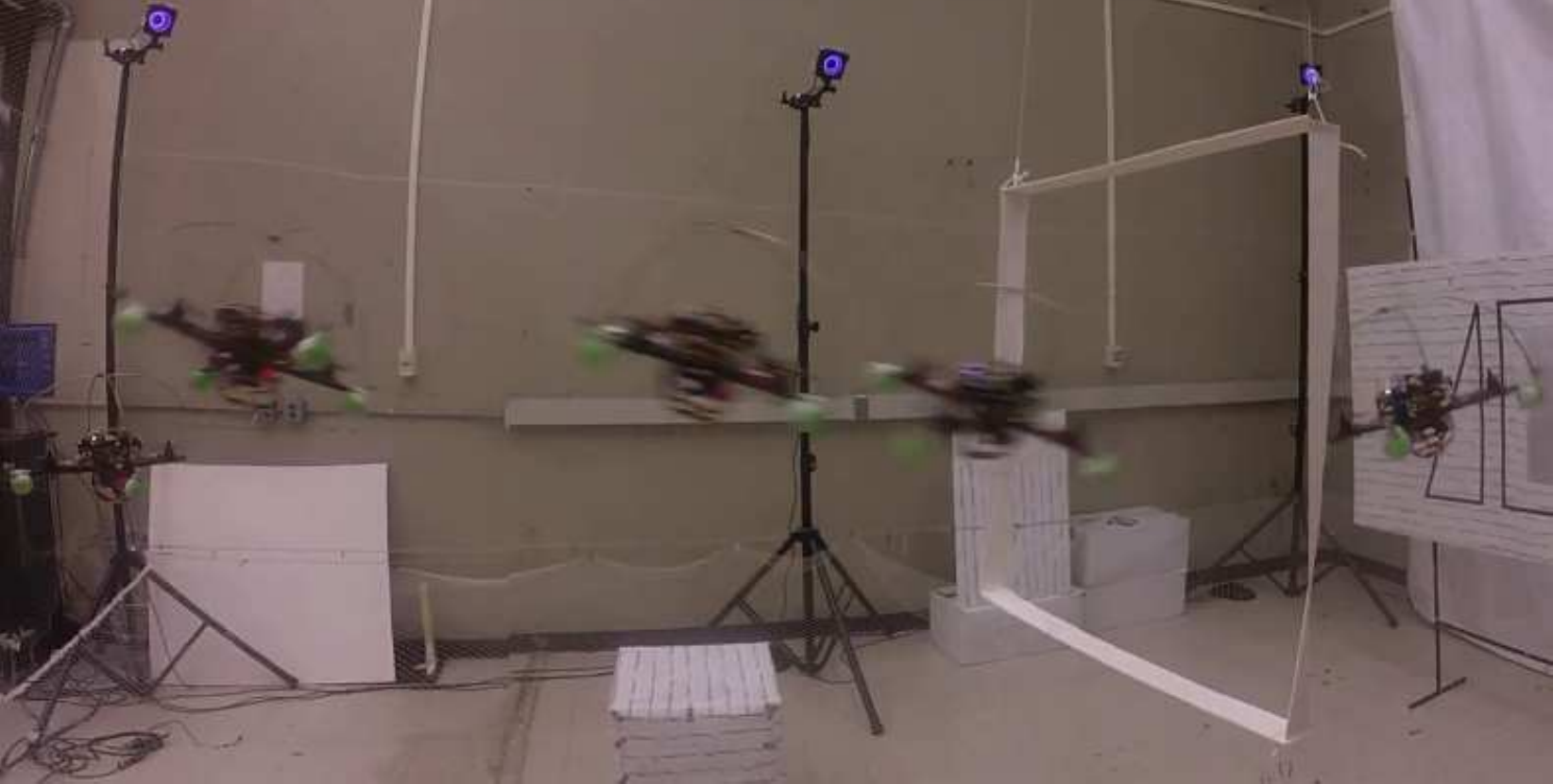}
\caption{Time lapse image showing the quadcopter passing through the window.}
\label{fig:quadwind}
\end{figure*}

\begin{figure*}
\centering
\begin{subfigure}[t]{0.45\textwidth}
  \includegraphics[width=\textwidth]{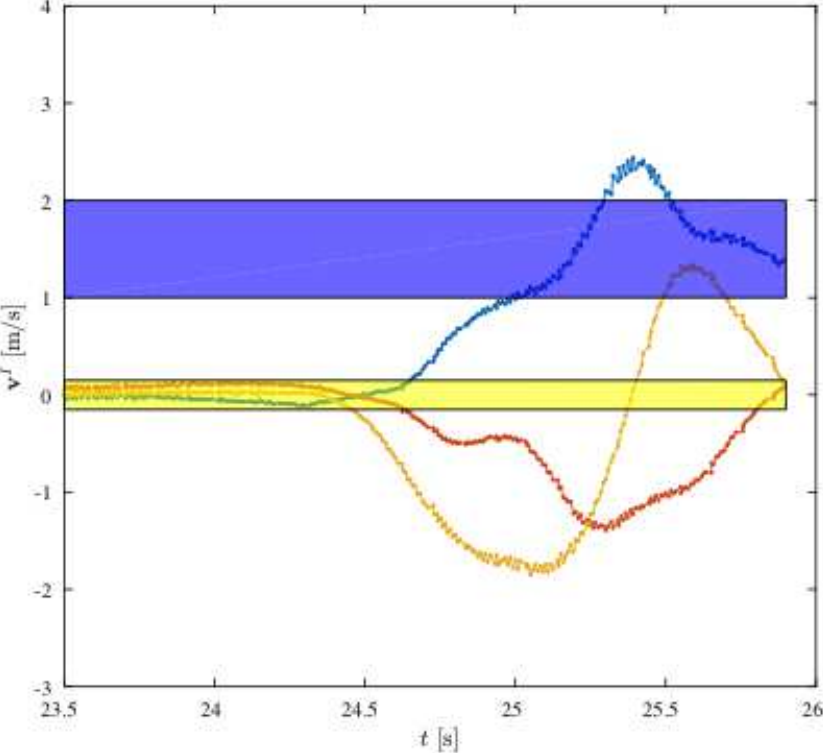}
  \caption{velocities. blue: $v_x^I$, red: $v_y^I$, yellow: $v_z^I$, with corresponding velocity-target regions.}
\end{subfigure}
\begin{subfigure}[t]{0.45\textwidth}
  \includegraphics[width=\textwidth]{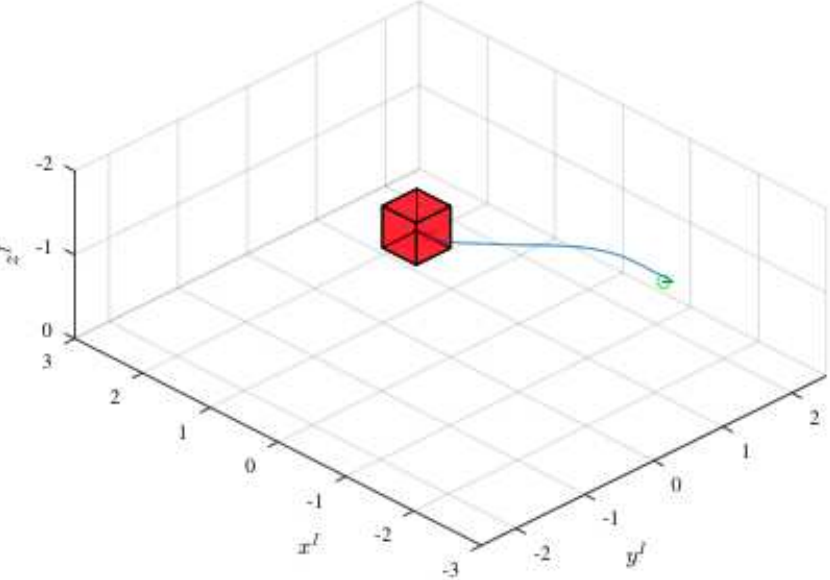}
  \caption{3D trajectory.}
\end{subfigure}
\caption{Motion capture results indicating quadcopter enters into target region~\cite{Riether2016}.}
\label{fig:mocap}
\end{figure*}

\section{Discussion and conclusion}\label{sec:conclusion}
In this paper, we proposed novel algorithms for dynamic programming, based on the compressed continuous computation framework using the function train (FT) decomposition algorithm.
Specifically, we considered continuous-time continuous-space stochastic optimal control problems. We proposed FT-based value iteration, FT-based policy iteration, and an FT-based one-way multigrid algorithms. All algorithms represent the value function in the FT format, and they apply multilinear algebra operations in the  (compressed) FT format. We analyzed the algorithms in terms of convergence and computational complexity. We have shown conditions under which the algorithms guarantee convergence to an optimal control. We have also shown that the algorithms scale polynomially with the dimensionality of the state space of the underlying stochastic optimal control problem and polynomially with the rank of the optimal value function. Furthermore, we demonstrated the proposed control synthesis algorithms on various problem instances. In particular, we have shown that the proposed algorithm can solve problem instances involving nonlinear nonholonomic dynamics with up to a seven-dimensional state space. The computational savings, evaluated in terms of computation time and storage space, can reach ten orders of magnitude, compared to the standard value iteration algorithm. Finally, we demonstrated the controller computed by the algorithms on a quadcopter flying quickly through a narrow window, using a motion capture system for state estimation. 

\rev{

Next, we comment on several directions for future work that are driven by our approach's relationship to reinforcement learning and approximate dynamic programming.

\subsection{Partial observability and learning}

Our contributions involved developing algorithms for solving dynamic programming problems for which the transition probabilities between states are known. It is based on separate \textit{offline} and \textit{online} procedures. The offline procedure is tasked with computing feedback controls for systems with known, but stochastic models, and the \textit{online} procedure recalls the optimal control from a state estimate during system operation. 

As a result, our approach does not incorporate learning and is therefore different from reinforcement learning strategies (see e.g., \cite{Mnih2013}). In reinforcement learning, a model is not known \textit{a priori}, but rather is learned during online training. A comparison between the methodology of reinforcement learning and stochastic optimal control is outside the scope of this work; however, we note that solutions to complex robotic systems will invariably require combining these approaches by updating solutions to known models with new information obtained during online operation. Indeed, we consider this direction an important topic for future work.

Furthermore, our work also assumes that the state is known. While that may be sufficient for fully observed systems with accurate sensors, more complex robotic systems will only be able to partially observe their states. To cope with both the learning problem and partial state estimation, future work will extend the proposed algorithms to include continuous-time partially observable Markov decision processes (POMDP)~\cite{Cassandra1998,Porta2006,Kurniawati2008}.

\subsection{Relation to approximate dynamic programming}\label{sec:relate_adp}

Our approach can be viewed as \textit{computationally enabling} value function approximation for high-dimensional problems within the context of approximate dynamic programming (ADP)~\cite{Powell2011,Bertsekas:2011tq}.

In particular, value function approximation itself typically suffers from the curse of dimensionality due to issues surrounding parameterization. Consider that in ADP with a continuous state space, 
 a general value function $\valbase: \reals^d \to \reals$ is often represented in a basis $\left(\phi_i\right)_{i=1}^N$ according to
\begin{equation*}
  \valbase(x) = \sum_{i=1}^N a_i \phi_i(x), \quad x \in \reals^d,
\end{equation*}
Thus, the infinite dimensional problem of optimizing over \textit{functions} is converted into a finite dimensional problem of optimizing over the \textit{coefficients} $(a_i)_{i=1}^n.$ Typically, the multidimensional basis functions $\phi_i:\reals^d \to \reals$ are themselves specified with a tensor product basis, i.e., $\phi_i = \phi_{i_1}(x_1)\phi_{i_2}(x_2) \cdots \phi_{i_d}(x_d)$, where each dimension is parameterized into $n_k$ variables with $1 \leq i_k \leq n_k$. As a result the number of basis functions scales exponentially with dimension $N = n^d$, and the value function representation becomes
\begin{align*}
  \valbase(x_1,&x_2,\ldots,x_d) =\\
& \sum_{i_1=1}^{n_1}\sum_{i_d=1}^{n_d}\sum_{i_d=1}^{n_d} a_{i_1i_2\ldots i_d} \phi_{i_1}(x_1)\phi_{i_2}(x_2)\cdots\phi_{i_d}(x_d),
\end{align*}
where $N^d$ coefficients $a_{i_1i_2\ldots i_d}$ must be stored. For this \textit{linear-approximation} setup, our algorithms essentially view the coefficients as a $n_1 \times n_2 \times \cdots \times n_d$ tensor. Instead of solving for each element of the tensor, our results suggest that assuming it is low-rank will allow us to reduce the dimensionality to a value that scales linearly with dimension and polynomially with rank. Moreover, our algorithms then solve the dynamic programming problem in the reduced space.

Nonlinear value function representations have also been used in the literature. For instance, in the context of control-affine systems, neural networks have been used to represent the value function~\cite{Liu2014,Wei2014}. Low-rank compression can be utilized for these nonlinear forms as well; see for example~\cite{Novikov2015}. However, convergence guarantees are difficult to provide for neural-network based approximation. The continuous tensor, or FT format, can also be used for accurate nonlinear approximation. For example, each fiber required by cross-approximation can be adaptively approximated by kernels or adaptive piecewise basis functions~\cite{Gorodetsky2015a}.

An important implication of this link is that for simplified problems, e.g., those with unbounded states and controls, those affine in control, etc., different parameterizations may improve computational feasibility. Indeed, in these setups, algorithms other than the Markov chain approximation method can lead to dynamic programming problems that are amenable to parameterizations with different basis functions. We envision our continuous tensor format to be applicable in these situations as well.

}

\subsection{Future directions}
We will also study the convergence properties of the proposed algorithms in more detail. In particular, we will investigate easily verifiable conditions for which solution ranks can be bounded and therefore algorithm convergence can be proven. If successful, these problems can be shown to be solvable in polynomial time. Finally, we will consider other application domains, such as distributed control systems. We conjecture that the power of the proposed algorithms is greatest when the underlying dynamical system is loosely coupled. We will investigate this conjecture in computational experiments. 

\section*{Acknowledgements}
We thank Ezra Tal for pointing out the proof for Lemma~\ref{th:afp}. We thank Fabian Riether for help in conducting the experiments on an autonomous quadcopter that he has developed in his masters thesis~\cite{Riether2016}. 

This work was supported by the National Science Foundation through grant IIS-1452019, and by the US Department of Energy, Office of Advanced Scientific Computing Research under award number DE-SC0007099.

\bibliography{jab}

\end{document}